\newcommand{\tikzxmark}{%
\tikz[scale=0.23] {
    \draw[line width=0.7,line cap=round] (0,0) to [bend left=6] (1,1);
    \draw[line width=0.7,line cap=round] (0.2,0.95) to [bend right=3] (0.8,0.05);
}}
\newtheorem*{proposition*}{Proposition}
\newtheorem*{theorem*}{Theorem}
\newtheorem{assumption}{Assumption}
\newtheorem{definition}{Definition}
\newtheorem{lemma}{Lemma}
\newtheorem{theorem}{Theorem}
\newtheorem{corollary}{Corollary}
\newcommand{\PP}{\mathbb{P}}
\newcommand{\EE}{\mathbb{E}}
\newcommand{\II}{\mathbb{I}}
\newcommand{\RR}{\mathbb{R}}
\newcommand{\As}{\mathcal{A}}
\newcommand{\Fs}{\mathcal{F}}
\newcommand{\Cs}{\mathcal{C}}
\newcommand{\Ss}{\mathcal{S}}
\newcommand{\Ps}{\mathcal{P}}
\newcommand{\Os}{\mathcal{O}}
\newcommand{\Es}{\mathcal{E}}
\newcommand{\bbone}{\mathds{1}}
\newcommand{\hGamma}{\hat{\Gamma}}
\DeclareMathOperator*{\argmax}{argmax}
\newcommand{\azedit}[1]{{\color{black}#1}}
\newcommand{\softmax}[2]{\text{softmax}_{#1}(#2)}
\newcommand{\net}{\text{net}}
\title{Oracle-free Reinforcement Learning in Mean-Field Games along a Single Sample Path
%%%% Cite as
%%%% Update your official citation here when published 
\thanks{\noindent Research of the first and fourth authors was supported in part by the Air Force Office of Scientific Research (AFOSR) Grant FA9550-19-1-0353.}
}
\author{%
  Muhammad~Aneeq~uz~Zaman \\%\thanks{Use footnote for providing further information
    % about author (webpage, alternative address)---\emph{not} for acknowledging
    % funding agencies.} \\
  Coordinated Science Laboratory\\
  University of Illinois at Urbana-Champaign\\
  Urbana IL 61801-2307 \\
  \texttt{mazaman2@illinois.edu} \\
  \And
  Alec~Koppel \\
  J.P. Morgan AI Research  \\
  383 Madison Ave\\
  New York, NY 10017\\
  \texttt{alec.koppel@jpmchase.com} \\
  \And
  Sujay~Bhatt\\
  J.P. Morgan AI Research  \\
  383 Madison Ave\\
  New York, NY 10017\\
  \texttt{sujay.bhatt@jpmchase.com} \\
  \And
  Tamer Ba{\c s}ar \\
  Coordinated Science Laboratory\\
  University of Illinois at Urbana-Champaign\\
  Urbana IL 61801-2307 \\
  \texttt{basar1@illinois.edu} \\
}

\begin{document}

\maketitle

% \twocolumn[

% \aistatstitle{Oracle-free Reinforcement Learning in Mean-Field Games along a Single Sample Path}

% \aistatsauthor{ Muhammad~Aneeq~uz \\ \textbf{Zaman} \And Alec~Koppel \And  Sujay~Bhatt \And Tamer Ba{\c s}ar}

% \aistatsaddress{ Research Assistant \\ CSL, UIUC  \And  Research Scientist \\ J.P. Morgan, USA \And Research Scientist \\ J.P. Morgan, USA  \And Research Professor \\ CSL, UIUC}

%  ]

\begin{abstract}
We consider online reinforcement learning in Mean-Field Games (MFGs). Unlike traditional approaches, we alleviate the need for a mean-field oracle by developing an algorithm that approximates the Mean-Field Equilibrium (MFE) using the single sample path of the generic agent. We call this {\it Sandbox Learning}, as it can be used as a warm-start for any agent learning in a multi-agent non-cooperative setting. We adopt a two time-scale approach in which an online fixed-point recursion for the mean-field operates on a slower time-scale, in tandem with a control policy update on a faster time-scale for the generic agent. Given that the underlying Markov Decision Process (MDP) of the agent is communicating, we provide finite sample convergence guarantees in terms of convergence of the mean-field and control policy to the mean-field equilibrium. The sample complexity of the Sandbox learning algorithm is $\Os(\epsilon^{-4})$ where $\epsilon$ is the MFE approximation error. This is similar to works which assume access to oracle. Finally, we empirically demonstrate the effectiveness of the sandbox learning algorithm in diverse scenarios, including those where the MDP does not necessarily have a single communicating class.
\end{abstract}

\section{INTRODUCTION}

Mean-Field Game (MFG) framework, concurrently introduced by \cite{huang2006large,huang2007large} and \cite{lasry2006jeux,lasry2007mean}, addresses some of the challenges faced by the widely applicable Multi-Agent Reinforcement Learning (MARL) framework~\cite{shoham2007if,ghasemi2020multi,zhang2021multi,mao2022improving}. In particular, MFG framework captures the limiting case where the number of agents $N \rightarrow \infty$ and this deals with the non-stationarity of the environment caused by agents best responding to each other - referred to as the ``curse of many agents" \cite{sonu2017decision}. In the infinite population setting, the effect of individual deviation becomes negligible causing any strategic interaction among the agents to disappear. As a result, it becomes sufficient to consider without loss of generality the interaction between a generic agent and the aggregate behavior of other agents (the mean-field). The solution concept used in MFGs (analog of Nash equilibrium) is called the Mean-Field Equilibrium (MFE). The MFE prescribes a set of control policies which are known to be $\epsilon$-Nash for a large class of $N$-agent games \cite{saldi2018markov}, such that $\epsilon \rightarrow 0$ as $N \rightarrow \infty$. Hence finding the MFE presents a viable method to solving large population games. In this work we propose an RL algorithm to approximate the (stationary) MFE \cite{guo2019learning,xie2021learning} without assuming access to a mean-field oracle (henceforth referred to as oracle). 
%%%%%%%%%%%%%%%%%%%%%%%%%%%%%%%%%%%%%%%%%%%%%%

%, which is ubiquitous in literature.}
% \sbedit{We can make a section called Additional Literature in the supplementary and include MFG citations. The main text should only have RL in MFG. }\azedit{[We currently have 4 MFG citations which do not deal with RL and ~9 citations which deal with RL for MFGs. Which ones should we move?]}
% \textcolor{blue}{We need to spotlight this glaring gap in the literature regarding requiring access to mean field oracle much earlier in the intro, for instance, at end of the first paragraph. Then the related works section can be explained in service of illuminating the particulars of this gap, and how our work fills it.}\azedit{[Since paragraph 1 deals with RL+MARL and paragraph 2 introduces MFGs I added a sentence at the end of paragraph 2.]}
Most literature in RL for MFGs assumes access to such an oracle, which is capable of simulating the aggregate behavior of a large number of agents under a given control policy. But this assumption may be prohibitive and the generic agent may not have access to such an oracle, but only knows its own state, action and reward sequence. Hence the question arises:
\begin{center}
\noindent \emph{Can the generic agent provably learn the stationary MFE without access to a mean-field oracle?}%, but only knowing its own state, action and reward sequence?} 
\end{center}
We answer this question in the affirmative by proposing an RL algorithm which computes the MFE without access to an oracle, but instead using the single sample path of the agent (without re-initializations) to approximate the aggregate behavior of large number of agents. We also provide high confidence finite sample bounds for approximation of the MFE to an arbitrary degree. We term this learning approach \emph{Sandbox Learning}, since it allows an agent to approximate equilibrium policies in a multi-agent non-cooperative environment, without interacting with other agents or an oracle. As a result, sandbox learning can be used to provide a \emph{warm-start} to agents before entering an $N$-agent non-cooperative learning environment. 
% {\sbedit{Parse the different words in the title here.. or before this paragraph.. what do we mean by oracle-free.. what do we mean by single sample path.. why these are an improvement over the existing results..}}
\subsection{Main Results}
% \subsection{Main Contributions}
Our core technical insight is that, instead of assuming  access to the oracle, the problem may be cast as a stochastic fixed point problem using the generic agent's single sample path, thus allowing development of \emph{oracle-free} RL algorithm for the MFG. In contrast, prior works require access to mean-field oracle \cite{guo2019learning,xie2021learning,anahtarci2019fitted,fu2019actor},which is a strong assumption, as it implicitly assumes the knowledge of the distribution of all other agents, which never holds in practice.% or establish only asymptotic convergence \cite{angiuli2022unified}. 
The main results of the paper are as follows.%(\sbedit{In the following points we want to describe in a bit more detail.. For example, what are we estimating.. model-based.. what do we have access to.. what is the trade-off by relaxing the oracle assumption.. what are the assumptions on the MDP (communicating etc).. we can always move extra-details in the later half to the appendix.. the first few pages needs to be good.. })

%The technical novelty of this paper is the finite sample analysis for RL for MFGs in an \emph{oracle-free setting}}. We note that previous works that consider similar settings either require stronger conditions such as access to mean-field oracle \cite{guo2019learning,xie2021learning,anahtarci2019fitted,fu2019actor}, where the oracle is capable of simulating the aggregate behavior of a large number of agents under a given control policy; or establish only asymptotic convergence \cite{angiuli2022unified}.  %\textcolor{blue}{I have removed the Organization section. In the main results, include the section numbers in the bullets. "We do this... in Sec.IV}
%can be summarized as follows. %\\
\begin{enumerate}[leftmargin=0.44cm]
    \item %We introduce an episodic two time-scale learning rate for RL in MFG in the oracle-free setting. 
    To efficiently learn the MFE and avoid degenerate policies, the Sandbox learning algorithm simultaneously updates the mean-field and the policy of the agent. This simultaneous update induces a time-varying Markov Chain (MC) for the generic agent which complicates the analysis of the algorithm. In Section \ref{sec:Sandbox_RL}, we craft episodic learning rates for the sole purpose of making the MC \emph{slowly} time-varying inside the episode, making the algorithm amenable for analysis.
    \item In Section \ref{sec:Sandbox_RL_Analysis}, we provide finite sample analysis of Q-learning and dynamics matrix estimation under the slowly time-varying MC setting, using a communicating MDP condition from literature \cite{arslan2016decentralized}. This condition generalizes the pre-existing conditions for RL-MFGs in literature. %and Lipschitzness of dynamics and reward function. 
    The slowly time-varying MC setting is shown to introduce a small \emph{drift} in the approximation error, which can be reduced by slowing the inter-episodic learning. Lemmas \ref{lem:tran_prob_conv} and \ref{lem:Q_learn_conv} might be of independent interest to researchers in RL for time-varying MDPs.
    \item The estimates of $Q$-function and dynamics matrix are used to construct approximate optimality and consistency operators, respectively. These operators are used to update the policy and mean-field using two time-scale learning. Finally in Section \ref{sec:Sandbox_RL_Analysis} Corollary \ref{cor:final_bound}, we obtain finite sample convergence bounds of this two time-scale algorithm to an $\epsilon$-neighborhood of stationary MFE, under a standard contraction mapping assumption. 
    %due to the two time-scale nature of the update. \sbedit{no comments on finite time analysis? Experiments?}\azedit{[Edited. Not sure if we'll have experiments in main body]}
    \item In Section \ref{sec:numer}, we numerically illustrate the effectiveness of the Sandbox learning algorithm on a congestion game. We empirically demonstrate that the Sandbox learning algorithm performs well even in the absence of the communicating MDP assumption, if there is a single closed communicating class. This is due to the fact that the MC transitions to the communicating class in finite time. %(\sbedit{This needs to be emphasized.. it is always good to write the conclusion from the numerical section as well here.. what do we plot.. what do we observe.. what difference the assumptions make.. so on})
     %The generic agent in the game aims to maximize its discounted reward function which depends on the agent's state and the fraction of agents in that state (congestion/Mean-Field). The Sandbox learning algorithm is tested for two different settings, namely congestion-averse and congestion-seeking. In the congestion-averse setting the reward function is inversely related to congestion (states with high congestion are unattractive) and vice-versa for the congestion-seeking setting. We show fast convergence of the Sandbox learning algorithm in both scenarios.
    \end{enumerate}
    Proofs of theoretical claims are provided in the Appendix. %\textcolor{blue}{We probably want to rename supplementary materials as Appendix and refer to as the same everywhere.}
% \azedit{ {\sbedit{***Integrate this paragraph into the above 4 points, along with some additions. Each contribution should be elaborated.***}}
% The technical novelty of this paper can be primarily broken down into two parts. Firstly, we provide finite sample analysis for RL for MFGs in an \emph{oracle-free setting}. More specifically, due to the oracle-free setting, the mean-field and the policy need to be updated using the single sample path of the agent. This brings in a technical problem since dynamically learning the mean-field and policy online induces a time-varying MC and to the best of authors' knowledge there exists no analysis for this setting in literature. We craft the episodic learning rates for the sole purpose of making the MC \emph{slowly} time-varying (inside the episode) thus ensuring good convergence analysis. Secondly, we provide finite sample analysis of Q-learning and dynamics matrix approximation under the {(slowly) time-varying Markov Chain} setting, using sufficient exploration conditions from literature \cite{qu2020finite}. Hence, Lemmas 1 and 2 might be of independent interest to researchers working in RL in time-varying setting. We note that previous works that consider similar settings, either require stronger conditions such as access to mean-field oracle \cite{guo2019learning,xie2021learning,anahtarci2019fitted,fu2019actor}, or establish only asymptotic convergence \cite{angiuli2022unified}.
% }
\subsection{Relevant Literature}
\label{subsec:rel_lit}
% \begin{center}
% 	\begin{tabular}{ c c c c  }
% 		\hline
% 	 % & Needs Oracle? & Single sample path & Finite sample bounds \\
%    & Needs & Single & Finite \\
%    & Oracle? & sample path & sample bounds \\
% 		\hline %Guo et al. 
%   \cite{guo2019learning} & Yes & No & Yes \\
% 		 %Elie et al. 
%    \cite{elie2019approximate} & Yes & No & No \\
% 		 %Fu et al. 
%    \cite{fu2019actor} & Yes & No & Yes \\
% 		 %Cui \& Koeppl 
%    \cite{cui2021approximately} & Yes & No & No \\
% 		 %Anahtarci et al. 
%    \cite{anahtarci2022q} & Yes & No & Yes \\
% 		 %Xie et al. 
%    \cite{xie2021learning} & Yes & No & Yes \\
% 		 %Angiuli et al. 
%    \cite{angiuli2022unified} & No & No & No\\
% 		 This work  & No & Yes & Yes\\
% 		\hline
% 	\end{tabular}
% \end{center}
The work most closely related to this paper is \textbf{\cite{angiuli2022unified}} which uses a unified-RL algorithm to solve the MFG problem in cooperative and non-cooperative settings, but lacks rigorous analysis of the RL algorithm. The key differences are that (a) the algorithm in \cite{angiuli2022unified} relies on re-initializations while our algorithm operates on a single sample path, (b) the algorithm proposed in \cite{angiuli2022unified} updates the $Q$-function at a faster time-scale while ours updates the control policy at a faster time-scale, and (c) we explicitly define the learning rates to have a certain episodic structure. These differences are shown to be pivotal in obtaining the finite sample convergence bounds for the Sandbox learning algorithm. 

\textbf{Recent Work of \cite{yardim2022policy}} also deals with RL for MFGs in an oracle-free setting, where $N$ agents independently running policy mirror ascent in a multi-loop algorithm are shown to provably approximate the MFE. One significant difference is that \cite{yardim2022policy} uses $N$-sample paths, compared to the single sample path of our work, to obtain the MFE in $\tilde\Os(\epsilon^{-2})$ time-steps albeit with a bias of $\tilde\Os(1/\sqrt{N})$. Thus accounting for the $N$ sample paths the \emph{sample complexity} of their algorithm is $\tilde\Os(\epsilon^{-4})$, which is similar to the sample complexity of our work. But due to the usage of $N$-sample paths \cite{yardim2022policy} has a better \emph{time complexity} than our work. In addition to the standard contraction mapping assumption in MFG literature (\cite{yardim2022policy} Proposition 2 \& Theorem 2), they also assume contraction (\cite{yardim2022policy} Assumption 2) of the mean-field update under \emph{any} given policy $\pi$. This is assumption requires ergodicity of the generic agent's Markov chain under \emph{any} policy $\pi$, which is stronger than the communicating MDP assumption of our work (Assumption \ref{asm:comm}). This ergodicity condition allows the empirical mean-field of the $N$ agents to \emph{mix} (given a sufficiently long time), and approximate the true mean-field. The ergodicity assumption may not hold in scenarios such as congestion games, as some policies (e.g. always move to neighboring state) may lead to periodicity in the Markov chain. Since our work does not assume ergodicity, the Sandbox learning algorithm approximates the mean-field by instead relying on good estimation of the transition probability matrix (Section \ref{subsec:approx_op}) of the agent under the communicating MDP assumption. Finally, the persistent excitation assumption (\cite{yardim2022policy} Assumption 3) imposes two restrictions on the policy class: (i) the support of the class of policies is non-zero everywhere, and (ii) the policy class is (Bellman) complete. We explicitly include randomness to impose the first restriction, however, we do not require the stronger completeness assumption to obtain the results in our paper. Below we provide a table juxtaposing our work with the contributions of other works in RL for MFGs. %\cite{yardim2022policy} also requires persistence of excitation which is not required to obtain the results presented in this work.}% It is not clear how the sufficient condition for the persistency of excitation condition (\cite{yardim2022policy} (2022) Assumption 3) translates to the conditions on the model parameters and how any deviation impacts algorithm’s theoretical and experimental results. We do not explicitly require this assumption to obtain the results presented in this work.}

\begin{center}
	\begin{tabular}{ c c c c c  }
		\hline
   % & Oracle- & Single & Finite \\
   % & less? & sample  & sample  \\
   % & & path & bounds \\
   & Oracle-less? & Single sample path & Finite sample bounds \\
		\hline
   \cite{elie2019approximate} & \tikzxmark & \tikzxmark & \tikzxmark \\
   \cite{cui2021approximately} & \tikzxmark & \tikzxmark & \tikzxmark \\  
   \cite{fu2019actor} & \tikzxmark & \tikzxmark & \checkmark \\ 
   \cite{guo2019learning} & \tikzxmark & \tikzxmark & \checkmark \\
   \cite{anahtarci2022q} & \tikzxmark & \tikzxmark & \checkmark \\
   \cite{xie2021learning} & \tikzxmark & \tikzxmark & \checkmark \\
   \cite{angiuli2022unified} & \checkmark & \tikzxmark & \tikzxmark\\
\cite{yardim2022policy} & \checkmark & \tikzxmark & \checkmark \\   
		 This work  & \checkmark & \checkmark & \checkmark\\
		\hline
	\end{tabular}
\end{center}
A complete literature review is provided in the Section \ref{sec:lit_rev}. %\textcolor{blue}{Please move back the extended literature here, and perhaps lose the title \& merge or make a new one.. up to you. // Also you might wwant to move the acknowledgement to the first page footnote}

\section{FORMULATION \& BACKGROUND}
\label{sec:form}
%
%!TEX root = NL-GP-MFG.tex
%%%%%%%%%%%%%%%%%%%%%%%%%%%%%%%%%%%%%%%%%%%%%
%%%%%%%%%%%%%%%%%%%%%%%%%%%%%%%%%%%%%%%%%%%%%

% \sbedit{This section should be something like Sec.2.1 in \url{https://proceedings.neurips.cc/paper/2019/file/030e65da2b1c944090548d36b244b28d-Paper.pdf} or use the material in \url{http://www.ece.mcgill.ca/~amahaj1/projects/reinforcement-learning/conference/2019-aamas.pdf} to reduce the size.}
% \sbedit{Can we separate and move the "background" part to the appendix.. ? Just keep enough that are required in later sections..}
Consider an infinite horizon $N$-agent game over finite state and action spaces $\Ss$ and $\As$, respectively. The state and action of agent $i \in [N]$ at time $t$ are denoted by $s^i_t \in \Ss$ and $a^i_t \in \As$, respectively. Agent $i$'s initial state is drawn from a distribution $s^i_1 \sim p_1 \in \Ps(\Ss)$, and the state dynamics of the agent is coupled with the other agents through the empirical distribution  $e^N_t := \frac{1}{N} \sum_{j \in [N]} \bbone \{s^j_t = s\}$, where we also include agent $i$, without any loss of generality. Agent $i$ generates its actions using policy $\pi^i_t \in \Pi^i_t :=  \{ \pi^i_t \mid \pi^i_t : \Ss \times \Ps(\Ss) \rightarrow \Ps(\As) \}$, dependent on its state and the empirical distribution $e^N_t$. The state of agent $i$ transitions according to
\begin{align}\label{eq:transition_dynamics}
	s^i_{t+1} \sim P(\cdot \mid s^i_t, a^i_t, e^N_t), s^i_1 \sim p_1, a^i_t \sim \pi^i_t(s^i_t,e^N_t) .
\end{align}
Similarly, the reward accrued to the agent depends on its state, action, and the empirical distribution at time $t$, $r^i_t = R(s^i_t,a^i_t,e^N_t) \in [0,1]$. The presence of $e_t^N$ in both \eqref{eq:transition_dynamics} and  $r^i_t$ is a key point of departure from a standard MDP setting, as it permits other agents' possibly non-cooperative behavior to determine the evolution of the state and the reward of agent $i$. The over-arching goal of each agent $i=1,\dots,N$ is to maximize its total reward discounted by a factor $0 < \rho < 1 $, defined as 
% \begin{align}\label{eq:value}
% 	V^i (\pi^i,\pi^{-i}) = \EE_{a^i_t \sim \pi^i_t(s^i_t,e^N_t), s^i_{t+1} \sim P(\cdot \mid s^i_t, a^i_t, e^N_t)} \Big[ \sum_{t=1}^{\infty} \rho^t R(s^i_t,a^i_t,e^N_t) \mid s^i_t \sim p_1 \Big],
% \end{align}
\begin{align}\label{eq:value}
	V^i (\pi^i,\pi^{-i}) = \EE \Big[ \sum_{t=1}^{\infty} \rho^t R(s^i_t,a^i_t,e^N_t) \mid s^i_t \sim p_1 \Big],
\end{align}
where $\pi^i := (\pi^i_1,\pi^i_2,\dots) \in \Pi^i$ is the policy of agent $i$ and $\pi^{-i} := \{ \pi^j \}_{j \in [N] \setminus i}$ is the concatenation of policies of all other agents. % other than $i$, as is standard game-theoretic notation. 
In an $N$-agent non-cooperative game, the dominant solution concept is a Nash equilibrium, where none of the agents can increase their total reward by unilaterally deviating from its Nash policy. %joint value function of all agents cannot be increased by a single agent unilaterally deviating its policy.  
Based upon this notion, we define an $\epsilon$-Nash equilibrium as follows.
\begin{definition}[\cite{bacsar1998dynamic}] \label{def:eps_Nash}
	A set of policies $\pi^* = \{\pi^{1*}, \ldots, \pi^{N*}\}$ is termed an $\epsilon$-Nash equilibrium if $\forall i \in [N]$,
	%
	%\begin{align}\label{eq:epsilon_nash}
		$V^i(\pi^{i*},\pi^{-i*}) + \epsilon > V^i(\pi^i,\pi^{-i*}), \forall \pi^i \in \Pi^i. $%, \text{ where } \Pi^i := \{ \Pi^i_1, \Pi^i_2, \ldots \}$.
	%\end{align}
	%
% where $\Pi^i := \{ \Pi^i_1, \Pi^i_2, \ldots \}$
%
\end{definition} 
%
% A set of policies $\pi^* = \{\pi^{1*}, \ldots, \pi^{N*}\}$ is an $\epsilon$-Nash equilibrium %[cf. \eqref{eq:epsilon_nash}] 
% if by individual deviation from $\pi^*$ an agent can gain at most $\epsilon$ total reward, which 
If $\epsilon \rightarrow 0$, $\epsilon$-Nash approaches Nash equilibirum. Due to the exponential dependence on the number of agents $N$ required to compute exact Nash equilibria \cite{bacsar1998dynamic}, we restrict focus to computing $\epsilon$-Nash equilibria. In the case that the number of agents $N \rightarrow \infty$, known as the mean-field equilibrium (MFE), one obtains an $\epsilon$-Nash equilibrium \cite{saldi2018markov,moon2014discrete}, specifically, $\epsilon \rightarrow 0$ as $N \rightarrow \infty$.

Therefore, subsequently, we focus on the MFG, the infinite population analog of the $N$-agent game.% \textcolor{gray}{MFE is undefined here. We should either define it here or at least explain what it is and how it is related to \eqref{eq:epsilon_nash}.}
% \sbedit{The initial part of the below paragraph should be moved to the intro and main results section. The main body should not have these many citations.} 
% In the MFG we deal with a generic agent interacting with the infinitely many other agents, encapsulated by the mean-field distribution. In this work, as in many others \cite{guo2019learning,fu2019actor,xie2021learning}, we deal with the stationary setting (\sbedit{This should be stated in the beginning itself. Also in the main results section.}) where the mean-field has achieved a steady state $\mu = \lim_{N,t \rightarrow \infty} e^N_t$. 
The empirical distribution is replaced in that case by a mean field distribution $\mu = \lim_{N,t \rightarrow \infty} e^N_t$, its infinite population stationary counterpart. The stationary MFE of the MFG is guaranteed to exist under certain Lipschitzness assumptions \cite{saldi2018markov,jovanovic1988anonymous} (Assumption \ref{asm:contrct}).  As in the $N$-agent game, the generic agent in a MFG has state space $\Ss$, action space $\As$, and the initial distribution of its state is $p_1 \sim \Ps(\Ss)$. Next, we define the agent's transition dynamics \eqref{eq:transition_dynamics} and total reward \eqref{eq:value} in the mean-field setting with mean-field $\mu \in \Ps(\Ss)$:
\begin{align}
	s_{t+1} \sim P(\cdot \mid s_t, a_t, \mu), s_1 \sim p_1, a_t \sim \pi(s_t,\mu) .
\end{align}
%We consider stationary mean-field equilibria hence $\mu$ is not changing with time. %The initial distribution of the mean-field is the same as the distribution of the generic agent's initial state, $\mu(1) = p_1$. 
%The generic agent uses a policy $\pi$ to generate actions $a_t$ at time instant $t$ such that $a_t \sim \pi(s_t,\mu)$ and the reward the generic agent receives at time $t$ is $r_t = R(s_t,a_t,\mu)$ in the interval $[0,1]$. The value of policy $\pi$ at time $t$ for a fixed $\mu$ is given by
%\begin{align*}
%	V^{\pi,\mu}_t(s) = \EE \bigg[\sum_{i=t}^{\infty} \gamma^i R(s_i,a_i,\mu) \mid s_t = s \bigg] = \EE_{a \sim \pi(s,\mu), s' \sim P(\cdot \mid s, a , \mu)} \big[R(s,a,\mu) + \gamma  V^{\pi,\mu}_{t+1}(s') \big].
%\end{align*}
%where $0 < \gamma < 1 $. 
The actions of the generic agent are generated using a stationary stochastic policy $\pi \in \Pi :=\{ \pi: \Ss \times \Ps(\Ss) \rightarrow \Ps(\As)\}$. We restrict ourselves to the set of stationary policies, without loss of generality, since the optimal control policy for an MDP induced by stationary $\mu$ is also stationary \cite{puterman2014markov}. The instantaneous reward $r_t$ accrued to a generic agent at time $t$ is dependent on its state, control action, and the mean-field, that is, $r_t = R(s_t,a_t,\mu)$. The generic agent aims to maximize its total discounted reward given the mean-field $\mu$ and with the discount factor $0 < \rho < 1$,% which is
% \begin{align}\label{eq:value_mfg}
% 	V_{\pi,\mu} := \EE_{a \sim \pi(s,\mu), s' \sim P(\cdot \mid s, a , \mu)} \Big[ \sum_{t=1}^{\infty} \rho^t R(s_t,a_t,\mu) \mid s_1 \sim p_1 \Big]. %= p_1^T V^{\pi,\mu}_1
% \end{align}
\begin{align}\label{eq:value_mfg}
	V_{\pi,\mu} := \EE \Big[ \sum_{t=1}^{\infty} \rho^t R(s_t,a_t,\mu) \mid s_1 \sim p_1 \Big]. %= p_1^T V^{\pi,\mu}_1
\end{align}
%\textcolor{gray}{we need to relate the state transition and value function in the mean-field setting to the one we previously stated with $N$ agents. Use equation references for all these connections. Are these the limit of the former quantities as $N\rightarrow \infty$?}
% We have omitted dependence on $p_1$ as that is constant. %The optimal value function for a given mean-field $\mu$ is denoted by $V^*_\mu$. 
Next we define the Mean-Field Equilibrium (MFE) by introducing two operators. First define the \emph{optimality} operator $\Gamma_1(\mu) := \argmax_{\pi} V_{\pi,\mu}$ as the operator which outputs the optimal policy for the MDP induced by mean-field $\mu$. We consider policies where the probability is split evenly among optimal actions for a given state and mean-field. We also define $\Gamma_2(\pi,\mu)$ as the \emph{consistency} operator which computes mean-field consistent with the policy $\pi$ and mean-field $\mu$. If $\mu' = \Gamma_2(\pi,\mu)$, then $\forall s' \in \Ss$
 \begin{align} \label{eq:gamma_2}
 	\mu'(s') = \sum_{(s,a) \in \Ss \times \As} P(s' \mid s, a, \mu) \pi(a \mid s, \mu) \mu(s).
 \end{align}
This is also referred to as the Fokker-Planck-Kolmogorov equation in the literature \cite{bensoussan2015master}, and versions of it appear in the literature on probability flow equations in MDPs \cite{puterman2014markov}. %\textcolor{blue}{this is similar to Kolmogorov-Chapman equation in Bayesian inference. We should call that out, as well as the fact a similar notion appears in the dual LP reformulation of finding the optimal policy in an MDP \cite{puterman2014markov}.}
% \azedit{[Very interesting comment about the LP-formulation... I am not too aware of this]}
Consistency means that if infinitely many agents (with initial distribution $\mu$) follow a control policy $\pi$, the resulting distribution will be $\mu'$. %The MFE of the MFG consists of a control policy and mean-field pair $(\pi,\mu)$. 
Using these two operators, we can define the MFE of the MFG as follows.% as a control policy and mean-field pair $(\pi,\mu)$. 
\begin{definition}[{\cite{saldi2018markov}}] \label{def:MFE}
	The pair $(\tilde\pi,\tilde\mu)$ is an MFE of the MFG if $\tilde\pi = \Gamma_1(\tilde\mu)$ and $\tilde\mu = \Gamma_2(\tilde\pi,\tilde\mu)$.
%	The MFE of a MFG is a tuple consisting of equilibrium policy and equilibrium mean-field, $(\pi^*, \mu^*)$ such that,
%	\begin{enumerate}
%		\item $\pi^* = \argmin_{\pi} V^{\pi,\mu^*}(p_1)$, and
%		\item $\mu^*$ is consistent with $\pi^*$, which means
%		\begin{align}
%			\mu^*(s') = \sum_{(s,a) \in \Ss \times \As} P(s' \mid s, a, \mu^*) \pi^*(a \mid s, \mu^*) \mu^*(s),
%		\end{align}
%	\end{enumerate}
%	where $\mu(s)$ denotes the $s$th component of vector $\mu$.
%
\end{definition}
% The MFE is $\epsilon$-Nash (Definition \ref{def:eps_Nash}) for $N$-agent games with $\epsilon \rightarrow 0$ as $N \rightarrow \infty$. %\textcolor{gray}{missing equation references to earlier notion of $\epsilon$-Nash}
%The MFE is the counterpart to the Nash equilibrium in the mean-field limit. 
Intuitively this two-part coupled definition can be interpreted as (1) $\tilde\pi$ is the optimal policy for the MDP induced by mean-field $\tilde\mu$, and (2) mean-field $\tilde\mu$ is consistent with the control policy $\tilde\pi$. %Let us also define the mean-field update operator $\Gamma : \Ps(\Ss) \rightarrow \Ps(\Ss)$ which is the composition of operators $\Gamma_1$ and $\Gamma_2$ such that, $\Gamma := \Gamma_2 ( \Gamma_1(\cdot),\cdot)$. 
A naive way of approximating the MFE could be through repeated use of the composite operator $\Gamma_2(\Gamma_1(\cdot), \cdot)$ but this iteration is known to be non-contractive (\cite{cui2021approximately}). Instead we replace $\Gamma_1(\cdot)$ with the \emph{approximate optimality} operator $\Gamma^\lambda_1 (\mu):= \softmax{\lambda}{\cdot,Q^*_{\mu}}$, where $Q^*_{\mu}$ is the $Q$-function of the MDP induced by mean-field $\mu$ and, the $\softmax{\lambda}{\cdot}$ function is defined as
% \begin{align*}
%     \softmax{\lambda}{Q_{\mu}} = \frac{\exp(\lambda Q_{\mu}(s,a))}{\sum_{a' \in \As} \exp(\lambda Q_{\mu}(s,a'))}.
% \end{align*}
\begin{align} \label{eq:softmax}
	\softmax{\lambda}{s,Q}_a := \frac{\exp(\lambda Q(s,a))}{\sum_{a' \in \As} \exp(\lambda Q(s,a'))},
\end{align}
$\forall s \in \Ss, \forall a \in \As$. Evidently as $\lambda \rightarrow \infty$, $\Gamma^\lambda_1 \rightarrow \Gamma_1$. Next using the approximate optimality operator $\Gamma^\lambda_1$ we define an approximate MFE known as \emph{Boltzman}-MFE (B-MFE).
\begin{definition}[{\cite{cui2021approximately}}] \label{def:B_MFE}
For a given $\lambda >0$, the pair $(\pi^*,\mu^*)$ is a Boltzman-MFE (B-MFE) of the MFG if $\pi^* = \Gamma^\lambda_1(\mu^*)$ and $\mu^* = \Gamma_2(\pi^*,\mu^*)$.
\end{definition}
The Boltzman-MFE is an approximate MFE and approaches the MFE as $\lambda \rightarrow \infty$ (Theorem 4, \cite{cui2021approximately}). Henceforth, we will devote ourselves to finding the B-MFE for a large enough $\lambda$, so as to closely approximate the MFE. Next we introduce the standard contraction mapping assumption in MFGs \cite{guo2019learning,xie2021learning}.
% {\color{gray}Next we introduce some background material, needed for our main results. We start with a standard contraction mapping assumption in MFGs \cite{guo2019learning,xie2021learning}.
%\textcolor{blue}{I feel assumptions of a technical nature could be deferred to the convergence analysis section. Moreover, we need to more granularly explain when are these assumptions satisfied. Does this impose restrictions on the class of mean field estimators or policy parameterizations? That discussion needs to be added after the statement of the assumption}\azedit{[I have changed the section name to "MFGs: Formulation and Background". Since the assumption and the proposition are pretty standard in literature, I think we don't need to move it to the Analysis section which, I believe should contain only our contributions]}
\begin{assumption} \label{asm:contrct}
	There exists a $\lambda >0$ and Lipschitz constants $d_1,d_2$ and $d_3$ such  that 
			\begin{align*}
			\lVert \Gamma^\lambda_1(\mu) - \Gamma^\lambda_1(\mu') \rVert_{TV} & \leq d_1 \lVert \mu - \mu' \rVert_1, \\
			\lVert \Gamma_2(\pi,\mu) - \Gamma_2(\pi',\mu) \rVert_1 & \leq d_2 \lVert \pi - \pi' \rVert_{TV}, \\
            \lVert \Gamma_2(\pi,\mu) - \Gamma_2(\pi,\mu') \rVert_1 & \leq d_3 \lVert \mu - \mu' \rVert_1
			\end{align*}
		and $d:=d_1 d_2 + d_3 < 1$ for policies $\pi,\pi' \in \Pi$ and mean-fields $\mu,\mu' \in \Ps(\Ss)$.
\end{assumption}
Assumption \ref{asm:contrct} is guaranteed to be true for a small enough $\lambda >0$ \cite{cui2021approximately}. This results in a 
trade-off as higher values of $\lambda$ increase \emph{closeness} between MFE and B-MFE, but may cause Assumption \ref{asm:contrct} to be violated, and vice-versa. This issue is well-known in MFGs over finite state and action spaces \cite{cui2021approximately}. Contraction mapping conditions (as in Assumption 1) are widely used in RL for standard MFGs \cite{guo2019learning,xie2021learning,fu2019actor}. %These conditions follow due to continuity of the operators, $\hGamma^\lambda_1$ and $\Gamma_2$, and the compactness of their domains. 
Lemma 5 in \cite{guo2019learning} provides candidate values for these constants. The $\lVert \cdot \lVert_{TV}$ norm used in Assumption \ref{asm:contrct} is the Total variation bound \cite{cui2021approximately} and is defined for a function  $f : \As \times \Ss \rightarrow \RR$ such that $\lVert f \rVert_{TV} := \max_{s \in \Ss} \sum_{a \in \As} \lvert f(a \mid s) \rvert$. %Hence for a pair of policies $\lVert \pi - \pi'\rVert_{TV} = \max_{s \in \Ss} \sum_{a \in \As} \lvert \pi(a \mid s) - \pi' (a \mid s) \rvert$. 
Under Assumption \ref{asm:contrct}, the existence and uniqueness of the B-MFE of the MFG has been proven in literature \cite{cui2021approximately,guo2019learning,xie2021learning} using the standard contraction mapping theorem. %Results in literature  have shown existence and uniqueness of MFE under Assumption \ref{asm:contrct}. 
%\textcolor{blue}{Do we really want to state propositions in the formulation section? This should be deferred to the convergence section.}\azedit{[Plz see above explanation]}
% \begin{proposition}[\cite{moon2014discrete,guo2019learning,xie2021learning}]
% 	Under Assumption \ref{asm:contrct} there exists a unique MFE $(\pi^*,\mu^*)$.
% \end{proposition}
%\begin{proof}
%	The proof relies on a contraction type of argument, first we show that under assumption \ref{asm:contrct} the operator $\Gamma$ is contractive.
%		\begin{align*}
%		\lVert \Gamma(\mu) - \Gamma(\mu') \rVert & = \lVert \Gamma_2(\Gamma_1(\mu),\mu) - \Gamma_2(\Gamma_1(\mu'),\mu') \rVert \\
%		& \leq \lVert \Gamma_2(\Gamma_1(\mu),\mu) - \Gamma_2(\Gamma_1(\mu),\mu') \rVert + \lVert\Gamma_2(\Gamma_1(\mu),\mu') - \Gamma_2(\Gamma_1(\mu'),\mu') \rVert, \\
%		& \leq  d_3 \lVert \mu - \mu' \rVert + d_2 \lVert \Gamma_1(\mu) - \Gamma_1(\mu') \rVert \leq d \lVert \mu - \mu' \rVert
%	\end{align*}
%	where since $d < 1$, the operator $\Gamma$ is contractive. This means $\Gamma$ has a unique fixed point and as being a fixed point of $\Gamma$ is equivalent to the definition of equilibrium mean-field $\mu^*$, there exists a unique MFE. As a virtue of that there exists a unique MFE $(\pi^*,\mu^*)$ where $\mu^* = \Gamma(\mu^*)$ and $\pi^* = \Gamma_1(\mu^*)$.
%\end{proof}
Hence, the B-MFE approximates the MFE for large values of $\lambda$ and the MFE is known to be $\epsilon$-Nash for the finite population game (Theorem 2.3 \cite{saldi2018markov}). %, if all $N$-agents in a finite population symmetric game were to follow control policy $\pi^*$ prescribed the MFE, the resulting set of policies will be $\epsilon$-Nash where $\epsilon \rightarrow 0$ as $N \rightarrow \infty$. The MFE of the MFG can be trivially obtained if the transition dynamics $P$ and the reward function $R$ of the generic agent are known, by repeated iteration of the the $\Gamma$ operator which involves the $\Gamma_1$ and $\Gamma_2$ operators. %In an RL setting where $P$ and $R$ are unknown, these operators are not available. Most works in literature approximate the optimality operator $\Gamma_1$ using data-driven techniques from RL literature, such as Actor-Critic \cite{fu2019actor}, policy gradient [MultiPop], Q-learning \cite{guo2019learning} to name a few. On the other hand, majority of the literature assumes access to a noisy version of the $\Gamma_2$ operator, we call this noisy $\Gamma_2$ operator the mean-field oracle. Assuming access to this oracle might be prohibitive since it requires access to a large number of \emph{dummy} agents available to the algorithm for emulation. 
In the next section, we propose an RL algorithm to approximate the B-MFE without access to a mean-field oracle, by utilizing the sample path of a generic agent itself. %This approximation is made harder by the fact that the MC of the generic agent is changing due to the mean-field and mean-field being updated concurrently.

\section{SANDBOX REINFORCEMENT LEARNING}
\label{sec:Sandbox_RL}
%
%!TEX root = NL-GP-MFG.tex
%%%%%%%%%%%%%%%%%%%%%%%%%%%%%%%%%%%%%%%%%%%%%
%%%%%%%%%%%%%%%%%%%%%%%%%%%%%%%%%%%%%%%%%%%%%

% \textcolor{blue}{this section is too expansive. We are mixing up technical definitions, assumptions, algorithm exposition, convergence analysis.\\ 
% % %
% 1)Section 3 should only have the technical definitions required to complete the derivation of the algorithm, with enough connective tissue to explain how/why does the algorithm work intuitively.\\
% % %
% 2) We should present all mathematical exactitudes in a separate section that is wholly dedicated to the assumptions and convergence analysis.}

% %%%%%%%%%%%%%%%%%%%%%%%%%%%%%%%%%%%%%%%%%%%%%%%%%%%%%%%%%%%%%%%%%%%%%%%%%%%%%%%%%%%%%%%%%%%%%%%%%%%%%%%%%%%%%%%%%%%%%%%%%%%%%%%%%%%%%%%%%%%%%%%%%%%%%%%%%%%%%%%%%%%%%%%%%%%%%%%%%%%%%%%%%%%%%%%%%%%%%%%%%%%%
Consider a setting where a generic agent has no knowledge of the transition probability $P$, the functional form of the reward $R$ or a mean-field oracle, which is often required in such studies -- see \cite{guo2019learning,fu2019actor,xie2021learning,cui2021approximately}. %{\sbedit{Some of the content after this should be moved to 3.1. }}% Subsection starts off without defining or stating what the operators are.}}\azedit{[Operators defined in sec 2.]}
In this section, we propose a Sandbox RL algorithm to compute the B-MFE. Our methodology operates by updating the mean-field and the control policy concurrently using approximations of the optimality and consistency operators, $\Gamma^\lambda_1$  and $\Gamma_2$, respectively, defined prior to Definition \ref{def:B_MFE}. The approximation to $\Gamma^\lambda_1$ is defined by $\softmax{\lambda}{\cdot}$ of estimated $Q$-function obtained using $Q$-learning update, whereas approximation of operator $\Gamma_2$ relies on estimating the transition probabilities of the Markov Chain (MC) of the generic agent. But the concurrent update of mean-field and control policy causes the MC of the generic agent to be time-varying. This time-varying MC setting may cause instability in the approximation of the operators, resulting in divergence of mean-field and control policy updates.
\begin{figure}[h!]
    \centering
	\includegraphics[width=0.6\textwidth]{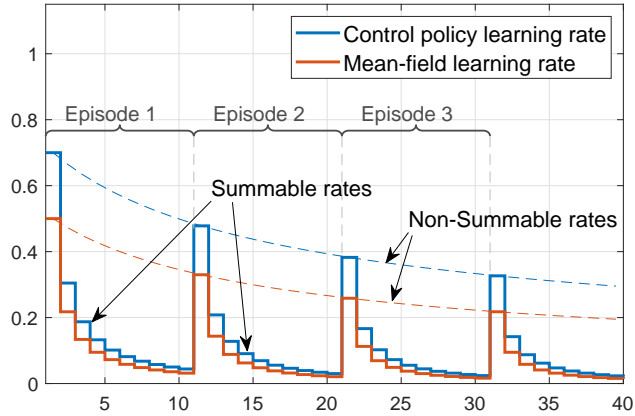}
	\caption{Episodic Two time-scale learning rate}
	\label{fig:learn_rates}
\end{figure}

To ensure good approximation of operators, we adopt an episodic two time-scale learning rate as shown in Figure \ref{fig:learn_rates}. Inside an episode, the learning rates are summable (or fast-decaying), allowing the degree of non-stationarity in the MC inside the episode to be \emph{slowly time-varying}. Doing so then enables us to ensure that the approximation errors of the optimality and consistency operators are under control. Therefore, given a reasonable estimate for the consistency operator, the control policy is updated on a faster time-scale. Similarly the mean-field is updated at a slower time-scale using the consistency operator. We note that inverting the entity updated on a faster/slower time-scale will result in the solution to the Mean-Field Control problem \cite{angiuli2022unified}. % -- specifically, the case when the mean-field would be updated faster than the control policy. 
In the following subsection we describe how we can estimate the two operators. %Then the Sandbox learning algorithm operates by employing these estimates to develop an RL algorithm in a MFG which can operate along a single sample path.
%  \sbedit{Try to move some of the above material before subsections into them. The section introduction should be brief. It should not be this long.}
%\noindent Estimators\\
%Episodic learning rates\\
%Two time-scale learning\\
%mean-field update\\
%control policy update

%%%%%%%%%%%%%%%%%%%%%%%%%%%%%%%%%%%%%%%%%%%%%%%%%%%%%%%%%%%%%%%%%%%%%%%%%%%%%%%%%%%%%%%%%%%%%%%%%%%%%%%%%%%%%%%%%%%%%%%%%%%%%%%%%%%%%%%%%%%%%%%%%%%%%%%%%%%%%%%%%%%%%%%%%%%%%%%%%%%%%%%%%%%%%%%%%%%%%%%%%%%%%%%%%%%%%%%%%%%%%%%%%%%%%%%%%%%%%%%%%%%%%%%%%%%%%%%
\subsection{Approximate Mean-Field consistency and optimality operators} \label{subsec:approx_op}
We start by describing how the Sandbox learning algorithm uses the MC of the generic agent to approximate the consistency operator $\Gamma_2$. Recalling the definition of $\Gamma_2$ \eqref{eq:gamma_2}, if $\mu' = \Gamma_2(\pi,\mu)$, then
\begin{align*}
    \mu'(s') & = \sum_{s \in \Ss} \sum_{a \in \As} P(s' \mid s,a,\mu) \pi(a \mid s,\mu) \mu(s)  = \sum_{s \in \Ss} P_{\pi,\mu} (s,s') \mu(s), \hspace{0.2cm} \forall s' \in \Ss
\end{align*}
where $P_{\pi,\mu}$ is the transition dynamics matrix of the generic agent under control law $\pi$ and mean-field $\mu$. Hence if $\mu' = \Gamma_2(\pi,\mu)$, the vector $\mu' \in \Ps(\Ss)$ can be written as
\begin{align}\label{eq:mu_fixed_point}
	\mu' & = P_{\pi,\mu}^\top \mu
\end{align}
To come up with an estimator for $\Gamma_2$ we will need to estimate the dynamics matrix $P_{\pi,\mu}$. Toward this end, we can take a sample path of the Markov chain induced by $\pi$ and $\mu$ of length $T$ to obtain approximation of $\mu'$ through the use of an estimation of the occupancy (visitation) measure, and we can determine to what extent this estimate would be optimal through its ability to solve equation \eqref{eq:mu_fixed_point}. More specifically, for a fixed pair of states $ (i,j) \in \Ss \times \Ss$, the empirical transition probabilities $\hat P$ can be computed by keeping track of the state visitation numbers $N(i)$ and $N({i,j})$ as follows:
\begin{align}\label{eq:consistency_estimator}
	\hat P(i,j) & = \frac{N(i,j) + 1/S}{N(i) + 1},
\end{align}
where $ N(i,j) = \big\lvert \{l \in [T] : s_l = i, s_{l+1} = j\} \big\rvert, N(i) = \sum_{j \in \Ss} N^k_{t}(i,j)$ and $s_t$ is the state visited by the MC at time $t \in [T]$. Notice that we use smoothing (by adding $1/S$ and $1$ to the numerator and the denominator, respectively) to avoid degenerate cases during the transition probability estimation. %\textcolor{gray}{what is Laplace smoothing? How does it apply to the above expression? Moreover, what is $\hat \mu$? We have not defined it anywhere. It's definition should be a consequence of the above expression.} 
The transition probabilities $\hat P$ approximate the true transition probabilities $P_{\pi,\mu}$. Hence the approximate consistency operator is then given by $\hat{P}^\top \mu$, and the associated mean-field is updated by sequentially applying  $\hat{P}^\top \mu$ with a specific step-size [cf. \eqref{eq:step-sizes}] in \eqref{eq:algorithm_iteration1}, which we defer to the next subsection in order to underscore its concurrence with policy updates that are derived in terms of the Bellman equations. %To proceed with this derivation, we present a few definitions. % \textcolor{blue}{the expression in \eqref{eq:consistency_estimator} is used to estimate $\mu$. But how is this related to step 5 in the Algorithm \ref{alg:Single_loop_RL} pseudo-code, specifically, we use this notion to recursively update $\mu_t^k$. That should be spelled out in detail. \\ \\  Here's what I am looking for: suppose the agent is operating along a single sample path. What is estimated on faster time-scale and what is estimated on a slower time-scale? Define the iteration index, and relate this to samples in the sample path. For instance, see Section 2.1 of this paper https://arxiv.org/pdf/2201.12332.pdf -- What I want in this section is to just be spoon fed how to execute the pseudo-code, as well as the equations required to derive it, without too many additional technical assumptions or definitions. Those can be saved for the next section.\azedit{I have added this in Sec 3.2. Sec 3.1 deals with describing how to approximate consistency and optimality operators and Sec 3.2 deals with using these approximate operators to do mean-field and control policy update. Also moved Assumptions and related technicalities to Sec 4 (Theoretical Analysis)} \\ \\

Now we describe how the Sandbox learning algorithm approximates the optimality operator $\Gamma^\lambda_1$. As described in Section \ref{sec:form} $\Gamma^\lambda_1 := \softmax{\lambda}{\cdot,Q^*_{\mu}}$, where  $\softmax{\lambda}{\cdot}$ is defined in \eqref{eq:softmax} %{\color{gray}Operator $\Gamma_1$ returns the policy which is optimal for the MDP induced by a given mean-field $\mu$. From standard results in the literature \cite{lewis2012optimal} we know that $\Gamma_1 = \argmax Q^*_\mu$} 
and $Q^*_\mu(s,a) := \argmax_\pi \EE[ \sum_{t=1}^\infty R(s_t,a_t,\mu )| s_1=s, a_1=a ]$ is the optimal $Q$-function for the MDP induced by the mean-field $\mu$ and is the fixed point of the Bellman equation $$Q^*_\mu(s,a) = R(s,a,\mu) + \rho \EE_{s' \sim P(\cdot)}[\max_{a'} Q^*_\mu(s',a') ].$$% := \argmax_\pi \EE[\sum_{t=1}^\infty \rho^t R_t(s_t,a_t,\mu) \mid s_1 = s,a_1=a]$ is the . 
The algorithm uses $Q$-learning update to approximate the optimal $Q$-function. The asynchronous Q-learning update \cite{lewis2012optimal,even2003learning} can be written as follows,
\begin{align}
	& Q_{t+1}(s_t,a_t) = (1 - \beta_t) Q_t (s_t) + \beta_t \big( r_t + \rho \max_{a \in \As} Q_t(s_{t+1},a) \big), \label{eq:Q_learn}
\end{align}
% \begin{align}
% 	& Q_{t+1}(s_t,a_t) = \nonumber \\
%  & (1 - \beta_t) Q_t (s_t) + \beta_t \big( r_t + \rho \max_{a \in \As} Q_t(s_{t+1},a) \big), \label{eq:Q_learn}
% \end{align}
%\textcolor{blue}{it's not reasonable to present a Q learning update without relating it to finding the optimal value function -- so some invocation of Bellman equations and finding optimal Q function, and how this relates to finding optimal policy, is required here.}\azedit{[Bellman equation added]}
%where $Q_t = [Q_{1,t},Q_{2,t},\ldots]^T$ is the $\lvert \Ss \times \As \rvert$ long vector of Q-values pertaining to each state-action pair. The vector $e_i$ is a one-hot vector in which the $i$th element is $1$ and the rest are $0$ %and $k(t) = [k_1(t), k_2(t), \ldots]^T$ is the vector denoting the number of times each state action pair has been visited upto time $t$. The vector of UCB terms is denoted by
%\begin{align*}
%	b(k(t)) = [b'(k_1(t)), \ldots, b'(k_n(t))]^T, \text{ where } b'(k) = \frac{c_2}{1-\gamma}\sqrt{\frac{H \ln (\frac{SA(k+1)(k+2)}{\delta})}{k}}
%\end{align*}
%where $H = \ln(\frac{1}{1-\gamma})/\ln(\frac{1}{\gamma})$ and $c_2 = 4\sqrt{2}$. The Bellman operator is denoted by $F(\cdot)$ and the learning rate by $\alpha_t$,
%\begin{align*}
%, F(Q(s,a)) & = r(s,a) + \gamma \EE_{s' \sim P(\cdot \mid  s,a)}[\max_{a' \in \As} Q(s',a')], \hspace{ 0.2cm} 
%\end{align*}
 where $\beta_t := c_\beta/(t+1)^\nu$ and $0.5 < \nu \leq 1$. %{\color{gray}But since the $\argmax$ operator does not follow the Lipschitz property in finite action spaces \cite{gao2017properties}, we cannot apply $\argmax$ to the estimate of optimal $Q$-function and expect to have a good estimate of the optimality operator. Instead we resort to the $\softmax{\lambda}{\cdot}$ operator which does follow Lipschitz property with constant $\lambda$. The $\softmax{\lambda}{\cdot} : \Ss \times \RR^{\Ss \times \As} \rightarrow \Ps(\As)$ is defined as
% \begin{align*}
% 	\softmax{\lambda}{s,Q}_a := \frac{\exp(\lambda Q(s,a))}{\sum_{a' \in \As} \exp(\lambda Q(s,a'))}.
% \end{align*}}
Let us denote the approximate optimality operator as $\hGamma_1 := \softmax{\lambda}{\cdot,\hat{Q}}$, where $\hat{Q}$ is the estimate obtained using the $Q$-learning update \eqref{eq:Q_learn}. The estimation error in $\hat \Gamma_1$ is due to the estimation error in the $Q$-learning, and is monotonically increasing with $\lambda$. %and (b) the difference between the $\argmax$ and the $\softmax{\lambda}{\cdot}$ operators. 
% Both components of the error have a dependence on the Lipschitz parameter $\lambda$. The first component of the estimation error increases with increasing $\lambda$. The second component has an inverse relationship with $\lambda$ and also depends on the action gap in the optimal $Q$-function. 
With this technical machinery introduced for the approximate consistency operator and the Bellman operator, %and its softmax augmentation, 
we are now ready to introduce the Sandbox learning algorithm. This is the focus of the following subsection.

\subsection{Sandbox Reinforcement Learning algorithm}\label{subsec:sandbox_RL}
%

% \textcolor{blue}{I feel we need to get to the point in presenting \eqref{eq:algorithm_iteration1} \label{eq:algorithm_iteration2} more quickly. Things are presented in a very long-winded way at the moment. Maybe the more direct thing to do would be: introduce approximate consistency operator, then write the expression for \eqref{eq:algorithm_iteration1}. Then, we follow similar logic for \eqref{eq:algorithm_iteration2}. I feel the current narrative is great but has no chance of fitting in the spatial requirements. Put another way: currently we take almost 3 pages for algorithm derivation. In a NeurIPS paper, we can allocate at most 2, preferably, 1 page, to algorithm derivation.}
The Sandbox learning algorithm is presented in Algorithm \ref{alg:Single_loop_RL}. Throughout the algorithm the superscript $k \in [K]$ refers to the episode, and subscript $t \in [T]$ refers to the timestep inside the episode. Each episode $k$ lasts for $T$ timesteps. The state $s^1_1$ is initialized using distribution $p_1$ and a new state $s^k_{t+1}$ is generated at each timestep $t$ (line \ref{algline:state_gen}), and hence the algorithm evolves over a single sample path (of the generic agent) without re-initialization. The mean-field $\mu^k_t$ and the control policy $\pi^k_t$ are updated at each timestep according to 
% \begin{align}\label{eq:algorithm_iteration1}
% 	\mu^k_t & = \PP_{S(\epsilon^{\net})} \big[(1-c^k_{\mu,t}) \mu^k_{t-1} + \\ & \hspace{2.5cm}  c^k_{\mu,t} \big(  (\hat{P}^k_t)^\top \mu^k_{t-1} \big), \bbone_{\{ t = 1 \}} \big], \nonumber\\
% 	\pi^k_{t} & = (1-c^k_{\pi,t}) \pi^k_{t-1} +  \label{eq:algorithm_iteration2} \\ & \hspace{0.9cm} c^k_{\pi,t} \big((1-\psi^k_t)\softmax{\lambda}{\cdot,Q^k_t} + \psi^k_t \mathds{1}_{|\As|}  \big). \nonumber 
% \end{align}
\begin{align}
	\mu^k_t & = \PP_{S(\epsilon^{\net})} \big[(1-c^k_{\mu,t}) \mu^k_{t-1} + c^k_{\mu,t} \big(  (\hat{P}^k_t)^\top \mu^k_{t-1} \big), \bbone_{\{ t = 1 \}} \big], \label{eq:algorithm_iteration1} \\
	\pi^k_{t} & = (1-c^k_{\pi,t}) \pi^k_{t-1} +  c^k_{\pi,t} \big((1-\psi^k_t)\softmax{\lambda}{\cdot,Q^k_t} + \psi^k_t \mathds{1}_{|\As|}  \big). \label{eq:algorithm_iteration2} 
\end{align}
\noindent %where $c^k_{\mu,t}$ and $c^k_{\pi,t}$ are learning rates defined later. 
The update of mean-field involves the operation $\PP_{S(\epsilon^\net)}[\mu,x]$, which projects $\mu$ onto the $\epsilon$-net $S(\epsilon^\net)$ %(as in \cite{guo2019learning}) 
if and only if $x = 1$. This projection step is performed on the first time-step of each episode $k$. In the analysis (Section \ref{sec:Sandbox_RL_Analysis}) we show that $\epsilon^\net = \Os(\epsilon^2)$ at worst, where $\epsilon > 0$ is the approximation error in the B-MFE. The update of the mean-field is performed using the approximate consistency operator $(\hat{P}^k_t)^\top \mu^k_{t-1}$, and the control policy is updated using the approximate optimality operator $\softmax{\lambda}{\cdot,Q^k_t}$. %The Lipschitz parameter $\lambda^k$ is increased at a logarithmic rate so that $\softmax{\lambda^k}{\cdot}$ approximates $\argmax$ in the limit $k \rightarrow \infty$. 
The control policy updates also involve an exploration noise $\psi^k_t \mathds{1}_{|\As|}$, which results in sufficient exploration of the state-action space (Lemma \ref{lem:suff_exp}) without effecting the convergence bounds (Theorem \ref{thm:conv_bound} \& Corollary \ref{cor:final_bound}). The expression for exploration coefficient $\psi^k_t$ is provided in the proof of Lemma \ref{lem:suff_exp}. %By scheduling $\psi_t = 0$ for time $t=1$ and $\psi_t = 0.2$ for $t > 1$, the exploration noise is added to all timesteps except for the first one (in each episode) to decrease the effect of exploration noise on convergence analysis. 
The learning rates for the update, $c^k_{\mu,t}$ and $c^k_{\pi,t}$, are episodic two time-scale:
\begin{align}\label{eq:step-sizes}
	c^k_{\mu,t} = \frac{c_\mu}{k^\gamma}\frac{1}{t^\zeta}, \hspace{0.2cm} c^k_{\pi,t} = \frac{c_\pi}{k^\theta}\frac{1}{t^\zeta},
\end{align}
where $0 < \theta < \gamma < 1 < \zeta < \infty$. The episodic nature of the learning rates is due to the $1/t^\zeta$ factor, $\zeta > 1$ in both rates, which makes it summable, resulting in slowly time varying MC inside the episode. The two time-scale nature of the learning rate is due to $\theta < \gamma$ where the update of the policy $\pi^k_t$ is faster than that of the mean-field $\mu^k_t$. Furthermore, the learning rates $c^k_{\mu,t}$ and $c^k_{\pi,t}$ are non-summable since $0 < \theta,\gamma<1$. This episodic two time-scale nature is pivotal in proving that Sandbox RL converges to the B-MFE of the MFG as shown in the next section.
\begin{algorithm}[h!]
	\caption{Sandbox RL for MFG}
	\begin{algorithmic}[1] \label{alg:Single_loop_RL}
		\STATE {\bf Initialize}: initial state $s^1_1 \sim p_1$, policy $\pi^1_0$ and mean-field $\mu^1_0$
		% \STATE {{\bf Step-sizes}: 
		% \begin{align*}
		% 		& c^k_{\mu,t} = c^k_\mu \frac{1}{t^\zeta}, c^k_{\pi,t} = c^k_\pi \frac{1}{t^\zeta}, c^k_\mu = \frac{c_\mu}{k^\gamma}, c^k_\pi = \frac{c_\pi}{k^\theta},  0 < \theta < \gamma < 1 < \zeta < \infty, \\
		% 		& \lambda^k = \log(\max\{k,3\} -1 ), \beta_t = \frac{c_\beta}{(t+1)^\nu}, \nu \in (0.5,1], \psi_1 = 0 \text{ and } \psi_t = 0.2, t > 1		\end{align*}}
		\FOR {$k \in \{1,2,\ldots,K\}$}
		\FOR {$t \in \{1,2,\ldots,T\}$}
            \STATE Update $\mu^k_t, \pi^k_{t}$ using \eqref{eq:algorithm_iteration1}, \eqref{eq:algorithm_iteration2} respectively. \label{algline:cp_mf_update}
		% \STATE $\mu^k_t = \PP_{S(\epsilon^\net)}\big[(1-c^k_{\mu,t}) \mu^k_{t-1} + c^k_{\mu,t} \big(  (\hat{P}^k_t)^\top \mu^k_{t-1} \big),1-\psi_t \big]$ \label{algline:mf_update}
		% \STATE $\pi^k_{t} = (1-c^k_{\pi,t}) \pi^k_{t-1} + c^k_{\pi,t} \big((1-\psi_t)\softmax{\lambda^k}{\cdot,Q^k_t} + \psi_t \mathds{1}_{|\As|} \big)$ \label{algline:cp_update}
		\STATE Generate single transition $s^k_{t+1} \sim P(\cdot \mid s^k_t,a^k_t,\mu^k_t)$ and reward $r^k_t = R(s^k_t,a^k_t,\mu^k_t)$ with $a^k_t \sim \pi^k_t(s^k_t,\mu^k_{t})$. \label{algline:state_gen}
		\STATE {\bf Transition probability estimation:} For $(i,j) \in \Ss \times \Ss$ \label{algline:tran_prob_est}
% 		\begin{align}
% 			\hat P^k_{t+1}(i,j) = \frac{N^k_{t}(i,j) + 1/S}{N^k_{t}(i + 1)}, N^k_{t}(i) = \big\lvert \{l \in [t] : s^k_l = i\} \big\rvert,	N^k_{t}(i,j) = \big\lvert \{l \in [t] : s^k_l = i, s^k_{l+1} = j\} \big\rvert,		 \label{eq:P_est}
% 		\end{align}
		\begin{align}
			\hat P^k_{t+1}(i,j) & = \frac{N^k_{t}(i,j) + 1/S}{N^k_{t}(i) + 1}, \label{eq:P_est}
		\end{align}
		where $N^k_{t}(i,j) = \big\lvert \{l \in [t] : s^k_l = i, s^k_{l+1} = j\} \big\rvert, N^k_{t}(i) = \sum_{j \in \Ss} N^k_{t}(i,j)$.
		\STATE {\bf Q-learning: \label{algline:q_learn}
		$Q^k_{t+1}(s^k_t,a^k_t) = (1 - \beta_t) Q^k_t + \beta_t \big( r^k_t + \rho \max_{a \in \As} Q^k_t(s^k_{t+1},a) \big)$}
		\ENDFOR
		\STATE {$\hat{P}^{k+1}_1 = \hat{P}^k_{T+1}, Q^{k+1}_1 = Q^k_{T+1}, \mu^{k+1}_0 = \mu^k_T, \pi^{k+1}_0 = \pi^k_T, s^{k+1}_1 = s^k_{T+1}$}
		\ENDFOR
		\STATE {\bf Output:} Approximate B-MFE $(\frac{1}{K} \sum_{k=1}^{K-1} \pi^k_1, \frac{1}{K} \sum_{k=1}^{K-1} \mu^k_1)$.
	\end{algorithmic}
\end{algorithm}

\section{FINITE TIME BOUNDS FOR SANDBOX LEARNING}
\label{sec:Sandbox_RL_Analysis}
% \textcolor{blue}{the assumptions from throughout the paper prior to this section should all be reallocated to right here. This makes ease of reference much easier. We also need a granular contrast with how our algorithm's analysis differs from prior works at a high level. The more granular contrast of the proof from prior works should be stated after we present the main theorem. That is currently missing. Alongside that discussion, can we please add a table that quantitatively contrasts our theorem/corollaries with related work, especially how the rate is similar/different from most similar settings considered in the literature?}\azedit{[I have added a couple of lines comparing how our algorithm's analysis is different from works in literature. This is in the paragraph after Lemma 2. I have also added a couple of lines discussing the analysis of the main result after Theorem 1. I have kept the comments short due to paper length considerations As for the table I am not sure if we have the whitespace for that, plus contrasting with earlier work is very difficult because our (single sample path) setting hasn't been analyzed before.]}

% \textcolor{blue}{Alec says: what are the theorems most similar in the literature, and why do they break down in our setting? Basically, an intutive and abbreviated discussion like the one you have after Theorem 1 and Corollary 1 should appear at the beginning of the section, followed by a sketch of the over-arching logic of.}
Most results in RL for MFGs  break down in our setting as they assume a time invariant MC. In contrast, concurrent update of the mean-field and the control policy in the Sandbox learning algorithm induces a time-varying MC. In this section we analyze how the \emph{slowly} time-varying MC under the episodic learning rates \eqref{eq:algorithm_iteration1}-\eqref{eq:algorithm_iteration2} is more amenable to analysis and leads to good approximations of $\Gamma^\lambda_1$ and $\Gamma_2$ operators. Toward this end we first prove convergence of the transition probability and $Q$-learning estimation inside each episode $k \in [K]$ in Lemmas \ref{lem:tran_prob_conv} and \ref{lem:Q_learn_conv}. These results are worthy of interest independent of the Sandbox learning algorithm, due to the slowly time-varying MC setting. %This work deals with provable approximation of optimality and consistency operators using a single sample path with a slowly time-varying MC. 
In contrast, earlier works \cite{guo2019learning,xie2021learning,anahtarci2019fitted} deal with approximating just $\Gamma^\lambda_1$ under a time invariant MC. Then in Theorem \ref{thm:conv_bound} we show that good approximation of $\Gamma^\lambda_1$ and $\Gamma_2$ operators (due to good $Q$-learning and transition probability estimation, respectively) results in decreasing average error in policy and mean-field. Finally, in Corollary \ref{cor:final_bound}, we present finite sample analysis for the two time-scale Sandbox learning algorithm. %, which is the main result of the paper.

Lemma \ref{lem:tran_prob_conv} presents error bounds on transition probability estimation \eqref{eq:consistency_estimator} for a slowly time-varying MC, under a communicating MDP condition as given below. 
Assumption \ref{asm:comm} \emph{generalizes} the pre-existing conditions for RL-MFGs in literature. The online RL-MFG works of \cite{guo2019learning} and \cite{xie2021learning} (and references therein \cite{shah2018q,farahmand2016regularized}) 
assume either a covering time assumption or require i.i.d. samples from stationary distribution. Similarly \cite{yardim2022policy} requires ergodicity of the generic agent's MC under any policy. %which is strictly stronger than Communicating MDP assumption. %{\color{blue}Xie et al.} relies on online Q-learning guarantees by {\color{blue}Shah \& Xie 2018, NeurIPS} which requires a covering time assumption and also mentions offline Q-learning works of {\color{blue}Farahmand et al. 2016, JMLR} and {\color{blue}Cai et al. 2019, arXiv} which require i.i.d. samples from stationary distribution. 
The offline RL-MFG works of \cite{anahtarci2019fitted} and \cite{fu2019actor} rely on i.i.d. samples from unique stationary distribution of MC which requires ergodicity. %, which is again a strictly stronger condition than communicating MDPs. 
%Hence the paper has significant contribution as it removes the MF oracle and \emph{also} generalizes the covering time/ergodicity assumptions from literature.
Communicating MDP (Assumption \ref{asm:comm}) is more general than covering time or ergodicity conditions \cite{chandrasekaran2021learning}. Before stating the communicating MDP condition, we introduce the set $S(\epsilon^\net)$ which is a set of mean-field distributions. This set (also termed $\epsilon$-net \cite{guo2019learning} over $\Ps(\Ss)$) defined as $S(\epsilon^\net) = \{\mu^1, \ldots, \mu^{N_\net} \} \subset \Ps(\Ss)$ is a finite set of simplexes over $\Ss$ such that $\lVert \mu - \mu' \rVert_1 \leq \epsilon^\net$ for any $ \mu \in \Ps(\Ss), \exists \mu' \in S(\epsilon^\net)$. The existence of the set is guaranteed due to the compactness of $\Ps(\Ss)$.
% Before discussing Assumption 2, we would like to point out that the main contribution of this paper is, as mentioned earlier, removing the assumption of access to a mean-field oracle. We contend that access to an oracle generating the mean field realizations \cite{guo2019learning,xie2021learning} is stronger relative to Assumption 2, as it implicitly assumes the knowledge of the distribution of all other agents, which never holds in practice. In particular, in the case of financial asset pricing where one uses a mean field to represent the distribution of institutional versus retail investors, it is impossible to have knowledge of the mean field, and instead one must estimate it based on market indicators -- see \cite{vadori2020calibration}.
\begin{assumption} \label{asm:comm} {\bf (Communicating MDPs %\cite{bartlett2012regal}
\cite{arslan2016decentralized})}
    For any mean-field $\tilde\mu \in S(\epsilon^\net)$ (which is a finite set) and any pair of states $s,s' \in \Ss$, there exists a finite horizon $H(\tilde\mu)$ such that for $t \geq H(\tilde\mu)$ there exists a set of actions $\tilde{a}_1,\ldots,\tilde{a}_{t}$,
    \begin{align*}
        & P\big(s_{t} = s' \mid a_1=\tilde{a}_1,\ldots,a_{t}=\tilde{a}_{t}, s_1 = s, \mu = \tilde\mu\big) > 0.
    \end{align*}
\end{assumption} 
Informally Assumption \ref{asm:comm} means that every agent in the game has a path from any state to any other state for mean-fields in $S(\epsilon^\net)$. Assumption \ref{asm:comm} is satisfied in several real-world scenarios. Production of an exhaustible resource by competing producers (e.g. oil) is a typical multi-agent setting where Assumptionm \ref{asm:comm} is satisfied \cite{Guéant2011}, since the agents can achieve any level of reserve by increasing/decreasing their production. Capital accumulation games \cite{fershtman1984capital} and asset management games \cite{lacker2019mean} have a similar structure thus satisfying Assumptionm \ref{asm:comm}. It is also satisfied in cyber-security applications \cite{kolokoltsov2016mean}, as any infection state can be reached by choosing the right policy and a strictly positive MF $\epsilon$-net. 
% This assumption is quite general as it holds in a vast majority of multi-agent scenarios e.g. crowd dynamics \cite{priuli2014first}, where the agent (pedestrian) always has a path between any two states in the state space. 
In Section \ref{sec:numer} we numerically investigate a setting where such an assumption is not satisfied. Next, under the communicating MDP assumption, we prove sufficient exploration of state and action space, under the policy update \eqref{eq:algorithm_iteration2}. %This sufficient exploration condition generalizes covering time assumption in the RL-MFG literature (Lemma 8 \cite{guo2019learning}) .
\begin{lemma} \label{lem:suff_exp}{\bf (Sufficient Exploration)}
    If Assumption \ref{asm:comm} is satisfied, stochastic kernel $P(\cdot \mid s,a,\mu)$ is Lipschitz in $\mu$, and $\zeta$ is large enough, then under the control policy update (Algorithm \ref{alg:Single_loop_RL} line \ref{algline:cp_mf_update}), there exists a $\sigma \in (0,1)$ such that for any $(s,a) \in \lvert \Ss \times \As \rvert$ and $t \geq H:=\max_{\tilde\mu \in S(\epsilon^\net)}H(\tilde\mu)$, $P((s_t,a_t) = (s,a) \mid \Fs_{t-H}) \geq \sigma$.
\end{lemma}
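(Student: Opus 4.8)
The statement says that, conditioned on the history up to time $t-H$, every pair $(s,a)\in\Ss\times\As$ is realized at time $t$ with probability at least a fixed $\sigma>0$. I would build $\sigma$ from two ingredients: (i) a uniform floor on the randomized policy — the exploration noise $\psi^k_t\mathds{1}_{|\As|}$ in \eqref{eq:algorithm_iteration2} guarantees that $\pi^k_t$ assigns probability at least some $\delta_\pi>0$ to every action in every state, for all $k,t$; and (ii) controllability of the state chain — starting from $s_{t-H}$, the agent can be steered to any target state $s$ in exactly $H$ transitions with probability bounded below, by invoking Assumption~\ref{asm:comm} for a nearby $\tilde\mu\in S(\epsilon^\net)$ and transferring it to the (time-varying) kernels actually used along the window via the Lipschitz-in-$\mu$ property. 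Chaining the $H$ steering transitions with the final draw $a_t=a$, and minimizing over the finite sets $\Ss$, $\As$ and $S(\epsilon^\net)$, produces a $\sigma$ that is independent of $t$, $k$ and $(s,a)$.

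\emph{Step 1 (policy floor).} Since \eqref{eq:algorithm_iteration2} is a convex combination with a nonnegative previous iterate and a nonnegative target, $\pi^k_t(a\mid s)\ge(1-c^k_{\pi,t})\pi^k_{t-1}(a\mid s)+c^k_{\pi,t}\,(\text{target})_a$, and the uniform component $\psi^k_t\mathds{1}_{|\As|}$ contributes at least $c^k_{\pi,t}\psi^k_t/|\As|$ to $(\text{target})_a$; moreover $Q^k_t$ stays in $[0,1/(1-\rho)]$ (an immediate induction on \eqref{eq:Q_learn} since $r^k_t\in[0,1]$), so the softmax component is itself bounded below by $\tfrac1{|\As|}e^{-\lambda/(1-\rho)}$. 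With $\psi^k_t$ chosen as in the proof and a full-support initialization $\pi^1_0$, a one-line induction then yields $\pi^k_t(a\mid s)\ge\delta_\pi$ for all $s,a,k,t$, with $\delta_\pi\in(0,1)$ a fixed constant.

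\emph{Step 2 (mean-field stays near the net).} By \eqref{eq:algorithm_iteration1}, inside an episode the mean field begins — after the projection at $t=1$ — in $S(\epsilon^\net)$, and thereafter drifts by at most $\sum_{s\ge 2}c^k_{\mu,s}\lVert(\hat P^k_s)^\top\mu^k_{s-1}-\mu^k_{s-1}\rVert_1\le 2c_\mu\sum_{s\ge2}s^{-\zeta}\le 2c_\mu/(\zeta-1)$, which is arbitrarily small once $\zeta$ is large (this is the purpose of the summable-in-$t$ factor $t^{-\zeta}$). Hence over any window of $H$ consecutive steps the iterates $\{\mu^k_i\}$ remain within an $\lVert\cdot\rVert_1$-ball of radius $\eta$ about a single $\tilde\mu\in S(\epsilon^\net)$ — take $\tilde\mu$ to be the episode-start net point, absorbing the at most one episode boundary contained in a window (using $T>H$ and that the preceding episode also terminated within this drift of $S(\epsilon^\net)$, so the subsequent projection lands near the same $\tilde\mu$) — where $\eta:=\tfrac1{2L_P}\min_{\tilde\mu\in S(\epsilon^\net)}p_{\min}(\tilde\mu)$, $L_P$ the Lipschitz constant of $P(\cdot\mid s,a,\cdot)$ and $p_{\min}(\tilde\mu)$ the smallest positive entry of $P(\cdot\mid\cdot,\cdot,\tilde\mu)$.

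\emph{Step 3 (combine) and the main obstacle.} For that $\tilde\mu$ and $H\ge H(\tilde\mu)$, Assumption~\ref{asm:comm} gives actions $\tilde a_1,\dots,\tilde a_H$ and a state path $s_{t-H}=\sigma_0,\sigma_1,\dots,\sigma_H=s$ along which every transition has positive probability, hence $P(\sigma_i\mid\sigma_{i-1},\tilde a_i,\tilde\mu)\ge p_{\min}(\tilde\mu)$; Lipschitzness together with $\lVert\mu^k_i-\tilde\mu\rVert_1\le\eta$ then gives $P(\sigma_i\mid\sigma_{i-1},\tilde a_i,\mu^k_{\cdot})\ge p_{\min}(\tilde\mu)/2$. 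Multiplying these $H$ transition probabilities by the policy floor $\delta_\pi$ at each of the $H$ steering actions and at the final action $a_t=a$ yields $P((s_t,a_t)=(s,a)\mid\Fs_{t-H})\ge\delta_\pi^{\,H+1}\big(p_{\min}(\tilde\mu)/2\big)^{H}$, so one may take $\sigma:=\delta_\pi^{\,H+1}\min_{\tilde\mu\in S(\epsilon^\net)}\big(p_{\min}(\tilde\mu)/2\big)^{H}\in(0,1)$, positive because $S(\epsilon^\net)$ is finite. The hard part is Step 2: controlling the mean-field drift over \emph{every} length-$H$ window uniformly in the episode index (so that the early episodes, where the slow-time-scale rate $c_\mu/k^\gamma$ is largest, are also covered) and handling the episode boundaries cleanly. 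This is exactly where the episodic structure is indispensable — the projection onto $S(\epsilon^\net)$ at each episode start anchors the mean field, while the fast-decaying in-episode rate $t^{-\zeta}$ with $\zeta$ large keeps it from wandering — and it is also what legitimizes transferring the communicating property of the fixed $\tilde\mu$ to the genuinely time-varying kernels encountered along the window.
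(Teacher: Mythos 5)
Your proposal is correct and follows essentially the same route as the paper's proof: a uniform policy floor coming from the exploration noise $\psi^k_t\mathds{1}_{|\As|}$, combined with a Lipschitz transfer of the communicating-MDP property from the projected net point $\mu^k_1\in S(\epsilon^\net)$ to the actual slowly drifting kernels (the drift being $\Os(2^{-\zeta})$ by summability of $t^{-\zeta}$), and then chaining $H$ lower-bounded transitions. The only differences are bookkeeping: you obtain the policy floor by a min-preserving convexity induction (assuming full-support initialization) rather than the paper's product-sum recursion, and you work with single-step minimum positive entries along one positive-probability path instead of the paper's multi-step reach probability $\underline P$ — neither changes the substance, and your explicit treatment of windows crossing episode boundaries is a point the paper leaves implicit.
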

	% 	{\color{gray}\begin{assumption}\label{asm:suff_exp}{\bf (Sufficient Exploration)}
	% 	For any mean-field $\mu \in \Ps(\Ss)$ and policy of the form $(1-\psi)\pi + \psi \bbone_{\{\As\}}$ where $\pi : \Ss \times \Ps(\Ss) \rightarrow \Ps(\As)$ and $\psi \in (0,1)$, there exists a $\sigma \in (0,1)$ and a positive integer $\tau$ such that for any $(s,a) \in \lvert \Ss \times \As \rvert$ and $t \geq \tau$, $\PP((s_t,a_t) = (s,a) \mid \Fs_{t-\tau}) \geq \sigma$.
	% \end{assumption}}
%	single sample path paper, also policy churn paper.
Lemma \ref{lem:suff_exp} implies that the communicating MDP assumption coupled with the policy update \eqref{eq:algorithm_iteration2}, is more general than the sufficient exploration condition used in Q-learning for MDPs (\cite{qu2020finite} Assumption 3) as well as for $N$-player stochastic games (\cite{hu2003nash} Assumption 1). Furthermore, it is more general than an ergodicity assumption used in the stochastic optimization literature \cite{srikant2019finite}. We further note that the sufficient exploration condition has also been used in two time-scale settings in the literature \cite{wu2020finite}. %The sufficient exploration condition is useful for $Q$-learning for $N$-player stochastic games (Assumption 1 \cite{hu2003nash}) %. Assumption \ref{asm:suff_exp} 
% and is similar to the uniform finite concentrability assumption in RL in the MFG literature \cite{xie2021learning}.%Indeed, using Proposition 3 in \cite{qu2020finite} we can show that for a given mean-field $\mu$ and policy $\pi$ if the MC of the agent is ergodic then it also satisfies Assumption 2. 
%A recent work \cite{schaul2022phenomenon} presents the empirical finding that greedy policy update as in (line \ref{algline:cp_update} Algorithm \ref{alg:Single_loop_RL}) causes a policy churn which induces sufficient exploration of the state space. %{In games without sufficient exploration, there might exist some inaccessible regions of the state space with large estimation errors. But the average performance loss of the agent may be small since the agent is less likely to visit those regions due to their inaccessibility.}
% Failure to satisfy this condition may lead to uneven exploration of the state and action spaces, resulting in bad transition probability estimates.  
% The transition probability estimation was shown to converge in \cite{hsu2019mixing} for the time invariant MC. 
    Next we quantify the error in transition probability estimation in Lemma \ref{lem:tran_prob_conv} under Assumption \ref{asm:comm} and Lipschitz conditions on transition probability $P$ \cite{angiuli2022unified}. The estimation error is denoted by $\epsilon^k_P$, and is the norm of the difference between the transition probability estimate $\hat P^k_T$ and the true transition probability induced by the control policy and the mean-field at the first timestep ($\pi^k_1, \mu^k_1$, respectively). 
\begin{lemma} \label{lem:tran_prob_conv}
	Given that Assumption \ref{asm:comm} is satisfied and transition probability $P_{\pi,\mu}$ is Lipschitz in policy $\pi$ and mean-field $\mu$ such that $\lVert P_{\pi,\mu} - P_{\pi',\mu} \rVert_F \leq L^\pi_P\lVert \pi - \pi' \rVert_{TV}$ and $\lVert P_{\pi,\mu} - P_{\pi,\mu'} \rVert_F \leq L^\mu_P\lVert \mu - \mu' \rVert_1$, the error in transition probability estimation for episode $k$ is
	% \begin{align*}
	% 	& \epsilon^k_P := \lVert \hat{P}^k_{T} - P_{\pi^k_1,\mu^k_1} \rVert_F \\& = \tilde\Os(T^{-1/2}) + \tilde\Os(T^{-1}) + \Os(2^{1-\zeta})
	% \end{align*}
        \begin{align*}
		& \epsilon^k_P := \lVert \hat{P}^k_{T} - P_{\pi^k_1,\mu^k_1} \rVert_F = \tilde\Os(T^{-1/2}) + \tilde\Os(T^{-1}) + \Os(2^{1-\zeta})
	\end{align*}
	with probability at least $1-\delta_P$ where %$\tilde{\Os}$ hides logarithmic terms and 
 $P_{\pi^k_1,\mu^k_1}$ is the transition probability under control law $\pi^k_1$ and mean-field $\mu^k_1$.
\end{lemma}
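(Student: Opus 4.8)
The plan is to write $\hat{P}^k_T - P_{\pi^k_1,\mu^k_1}$ as a sum of three contributions — a martingale sampling error, a within-episode drift of the policy/mean-field, and the Laplace-smoothing bias — and to bound each entrywise, the key external ingredient being the sufficient-exploration bound of Lemma \ref{lem:suff_exp}.

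First I would control the drift of the chain inside episode $k$. From the updates \eqref{eq:algorithm_iteration1}--\eqref{eq:algorithm_iteration2}, since $(\hat{P}^k_m)^\top\mu^k_{m-1}$ and $(1-\psi^k_m)\softmax{\lambda}{\cdot,Q^k_m}+\psi^k_m\mathds{1}_{|\As|}$ live in a bounded set, each one-step increment is $\lVert\mu^k_m-\mu^k_{m-1}\rVert_1=\Os(c^k_{\mu,m})$ and $\lVert\pi^k_m-\pi^k_{m-1}\rVert_{TV}=\Os(c^k_{\pi,m})$ for $m\ge2$ (the projection acts only at $m=1$). Summing and using \eqref{eq:step-sizes} together with the dyadic estimate $\sum_{m\ge2}m^{-\zeta}\le\sum_{j\ge1}2^{j}\,2^{-j\zeta}=2^{1-\zeta}/(1-2^{1-\zeta})=\Os(2^{1-\zeta})$ gives $\sup_{l\le T}\big(\lVert\mu^k_l-\mu^k_1\rVert_1+\lVert\pi^k_l-\pi^k_1\rVert_{TV}\big)=\Os(2^{1-\zeta})$, and feeding this through the assumed Lipschitz bounds on $P_{\pi,\mu}$ yields $\sup_{l\le T}\lVert P_{\pi^k_l,\mu^k_l}-P_{\pi^k_1,\mu^k_1}\rVert_F=\Os(2^{1-\zeta})$. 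This is precisely where the summability of the in-episode rate ($\zeta>1$) is used.

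Next I would fix a pair $(i,j)\in\Ss\times\Ss$ and decompose. Let $\Fs^k_l$ be the history through the update of $(\mu^k_l,\pi^k_l)$ but before $(a^k_l,s^k_{l+1})$ are drawn, so that $P_{\pi^k_l,\mu^k_l}$ is $\Fs^k_l$-measurable and $\EE[\bbone\{s^k_l=i,\,s^k_{l+1}=j\}\mid\Fs^k_l]=\bbone\{s^k_l=i\}P_{\pi^k_l,\mu^k_l}(i,j)$. Writing $N:=N^k_T(i)$, $N_{ij}:=N^k_T(i,j)$, $P:=P_{\pi^k_1,\mu^k_1}(i,j)$ and $M_{ij}:=\sum_{l\le T}\bbone\{s^k_l=i\}P_{\pi^k_l,\mu^k_l}(i,j)$, one has
\[
\hat{P}^k_T(i,j)-P=\frac{(N_{ij}-M_{ij})+(M_{ij}-NP)+(1/S-P)}{N+1}.
\]
The increments $\bbone\{s^k_l=i,\,s^k_{l+1}=j\}-\bbone\{s^k_l=i\}P_{\pi^k_l,\mu^k_l}(i,j)$ form a martingale-difference sequence bounded by $1$ with predictable quadratic variation at most $N^k_T(i)\le T$, so Freedman's inequality gives $|N_{ij}-M_{ij}|=\tilde\Os(\sqrt{T})+\tilde\Os(1)$; by the drift bound $|M_{ij}-NP|\le N\sup_{l\le T}\lVert P_{\pi^k_l,\mu^k_l}-P_{\pi^k_1,\mu^k_1}\rVert_F=N\cdot\Os(2^{1-\zeta})$; and $|1/S-P|\le1$. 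Thus, once I show $N=\Omega(T)$, dividing by $N+1$ produces $\tilde\Os(T^{-1/2})+\tilde\Os(T^{-1})+\Os(2^{1-\zeta})$ for this entry.

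To get $N^k_T(i)=\Omega(T)$ I would invoke Lemma \ref{lem:suff_exp}: partition $\{H,\dots,T\}$ into $\Theta(T/H)$ disjoint length-$H$ blocks and set $Y_j=\bbone\{s^k_{jH}=i\}$; since $P(s^k_t=i\mid\Fs_{t-H})\ge\sigma$ we have $\EE[Y_j\mid\Fs_{(j-1)H}]\ge\sigma$, and $\sum_j\big(Y_j-\EE[Y_j\mid\Fs_{(j-1)H}]\big)$ is a bounded martingale, so Azuma's inequality gives $N^k_T(i)\ge\sum_jY_j\ge\sigma T/(4H)$ with high probability for $T$ past a threshold depending on $\sigma,H$ and the confidence level. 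A union bound over the $S^2$ Freedman events and the $S$ visit-count events (each at level $\delta_P/(S^2+S)$) makes everything hold simultaneously with probability at least $1-\delta_P$, and passing from entrywise bounds to $\lVert\cdot\rVert_F$ over finitely many entries only changes constants. The main obstacle is the time-varying chain: classical transition-probability estimation presumes a fixed kernel with a known stationary distribution, whereas here I must replace the stationary visit-frequency lower bound with the conditional bound $\sigma$ from Lemma \ref{lem:suff_exp} through a blocking/martingale argument and, simultaneously, carry the kernel drift $P_{\pi^k_l,\mu^k_l}-P_{\pi^k_1,\mu^k_1}$ pathwise rather than assuming it away — the two-time-scale, summable-in-$t$ step sizes are exactly what make that drift controllable, at the $\Os(2^{1-\zeta})$ level.
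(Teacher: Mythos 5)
Your proposal is correct and follows essentially the same route as the paper's proof: fix an entry $(i,j)$, split $\hat{P}^k_T(i,j)-P_{\pi^k_1,\mu^k_1}(i,j)$ into a martingale-difference sampling term (your $N_{ij}-M_{ij}$ is exactly the paper's $S_T$), a kernel-drift term controlled via the Lipschitz assumptions and the summable in-episode step sizes yielding $\Os(2^{1-\zeta})$, and the $1/S$ smoothing bias, then apply Freedman's inequality, lower-bound $N^k_T(i)=\Omega(T/H)$ by blocking plus Azuma under Lemma \ref{lem:suff_exp}, and finish with a union bound over entries. The only cosmetic differences are that you bound the predictable quadratic variation crudely by $T$ where the paper computes $V_T$ exactly (both give the same $\tilde\Os(T^{-1/2})$ rate), and you use a dyadic rather than integral estimate for $\sum_m m^{-\zeta}$.
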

\azedit{The Lipschitz conditions in Lemma 1 will follow if transition probability is continuous in the mean-field $\mu$ and the policy $\pi(\cdot \mid s)$, due to the compactness of mean-field and policy spaces, $\Ps(\Ss)$ and $\Ps(\As)$, respectively. And in most real-world examples, such as asset \cite{reis2019forward} and crowd management \cite{priuli2014first}, continuity of transition probability w.r.t. mean-field and policy is ensured.} 
%Lipschitz conditions for the transition probability have been considered in literature \cite{angiuli2022unified}. 
Lemma \ref{lem:tran_prob_conv} shows that the estimation error $\epsilon^k_P$ contains a drift term $\Os(2^{1-\zeta})$ due to the slowly time-varying MC setting which can be decreased by increasing the inter-episodic learning parameter $\zeta$. Aside from drift, $\epsilon^k_P$ grows at $\tilde\Os(T^{-1/2}) + \tilde\Os(T^{-1})$, where $\tilde \Os$ hides logarithmic factors. Hence increasing the duration of episode $T$ will result in decrease in estimation error. The proof of the lemma is given in the Appendix and relies on Freedman's inequality \cite{freedman1975tail}. % The bound on $\epsilon^k_P$ coincides with the bound in \cite{hsu2019mixing} for the time-invariant MC setting. 

Next we analyze the error in $Q$-learning estimation \eqref{eq:Q_learn} for each episode $k \in [K]$. This update has been shown to converge to the optimal $Q$ function under a sufficient exploration condition (stronger than Assumption \ref{asm:comm}) for a time invariant MC \cite{even2003learning,qu2020finite}. In Lemma \ref{lem:Q_learn_conv} we show that this update converges (albeit with a drift) under the comunicating MDP condition for the slowly time-varying MC setting and with $0.5 < \nu \leq 1$. This estimation error is denoted by $\epsilon^k_Q$, and is the norm of the difference between the estimate of the optimal $Q$-function $Q^k_T$ and the true $Q$-function $Q^{*,k}_1 := Q^*_{\mu^k_1}$ for the MDP induced by the mean-field $\mu^k_1$. As in Lemma \ref{lem:tran_prob_conv}, a drift term $\Os(2^{1-\zeta})$ creeps in due to the slowly time-varying MC setting.
\begin{lemma} \label{lem:Q_learn_conv}
	Under Assumption \ref{asm:comm}, %and $0.5 < \nu \leq 1 < \zeta$ 
	the estimation error in $Q$-learning for episode $k$ is
	% \begin{align*}
	% 	& \epsilon^k_Q := \lVert Q^k_{T} - Q^{*,k}_1 \rVert_\infty \\
 %  & = \Os(T^{1-2\nu}) + \Os(T^{1-\zeta-\nu}) + \tilde \Os(T^{1/2-\nu}) + \Os(2^{1-\zeta})
	% \end{align*}
        \begin{align*}
		& \epsilon^k_Q := \lVert Q^k_{T} - Q^{*,k}_1 \rVert_\infty  = \Os(T^{1-2\nu}) + \Os(T^{1-\zeta-\nu}) + \tilde \Os(T^{1/2-\nu}) + \Os(2^{1-\zeta})
	\end{align*}
	with probability at least $1-\delta_Q$.%, where $\tilde{\Os}$ hides logarithmic terms. %$Q^{*,k}_1 := Q^*_{\mu^k_1}$ is the optimal $Q$-function for the MDP induced by mean-field $\mu^k_1$ and 
	
\end{lemma}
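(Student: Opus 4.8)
The plan is to treat the inner loop of episode $k$ as an asynchronous stochastic-approximation recursion that tracks a \emph{slowly drifting} fixed point, and to adapt the finite-time analysis of asynchronous $Q$-learning \cite{even2003learning,qu2020finite} to this time-varying setting.

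\emph{Step 1 (preliminaries and error split).} I would first record two facts. (i) An a priori bound $\sup_{k,t}\lVert Q^k_t\rVert_\infty \le c/(1-\rho)$, proved by induction on \eqref{eq:Q_learn} using $r^k_t\in[0,1]$, $\rho<1$. (ii) Lipschitz continuity of the optimal $Q$-function in the mean-field, $\lVert Q^*_\mu - Q^*_{\mu'}\rVert_\infty \le L_Q\lVert\mu-\mu'\rVert_1$, a standard MDP perturbation bound following from Lipschitzness of $P$ (and $R$) in $\mu$ and $\rho$-contractivity of the Bellman operator. Then split $\epsilon^k_Q \le \lVert Q^k_T - Q^*_{\mu^k_T}\rVert_\infty + \lVert Q^*_{\mu^k_T} - Q^*_{\mu^k_1}\rVert_\infty$. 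The second term is the \emph{terminal bias}: by (ii) and telescoping \eqref{eq:algorithm_iteration1} (no projection for $t\ge 2$), $\lVert\mu^k_T-\mu^k_1\rVert_1\le\sum_{t=2}^{T}\lVert\mu^k_t-\mu^k_{t-1}\rVert_1\le\sum_{t=2}^{T}2c^k_{\mu,t}\le 2c_\mu\sum_{t\ge 2}t^{-\zeta}=\Os(2^{1-\zeta})$, which is exactly the stated drift. This is where the episodic choice $c^k_{\mu,t}\propto t^{-\zeta}$, $\zeta>1$, is essential: it makes the mean-field path summable inside the episode. It remains to bound the tracking error $\Delta_T:=\lVert Q^k_T-Q^*_{\mu^k_T}\rVert_\infty$.

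\emph{Step 2 (a moving-target SA recursion).} Rewrite \eqref{eq:Q_learn} as $Q^k_{t+1}=Q^k_t+\beta_t\Lambda^k_t\big(\mathcal T_{\mu^k_t}Q^k_t-Q^k_t+w^k_t\big)$, where $\mathcal T_\mu$ is the Bellman optimality operator of the MDP with mean-field $\mu$ (a $\rho$-contraction in $\lVert\cdot\rVert_\infty$ with fixed point $Q^*_\mu$), $\Lambda^k_t$ the $0/1$ diagonal indicator of the updated pair $(s^k_t,a^k_t)$, and $w^k_t$ the martingale-difference noise from sampling $s^k_{t+1}\sim P(\cdot\mid s^k_t,a^k_t,\mu^k_t)$ — bounded by (i), with $\EE[w^k_t\mid\Fs_t]=0$ since $\mu^k_t$ is $\Fs_t$-measurable. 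The update is off-policy: $\pi^k_t$ only selects the coordinate $\Lambda^k_t$ and does not enter the target, so the only source of non-stationarity of the fixed point is $\mu^k_t$. Using $Q^k_{t+1}-Q^*_{\mu^k_{t+1}}=(Q^k_{t+1}-Q^*_{\mu^k_t})+(Q^*_{\mu^k_t}-Q^*_{\mu^k_{t+1}})$, $\rho$-contractivity, and (ii) with $\lVert\mu^k_{t+1}-\mu^k_t\rVert_1\le 2c^k_{\mu,t}=\Os(t^{-\zeta})$, I would obtain a (frame-wise) contraction inequality $\Delta_{t+1}\le(1-\bar c\,\beta_t)\Delta_t+\beta_t\xi^k_t+\Os(t^{-\zeta})$.

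\emph{Step 3 (asynchrony and unrolling).} To pass from the coordinate-wise update to this scalar recursion I invoke Lemma \ref{lem:suff_exp}: every $(s,a)$ is visited with probability at least $\sigma$ in every window of length $H$, so over frames of length $\Os\big(\tfrac{H}{\sigma}\log(|\Ss||\As|/\delta)\big)$ all coordinates are refreshed w.h.p. and visit counts concentrate — the standard reduction of asynchronous to effectively synchronous $Q$-learning, the slowdown absorbed into $\bar c$ and logarithmic factors; a union bound over frames and episodes gives confidence $1-\delta_Q$. Unrolling $\Delta_{t+1}\le(1-\bar c\beta_t)\Delta_t+\beta_t\xi^k_t+\Os(t^{-\zeta})$ with $\beta_t\asymp t^{-\nu}$, $\nu\in(1/2,1]$, gives three pieces: a transient/second-order term $\Os(T^{1-2\nu})$ (the initialization contraction $\prod_{t\le T}(1-\bar c\beta_t)\Delta_1$ — super-polynomially small for $\nu<1$, $\asymp T^{-\bar cc_\beta}$ for $\nu=1$ — together with accumulated $\beta_t^2$-scaled terms); a martingale term bounded by Freedman's inequality \cite{freedman1975tail} over the last $\Theta(T)$ steps where the product weights are $\Theta(1)$, of order $\tilde\Os\big(\sqrt{\sum_{t\gtrsim T/2}\beta_t^2}\big)=\tilde\Os(T^{1/2-\nu})$; and the accumulated target drift $\sum_t\big(\prod_{s>t}(1-\bar c\beta_s)\big)\Os(t^{-\zeta})$, concentrated on the last $\Theta(T^\nu)$ steps, of order $\Os(T^{1-\zeta-\nu})$. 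Adding the $\Os(2^{1-\zeta})$ terminal bias from Step 1 yields the claim (among the polynomials, $\tilde\Os(T^{1/2-\nu})$ dominates for $\nu\in(1/2,1]$, the others being refinements).

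\emph{Main obstacle.} The delicate part is carrying out Steps 2–3 together: tracking a moving fixed point through an \emph{asynchronous} stochastic-approximation recursion while keeping the bound high-probability in $\lVert\cdot\rVert_\infty$. One cannot apply scalar SA results coordinate-by-coordinate without first paying for exploration via Lemma \ref{lem:suff_exp}, and one must verify that the episodic step-sizes ($\zeta>1$) keep the per-step drift $\Os(t^{-\zeta})$ summable so that it injects only the lower-order $\Os(T^{1-\zeta-\nu})$ and $\Os(2^{1-\zeta})$ terms rather than accumulating to an $\Os(1)$ bias.
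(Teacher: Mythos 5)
Your proposal follows essentially the same route as the paper's proof: split $\epsilon^k_Q$ into a tracking error against the drifting fixed point $Q^*_{\mu^k_t}$ plus a terminal bias controlled by the Lipschitzness of $Q^*_\mu$ in $\mu$ and the summable intra-episode step-sizes, then analyze the inner loop as asynchronous stochastic approximation (the paper conditions on $\Fs_{t-\tau}$ via $D_t=\EE[e_{i_t}^\top e_{i_t}\mid\Fs_{t-\tau}]$ and uses an Azuma--Hoeffding bound for shifted martingale difference sequences, following \cite{qu2020finite}, where you describe an equivalent frame-based reduction with Freedman). The decomposition, the role of Lemma \ref{lem:suff_exp}, the simulation-lemma Lipschitz bound, and the origin of each of the four error terms all match the paper's argument.
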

The slowly time-varying MC also contributes $\Os(T^{1-\zeta-\nu})$ error component but that is dominated by the $\Os(T^{1-2\nu})$ term due to $\nu \leq 1 < \zeta$. The error terms, which are $ \Os(T^{1-2\nu})$ and $\tilde \Os(T^{1/2-\nu})$, are decreasing for increasing $T$, and hence to get a small $\epsilon^k_Q$ we need a large enough $T$. %These bounds coincide with bounds in \cite{qu2020finite} for the time invariant MC setting. We also generalize the results of \cite{qu2020finite} to $0.5 < \nu \leq 1$ which are observed to have better empirical success. 
Combining the bounds from Lemmas \ref{lem:tran_prob_conv} and \ref{lem:Q_learn_conv} we can surmise that given that the inter-episode learning parameter $\zeta$ and the episode length $T$ are large enough, the transition probability estimation and $Q$-learning will be good enough, leading to good approximations of the consistency and optimality operators, $\Gamma_2$ and $\Gamma^\lambda_1$, respectively.

Now we are in a position to present Theorem \ref{thm:conv_bound}, relying on good approximations of the consistency and optimality operators. Theorem \ref{thm:conv_bound} below bounds the average error in policy $e^k_\pi := \lVert \pi^k_1 - \Gamma^\lambda_1(\mu^k_1) \rVert_{TV}$ and mean-field $e^k_\mu := \lVert \mu^k_1 - \mu^* \rVert_1$ over episodes $k \in [K]$, given that $\epsilon^k_P \leq \epsilon_P, \epsilon^k_Q \leq \epsilon_Q / \log(K)$ for some $\epsilon_P, \epsilon_Q > 0$.
\begin{theorem} \label{thm:conv_bound}
	Let the approximation errors be denoted by $e^k_\pi := \lVert \pi^k_1 - \Gamma^\lambda_1(\mu^k_1) \rVert_{TV}$ and $e^k_\mu := \lVert \mu^k_1 - \mu^* \rVert_1$ \azedit{and $\epsilon^\net \leq (c_\mu \bar{d} \epsilon)/K^\gamma$ for $\epsilon > 0$}. Under Assumptions \ref{asm:contrct}-\ref{asm:comm}, with the estimation errors satisfying $\epsilon^k_P \leq \epsilon_P, \epsilon^k_Q  \leq \epsilon_Q / \lambda,$ for some $\epsilon_P, \epsilon_Q > 0$, the average approximation errors decrease at the following rates:
	% \begin{align*}
	% \frac{1}{K}\sum_{k=1}^{K-1} e^k_\pi = & \Os(K^{\theta-1})   + \Os(\epsilon_Q) + \Os (K^{\theta-\gamma}) \\ %+ \Os(K^{-\bar{\Delta}})  \\
 %  & \hspace{2cm} + \Os(K^{-1})  + \Os(2^{1-\zeta}), \\
	% \frac{1}{K}\sum_{k=1}^{K-1} e^k_\mu = &\Os(K^{\gamma-1}) \azedit{+ \Os(\epsilon)} + \Os(\epsilon_P) + \frac{1}{K}\sum_{k=1}^{K-1} e^k_\pi%  + \Os(K^{-1})
	% \end{align*}
 	\begin{align*}
	\frac{1}{K}\sum_{k=1}^{K-1} e^k_\pi = & \Os(K^{\theta-1})   + \Os(\epsilon_Q) + \Os (K^{\theta-\gamma}) + \Os(K^{-1})  + \Os(2^{1-\zeta}), \\
	\frac{1}{K}\sum_{k=1}^{K-1} e^k_\mu = &\Os(K^{\gamma-1}) \azedit{+ \Os(\epsilon)} + \Os(\epsilon_P) + \frac{1}{K}\sum_{k=1}^{K-1} e^k_\pi
	\end{align*}
with probability at least $1-\delta_Q$, where $ 0 < \theta < \gamma < 1 < \zeta < \infty$.% and $\bar{\Delta}$ is the uniform action gap in Assumption \ref{asm:act_gap}.
\end{theorem}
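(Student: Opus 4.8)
The plan is to track, for each episode $k$, the fast-scale policy error $e^k_\pi=\|\pi^k_1-\Gamma^\lambda_1(\mu^k_1)\|_{TV}$ and the slow-scale mean-field error $e^k_\mu=\|\mu^k_1-\mu^*\|_1$ (with $(\pi^*,\mu^*)$ the B-MFE, unique under Assumption~\ref{asm:contrct}), establish one-step contraction inequalities for each, and average them over $k$. The first ingredient is that, thanks to the summable factor $t^{-\zeta}$ ($\zeta>1$) in the inner rates of \eqref{eq:step-sizes}, each episode barely moves the iterates: since every inner update moves a probability vector by at most twice its step size, $\|\pi^k_T-\pi^k_1\|_{TV}\le 2c_\pi k^{-\theta}\sum_{t\ge 2}t^{-\zeta}=\Os(2^{1-\zeta}k^{-\theta})$ and $\|\mu^k_T-\mu^k_1\|_1=\Os(2^{1-\zeta}k^{-\gamma})$ (the $\epsilon$-net projection in \eqref{eq:algorithm_iteration1} fires only at $t=1$ and does not enter this estimate). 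With the hand-offs $(\mu^{k+1}_0,\pi^{k+1}_0,Q^{k+1}_1,\hat P^{k+1}_1)=(\mu^k_T,\pi^k_T,Q^k_{T+1},\hat P^k_{T+1})$, the episode-to-episode map is then one $t=1$ step of \eqref{eq:algorithm_iteration1}--\eqref{eq:algorithm_iteration2} composed with an $\Os(2^{1-\zeta})$-scale perturbation, so in particular $\|\mu^{k+1}_1-\mu^k_1\|_1=\Os(k^{-\gamma})+\epsilon^\net$ and $\|\pi^{k+1}_1-\pi^k_1\|_{TV}=\Os(k^{-\theta})$.

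For the policy I would expand the $t=1$ update \eqref{eq:algorithm_iteration2} of episode $k+1$ around $\Gamma^\lambda_1(\mu^k_1)=\softmax{\lambda}{\cdot,Q^*_{\mu^k_1}}$, using: $\softmax{\lambda}{\cdot,\cdot}$ is $2\lambda$-Lipschitz in its $Q$-argument (from \eqref{eq:softmax}) together with $\epsilon^k_Q\le\epsilon_Q/\lambda$ of Lemma~\ref{lem:Q_learn_conv}; $d_1$-Lipschitzness of $\Gamma^\lambda_1$ (Assumption~\ref{asm:contrct}) with the bound on $\|\mu^{k+1}_1-\mu^k_1\|_1$ above; the magnitude of the exploration term $\psi^k_t$ fixed in Lemma~\ref{lem:suff_exp}; and the within-episode drift. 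This gives
\[
e^{k+1}_\pi\le(1-c^{k+1}_{\pi,1})\,e^k_\pi+c^{k+1}_{\pi,1}\,\Os(\epsilon_Q+\psi^{k+1}_1)+\Os(2^{1-\zeta}k^{-\theta})+\Os(k^{-\gamma})+d_1\epsilon^\net.
\]
For the mean field I would use the fixed-point form \eqref{eq:mu_fixed_point}, that $\hat P^{k+1}_1$ is row-stochastic (so $\|(\hat P^{k+1}_1)^\top w\|_1\le\|w\|_1$) and $\epsilon^k_P\le\epsilon_P$-close to $P_{\pi^k_1,\mu^k_1}$ by Lemma~\ref{lem:tran_prob_conv}, and the three-point estimate from Assumption~\ref{asm:contrct} and the identities $\mu^*=\Gamma_2(\pi^*,\mu^*)$, $\pi^*=\Gamma^\lambda_1(\mu^*)$: since $\|\pi^k_1-\pi^*\|_{TV}\le e^k_\pi+d_1 e^k_\mu$, one has $\|\Gamma_2(\pi^k_1,\mu^k_1)-\mu^*\|_1\le d_2 e^k_\pi+(d_1d_2+d_3)e^k_\mu=d_2 e^k_\pi+d\,e^k_\mu$ with $d<1$. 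Bounding the projection step by $\epsilon^\net$ yields
\[
e^{k+1}_\mu\le\bigl(1-(1-d)c^{k+1}_{\mu,1}\bigr)e^k_\mu+c^{k+1}_{\mu,1}d_2\,e^k_\pi+c^{k+1}_{\mu,1}\,\Os(\epsilon_P)+\Os(2^{1-\zeta}k^{-\gamma})+\epsilon^\net.
\]

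Both recursions have the shape $x_{k+1}\le(1-a_k)x_k+a_k z_k+r_k$, with $a_k=c^{k+1}_{\pi,1}=\Theta(k^{-\theta})$ in the first and $a_k=(1-d)c^{k+1}_{\mu,1}=\Theta(k^{-\gamma})$ in the second, and I would apply a standard discrete Gronwall / stochastic-approximation averaging argument — unrolling the products and using $\sum_{j\le i<K}\prod_{j<l\le i}(1-a_l)=\Os(a_j^{-1})$, with a horizon-truncation contribution of order $a_{K-1}^{-1}/K$ near $i=K$ — to obtain $\tfrac1K\sum_{k<K}x_k=\Os(K^{\theta-1}\text{ resp. }K^{\gamma-1})+\Os\!\bigl(\tfrac1K\sum_{k<K}z_k\bigr)+\Os\!\bigl(\tfrac1K\sum_{k<K}r_k/a_k\bigr)$. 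For the policy, $z_k=\Os(\epsilon_Q+\psi^{k+1}_1)$ averages to $\Os(\epsilon_Q)$ (the $\psi$-part is chosen in Lemma~\ref{lem:suff_exp} to be dominated), while $r_k/a_k=\Os(2^{1-\zeta})+\Os(k^{\theta-\gamma})+\Os(\epsilon^\net k^\theta)$ averages to $\Os(2^{1-\zeta})+\Os(K^{\theta-\gamma})+\Os(\epsilon^\net K^\theta)$; with $\epsilon^\net\le c_\mu\bar{d}\epsilon/K^\gamma$ the last folds into $\Os(K^{\theta-\gamma})$, and adding an $\Os(K^{-1})$ bookkeeping term gives the first claimed rate. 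For the mean field, $z_k=\tfrac{d_2}{1-d}e^k_\pi+\tfrac{1}{1-d}\Os(\epsilon_P)$ averages to a constant multiple of $\tfrac1K\sum_{k<K}e^k_\pi$ plus $\Os(\epsilon_P)$ — the statement displays the former simply as $\tfrac1K\sum_{k<K}e^k_\pi$ — and $r_k/a_k=\Os(2^{1-\zeta})+\Os(\epsilon^\net k^\gamma)$ averages to $\Os(2^{1-\zeta})+\Os(\epsilon^\net K^\gamma)=\Os(\epsilon)$; together with $\Os(K^{\gamma-1})$ this is the second rate. Assumption~\ref{asm:comm} enters only through Lemmas~\ref{lem:suff_exp}--\ref{lem:Q_learn_conv}, and the $1-\delta_Q$ confidence is inherited from Lemmas~\ref{lem:tran_prob_conv}--\ref{lem:Q_learn_conv} after a union bound over the $K$ episodes.

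The delicate point is the averaging of the mean-field recursion: the policy error enters it multiplied by the \emph{mean-field} step size $c^{k+1}_{\mu,1}=\Theta(k^{-\gamma})$, not by $c^{k+1}_{\pi,1}$, so a naive telescoping (normalizing by $\min_k a_k\sim K^{-\gamma}$) would leave a spurious $K^\gamma$ factor in front of $\tfrac1K\sum e^k_\pi$. One must instead use the full exponentially-weighted sum, where the weight attached to episode $j$ is $\Theta(a_j^{-1})$, so that $\sum_j c^{j+1}_{\mu,1}d_2 e^j_\pi\cdot\Theta(a_j^{-1})=\Theta\!\bigl(\tfrac{d_2}{1-d}\bigr)\sum_j e^j_\pi$ collapses to exactly a constant times the time-average of $e^k_\pi$, and one must likewise verify that the drift and $\epsilon^\net$ terms really become $\Os(2^{1-\zeta})$ and $\Os(\epsilon)$ after division by $a_j$ (using $\epsilon^\net\le c_\mu\bar{d}\epsilon/K^\gamma$). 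A minor extra step is replacing the one-step-ahead objects $Q^k_{T+1},\hat P^k_{T+1}$ arising at the hand-off by $Q^k_T,\hat P^k_T$, which Lemmas~\ref{lem:Q_learn_conv} and~\ref{lem:tran_prob_conv} control; this costs only $\Os(T^{-\nu})$, $\Os(T^{-1})$, absorbed into $\epsilon_Q,\epsilon_P$.
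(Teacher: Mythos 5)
Your proposal is correct and follows essentially the same route as the paper's proof: the same one-step recursions for $e^k_\pi$ and $e^k_\mu$ built from the Lipschitzness of $\softmax{\lambda}{\cdot,\cdot}$ and $\Gamma^\lambda_1$, the contraction constants $d_1,d_2,d_3$ of Assumption \ref{asm:contrct}, the $\Os(2^{1-\zeta})$ intra-episode drift from the summable $t^{-\zeta}$ factor, the $\epsilon^\net$ projection error, and the crucial cancellation of the step size $c^{k+1}_{\mu,1}$ against the weight $\Theta(a_j^{-1})$ so that the cross term collapses to a constant times $\frac{1}{K}\sum_k e^k_\pi$. The only difference is mechanical: you average by unrolling the recursion into exponentially weighted sums, whereas the paper rearranges to isolate $e^k$ and telescopes via Abel summation using $e^k_\pi\le 2$ — two interchangeable standard techniques yielding the same rates.
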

% \textcolor{red}{Average error bounds like this have also appeared in other works like \cite{xie2021learning} but the analysis is quite different due to two time-scale learning rates and non-regularized MFG setting of this work}. 
% Similar results appear in~\cite{xie2021learning}, however,
The challenges in establishing Theorem \ref{thm:conv_bound} are due to the two time-scale learning rates and non-regularized MFG setting. The proof of Theorem \ref{thm:conv_bound} keeps track of errors $e^k_\pi$ and $e^k_\mu$ for each episode $k$ and the average of these errors is shown to approach $0$ due to tight approximation of the optimality and consistency operators and the two time-scale update under the contraction Assumption \ref{asm:contrct}. Apart from the familiar drift terms $\Os(2^{1-\zeta})$ and estimation error bounds $\epsilon_P$ and $\epsilon_Q$, all other terms are decreasing with increasing total number of episodes $K$ at rates governed by $\theta, \gamma$ and $\zeta$. %The error term $\Os(K^{-\bar\Delta})$ is due to the usage of $\softmax{\lambda}{\cdot}$ instead of $\argmax$ in the algorithm, the rationale for which has been provided in \ref{subsec:sandbox_RL}. By choosing $\lambda^k$ to increase at a rate of $\log(k)$, the error can be bounded by $\Os(K^{-\bar{\Delta}})$. This rate might seem restrictive since the uniform action gap $\bar{\Delta}$ might be small. But $\bar{\Delta}$ can be scaled up by an arbitrary value by scaling up the instantaneous rewards in line \ref{algline:q_learn} of Algorithm \ref{alg:Single_loop_RL}, rendering the $\Os(K^{-\bar{\Delta}})$ term \emph{nice}. 
Next we present a corollary to Theorem \ref{thm:conv_bound} which gives us the final bound quantifying the approximation error between output of the Sandbox learning algorithm and the B-MFE of the MFG. The proof of Corollary \ref{cor:final_bound} depends on the result of Theorem \ref{thm:conv_bound} and is provided in the Appendix.
%Using Corollary \ref{cor:final_bound} we get
%\begin{align*}
%	& \bigg\lVert \frac{1}{K} \sum_{k=1}^{K-1} \pi^k - \pi^* \bigg\rVert + \bigg\lVert \frac{1}{K} \sum_{k=1}^{K-1} \mu^k - \mu^* \bigg\rVert \\
%	& \hspace{2cm} = \Os(K^{\gamma-1}) + \Os(2^{1-\zeta}) + \Os(\epsilon_P) + \Os(K^{\theta-1})   + \Os(\epsilon_Q) + \Os(K^{-\bar{\Delta}})  + \Os (K^{\theta-\gamma})  + \Os(K^{-1})
%\end{align*}
\begin{corollary} \label{cor:final_bound}
	If all conditions in Theorem \ref{thm:conv_bound} are satisfied, we have
	% \begin{align*}
	% 	& \bigg\lVert \frac{1}{K} \sum_{k=1}^{K-1} \pi^k_1 - \pi^* \bigg\rVert_{TV} + \bigg\lVert \frac{1}{K} \sum_{k=1}^{K-1} \mu^k_1 - \mu^* \bigg\rVert_1 =  \\
	% 	& \hspace{0.5cm} \Os(K^{\gamma-1}) + \Os(2^{1-\zeta}) + \Os(\epsilon_P) + \Os(K^{\theta-1})  \\
 %  & \hspace{2cm} + \Os(\epsilon_Q) %+ \Os(K^{-\bar{\Delta}})  
 %  + \Os (K^{\theta-\gamma}) \azedit{+ \Os(\epsilon)}. % + \Os(K^{-1})
	% \end{align*}
        \begin{align*}
		& \bigg\lVert \frac{1}{K} \sum_{k=1}^{K-1} \pi^k_1 - \pi^* \bigg\rVert_{TV} + \bigg\lVert \frac{1}{K} \sum_{k=1}^{K-1} \mu^k_1 - \mu^* \bigg\rVert_1 \\
            & \hspace{2cm} =  \Os(K^{\gamma-1}) + \Os(2^{1-\zeta}) + \Os(\epsilon_P) + \Os(K^{\theta-1})   + \Os(\epsilon_Q) %+ \Os(K^{-\bar{\Delta}})  
  + \Os (K^{\theta-\gamma}) + \Os(\epsilon). % + \Os(K^{-1})
	\end{align*}
\end{corollary}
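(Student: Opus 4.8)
\textbf{Proof proposal for Corollary~\ref{cor:final_bound}.}
The plan is to reduce the statement entirely to Theorem~\ref{thm:conv_bound}, using only the triangle inequality, convexity of the norms, the Lipschitz property of $\Gamma^\lambda_1$ from Assumption~\ref{asm:contrct}, and the fixed-point identity defining the B-MFE. First, for the mean-field term: since $\lVert \cdot \rVert_1$ is a norm it is convex, so by Jensen's inequality $\lVert \frac1K \sum_{k=1}^{K-1} \mu^k_1 - \mu^* \rVert_1 \le \frac1K \sum_{k=1}^{K-1} \lVert \mu^k_1 - \mu^* \rVert_1 = \frac1K \sum_{k=1}^{K-1} e^k_\mu$, which is exactly the quantity bounded in Theorem~\ref{thm:conv_bound}.

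Second, for the policy term, convexity of $\lVert \cdot \rVert_{TV}$ gives $\lVert \frac1K \sum_{k=1}^{K-1} \pi^k_1 - \pi^* \rVert_{TV} \le \frac1K \sum_{k=1}^{K-1} \lVert \pi^k_1 - \pi^* \rVert_{TV}$, and for each episode $k$ I would insert the intermediate point $\Gamma^\lambda_1(\mu^k_1)$: $\lVert \pi^k_1 - \pi^* \rVert_{TV} \le \lVert \pi^k_1 - \Gamma^\lambda_1(\mu^k_1) \rVert_{TV} + \lVert \Gamma^\lambda_1(\mu^k_1) - \Gamma^\lambda_1(\mu^*) \rVert_{TV}$, where I used $\pi^* = \Gamma^\lambda_1(\mu^*)$ from Definition~\ref{def:B_MFE}. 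The first summand is $e^k_\pi$; the first inequality of Assumption~\ref{asm:contrct} bounds the second by $d_1 \lVert \mu^k_1 - \mu^* \rVert_1 = d_1 e^k_\mu$. Averaging over $k$ yields $\lVert \frac1K \sum_{k=1}^{K-1} \pi^k_1 - \pi^* \rVert_{TV} \le \frac1K \sum_{k=1}^{K-1} e^k_\pi + d_1 \, \frac1K \sum_{k=1}^{K-1} e^k_\mu$.

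Adding the two pieces gives a left-hand side bounded by $\frac1K \sum e^k_\pi + (1 + d_1)\,\frac1K \sum e^k_\mu$. Substituting the two rates from Theorem~\ref{thm:conv_bound}, absorbing the absolute constant $d_1$ into the $\Os(\cdot)$, and noting that $\Os(K^{-1})$ is dominated by $\Os(K^{\gamma-1})$ since $0<\gamma<1$, the right-hand side collapses to $\Os(K^{\gamma-1}) + \Os(2^{1-\zeta}) + \Os(\epsilon_P) + \Os(K^{\theta-1}) + \Os(\epsilon_Q) + \Os(K^{\theta-\gamma}) + \Os(\epsilon)$, which is the claimed bound; since no new randomness is introduced, the confidence $1-\delta_Q$ carries over verbatim.

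The only genuinely non-mechanical step is the second one: Theorem~\ref{thm:conv_bound} controls the distance of $\pi^k_1$ only to the \emph{moving} target $\Gamma^\lambda_1(\mu^k_1)$, whereas the corollary demands a bound against the \emph{fixed} equilibrium policy $\pi^*$; bridging this gap is precisely what forces the use of both the B-MFE fixed-point identity and the Lipschitz continuity of $\Gamma^\lambda_1$, after which the mean-field error bound $\frac1K\sum e^k_\mu$ absorbs the discrepancy. Everything else is convexity and the triangle inequality, so no further obstacle is anticipated.
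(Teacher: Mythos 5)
Your proposal is correct and follows essentially the same route as the paper's proof: convexity of the norms to pass the average inside, insertion of the intermediate point $\Gamma^\lambda_1(\mu^k_1)$ together with the fixed-point identity $\pi^* = \Gamma^\lambda_1(\mu^*)$ and the $d_1$-Lipschitz bound from Assumption~\ref{asm:contrct}, yielding $\frac{1}{K}\sum_k e^k_\pi + (1+d_1)\frac{1}{K}\sum_k e^k_\mu$ and then substituting Theorem~\ref{thm:conv_bound}. The only cosmetic difference is that you explicitly absorb the $\Os(K^{-1})$ term into $\Os(K^{\gamma-1})$, whereas the paper simply carries it along; nothing substantive changes.
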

The terms $\Os(K^{\theta -1}), \Os(K^{\gamma-1})$ and $\Os(K^{\theta-\gamma})$ enter due to the two-time-scale learning setting and are monotonically decreasing with the number of episodes $K$. The convergence rates of these terms can be tuned by choosing $\theta$ and $\gamma$ as explained next. %The quantity $\Os(K^{-\bar\Delta})$ is due to the lack of a regularization term in the standard MFG setting, but it is also monotonically decreasing with $K$ and the rate of decrease can be accelerated by scaling up the rewards by a constant factor, thereby increasing the action gap $\bar\Delta$. Reward scaling would not affect the analysis due to boundedness of the reward function. 
The terms $\Os(\epsilon_Q)$ and $\Os(\epsilon_P)$ enter into the analysis due to the estimation errors in the $Q$-function and the dynamics matrix $P$, respectively. These quantities can be made arbitrarily small by increasing the number of timesteps in each episode $T$, due to Lemmas 1 and 2. Lastly, the term $\Os(2^{1-\gamma})$ is a drift term which enters due to inter-episodic learning. This can be made arbitrarily small by increasing the value of $\gamma$. But the value of $\gamma \neq \infty$ as that stops inter-episodic learning and may cause degenerate policies. The error introduced by the projection step in the mean-field update line \ref{algline:cp_mf_update} of Algorithm \ref{alg:Single_loop_RL} is $\Os(\epsilon)$.

If the learning rates are chosen such that $\theta = 0.01, \gamma = 0.5, \nu = 1, \zeta = \Omega(\log(1/\epsilon))$, then the output of Algorithm \ref{alg:Single_loop_RL} will be $\epsilon$ close to the B-MFE with high probability (for small enough $\epsilon>0$), given that episode length is $T = \Omega(\epsilon^{-2})$ and the number of episodes is $K = \Omega(\epsilon^{-2})$. Notice that under these conditions, $\epsilon^\net = \Os(\epsilon^2)$. Hence the sample complexity of the algorithm is $\Os(\epsilon^{-4})$. Finally, the difference between the MFE and the B-MFE will be determined by the $\lambda$ parameter with higher values resulting in a close approximation of MFE by the B-MFE \cite{cui2021approximately}. In the next section we apply the algorithm to a congestion game.

\section{NUMERICAL RESULTS}
\label{sec:numer}
% \textcolor{blue}{since this is a problem instance which is perhaps not well-understood by ML community, we probably need a pictorial representation of the problem: what is the state space, what is the action space, how do agents interact with each other, and what is the incentive structure? Is there a coherent and singular notion of equilibrium here? We should try to visualize all this, so as to make sense of what the numerical results are telling us.}
% {\sbedit{*** Move Appendix C into Section 5.***}}
% \textcolor{blue}{Can you increase the size of the figures? We can go till the margins. Also, can we make the captions a little more descriptive? Even for Figure 1}
We simulate Sandbox learning for a congestion MFG \cite{toumi2020tractable} on a grid. In particular we investigate the convergence of the algorithm for the cases (1) when the communicating MDP assumption is satisfied, and (2) when it is not satisfied. The state space $\Ss = \{1,\ldots,5\}^2$, action space $\As = \{-1,1\}^2$, and discount factor $\rho = 0.7$. The initial distribution of the agent $p_1$ is uniform over the state space. %concentrated in state $(1,1)$.  
If the agent takes action $a \in \As$ in state $s \in \Ss$, the resulting state will be $s' = \PP_\Ss[s+a]$ with probability $1-p$ or the agent may be `jostled' into one of the neighboring states, with probability $p$ for small $p > 0$. This stochasticity is meant to model the jostling present in crowds. The agent's reward is $r(s,\mu) = (1-c \cdot \mu(s)) \cdot R(s)$ where $c = 0.5$ is the congestion averse parameter and $R(s)$ is the state dependent component of the reward. The state-dependent rewards $R(s)$ are concentrated around favorable states $\{(3,3),(3,4),(4,3),(4,4)\}$. Hence the agent prefers states with higher $R(s)$ values but might be deterred by the congestion in the state $\mu(s)$.

The initial control policy $\pi^1_0$ and mean-field $\mu^1_0$ are uniform over actions and states, respectively. The initial estimate of the $Q$-function $Q^1_1$ is zero and transition probability estimate $\hat P^1_1$ is uniform. By choosing learning coefficient $c_\beta = 5$, learning rate $\nu = 0.55$ and episodic length $T=5\times10^4$ we observe in Figure \ref{fig:tran_prob_Q_func_conv} that $Q$-learning and transition probability estimation converge very well inside episode $k = 1$.

\begin{figure}[h!]
    \centering
    \includegraphics[width=0.85\linewidth]{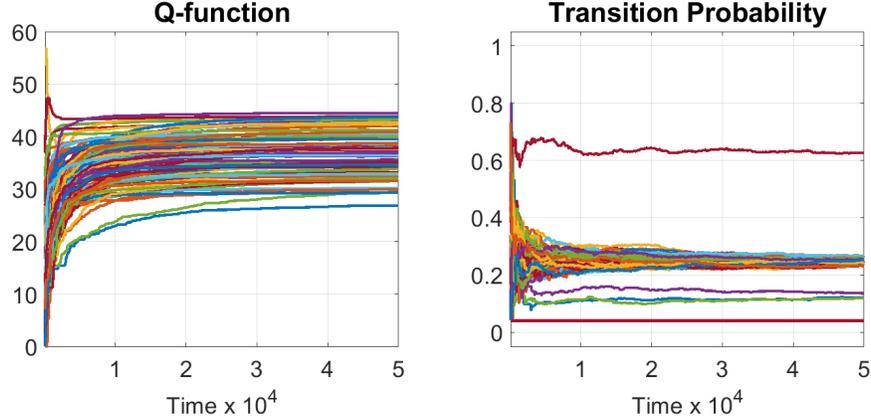}
    \caption{Convergence of transition probability and $Q$-function estimation for episode $k=1$}
    \label{fig:tran_prob_Q_func_conv}
\end{figure}
Furthermore, by choosing $c_\mu = c_\pi = 0.5$ and the two time-scale learning rates as $\theta = 0.55 < \gamma = 0.6$ we see that the control policy and mean-field estimates converge after $K=300$ many episodes in Figure \ref{fig:mf_cp_conv}. For this particular simulation we forego the projection step as it does not have a significant impact on the convergence of the algorithm.

\begin{figure}[h!]
    \centering
    \includegraphics[width=0.85\linewidth]{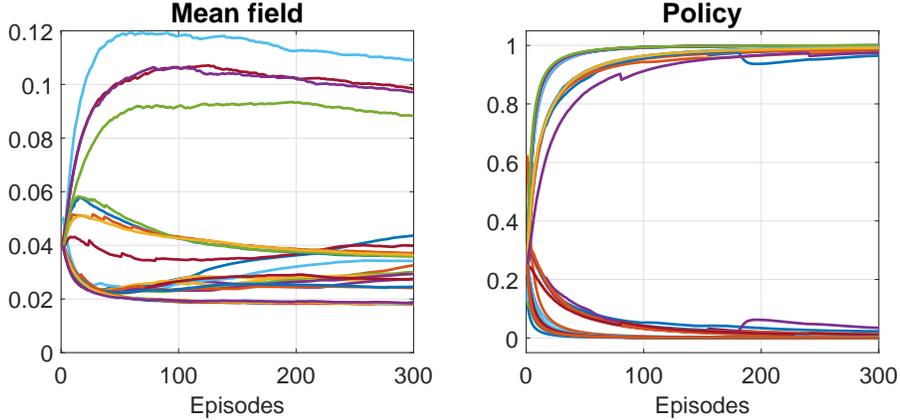}
    \caption{Convergence of mean-field and control policy estimates}
    \label{fig:mf_cp_conv}
\end{figure}
Next, we consider the setting where the state space consists of two communicating classes, thereby nullifying the communicating MDP assumption (Assumption \ref{asm:comm}). In particular, the states $\Cs^1:=\{(4,5),(5,4),(5,5)\}$ form a communicating class which is not closed and the rest of the state space is a closed communicating class $\Cs^0$. The reward function $r(s,\mu)$ is exactly the same as the previous simulation but the transition probabilities are modified to prevent transition $\Cs^0 \rightarrow \Cs^1$, ensuring that $\Cs^0$ is closed and $\Cs^1$ is not closed. The learning rates and other estimates are initialized as before. Figure \ref{fig:mf_cp_conv_no_comm} shows the convergence of mean-field and control policy estimates in the Sandbox learning algorithm. 

\begin{figure}[h!]
    \centering
    \includegraphics[width=0.85\linewidth]{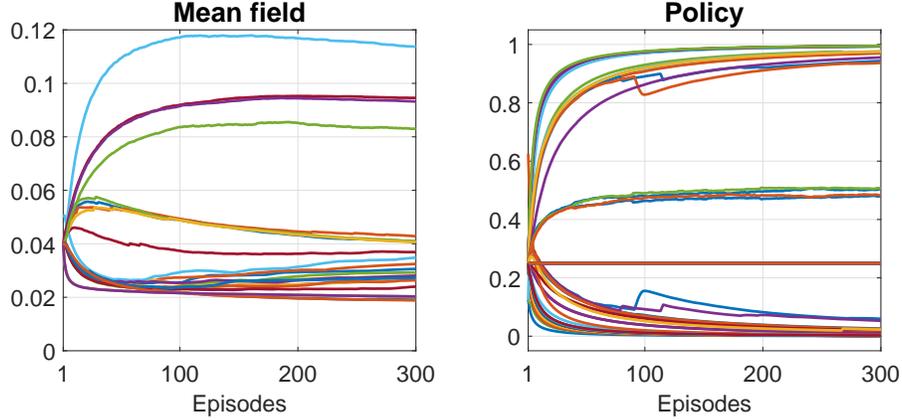}
    \caption{Convergence of mean-field and control policy estimates in the absence of Assumption \ref{asm:comm}}
    \label{fig:mf_cp_conv_no_comm}
\end{figure}
The simulation shows good convergence properties of the algorithm even in absence of the communicating MDP condition. This is due to the fact that the MC transitions (in finite time) from $\Cs^1$ into $\Cs^0$ and stays there, allowing good approximations of $\Gamma_1,\Gamma_2$ operators, resulting in convergence. If there were multiple closed classes with initial distribution spread amongst them, the algorithm's performance would be variable.

\section{CONCLUSION \& FUTURE WORK}
\label{sec:disc}
In this paper we have developed the Sandbox learning algorithm with finite-time guarantees to approximate the stationary MFE of a %finite-state finite-action 
MFG without access to a mean-field oracle, using the single sample path of the generic agent. The sample complexity of the Sandbox learning algorithm is $\Os(\epsilon^{-4})$ where $\epsilon$ is the MFE approximation error. %The algorithm estimates the transition probability and $Q$-function of the slowly time-varying MC (along single sample path) induced by a fast-decaying learning rate inside the episode. These estimates are used to approximate the consistency and optimality operators which are then used to update the control policy and mean-field in a two time-scale manner.concurrently 
% updates the control policy and mean-field using approximate consistency and optimality operators obtained using the single sample-path of the agent. %This is made possible by the episodic nature of the update which causes a slowly time-varying MC aiding the convergence analysis. We provide finite sample convergence bounds and numerical results for the algorithm. Possible extensions include  heterogeneous games.
The proof of convergence has relied on goodness of transition probability and $Q$-function estimates (along slowly time-varying MC). The control policy and the mean-field were then updated using two time-scale learning rates and approximate consistency and optimality operators. We also generalize the covering time and ergodicity assumptions in literature by proposing a communicating MDP assumption. This work opens up several interesting research directions. It would be worthwhile to explore finite sample bounds for Sandbox learning under a \emph{weakly} communicating MDP condition. Another important research direction would be to explore  how feature embeddings can improve the scalability beyond the tabular MFG setting. Furthermore, extending oracle-less learning to the benchmark setting of Linear Quadratic MFGs may also help in solving several real-world scenarios such as, consensus and formation flying.

% \subsubsection*{Acknowledgements}
% Research of the first and fourth authors was supported in part by the Air Force Office of Scientific Research (AFOSR) Grant FA9550-19-1-0353. We would like to thank the reviewers for their useful comments.

%% \bibliographystyle{IEEEtran} 
\bibliographystyle{plainnat} 
\bibliography{references}
\clearpage

\appendix
\onecolumn

\section*{\centering \LARGE{Appendix}}
% \vspace{-0.21cm}
% \hrulefill
\section{EXTENDED LITERATURE}
\label{sec:lit_rev}
% \section{Related Literature} 

Mean-Field Games (MFGs) originated concurrently in the works of \cite{huang2006large,huang2007large} (termed as Nash Certainty Equivalence) and \cite{lasry2006jeux,lasry2007mean} (who coined the term MFG). Since its inception, there have been several works extending the classical concept of MFGs in various directions, such as heterogenous agents \cite{moon2014discrete}, scarce interactions \cite{caines2019graphon,zaman2021adversarial}, risk-sensitive criteria \cite{tembine2013risk,moon2016linear,saldi2020approximate} and cooperative equilibria \cite{bensoussan2018mean,barreiro2021mean}. MFGs have also been applied to a variety of real-world applications such as decentralized charging of EVs \cite{ma2011decentralized}, economics \cite{carmona2020applications} and congestion dynamics \cite{lachapelle2011mean}, among others. Although most of these works have been in the continuous time setting, research in discrete-time MFGs which are much more amenable to Reinforcement Learning have also been gaining momentum recently \cite{saldi2018markov,moon2014discrete}.

RL for MFGs was first dealt with in \cite{guo2019learning} for the finite and in \cite{elie2019approximate} for infinite state and action spaces. The work of \cite{guo2019learning} proposes a double-loop RL algorithm for MFGs with finite state and action spaces MFGs, which involves a projection step onto an $\epsilon$-net. This projection step helps in establishing convergence by utilizing a uniform action gap bound over the $\epsilon$-net. %to overcome the action gap problem \cite{gao2017properties} \textcolor{blue}{what is meaning of action gap?}\azedit{[I included a reference, without giving much detail ]}, inherent in MFGs over finite action spaces. 
A fictitous play algorithm was proposed \cite{elie2019approximate}, involving repeated updates of the mean-field and control policy to approximate the MFE. The first set of works to deal with RL for the benchmark Linear Quadratic (LQ) MFGs were \cite{fu2019actor,zaman2020reinforcement,zaman2022reinforcement}. These works have provided finite sample bounds for the LQ-MFG in the stationary \cite{fu2019actor} and the non-stationary \cite{zaman2020reinforcement,zaman2022reinforcement} settings, by building on policy gradient \cite{fazel2018global} and zero-order stochastic optimization methods \cite{malik2019derivative} for the Linear Quadratic Regulator problem. The recent work of \cite{yongacoglu2022independent} deals with independent learning for a novel setting of $N$-player mean-field games. The work of \cite{lee2021reinforcement} learns the MFE in the special setting of strategic complementarities where a single step of centralized Q-learning followed by a single step of policy execution by many agents is shown to converge to the MFE. The idea of entropy-regularized MFGs was introduced in \cite{xie2021learning,cui2021approximately} along with existence and uniqueness results and RL algorithms to compute the entropy-regularized MFE. The work of \cite{anahtarci2019fitted} also deals with the entropy-regularized MFGs, by utilizing a fitted $Q$-iteration based approach. There have been several works on Deep-RL techniques for MFGs, such as \cite{perrin2021mean,subramanian2019reinforcement}, where \cite{perrin2021mean} uses Deep RL techniques to learn a flocking model observed in nature and \cite{subramanian2019reinforcement} proposes a Neural Network based policy update mechanism. The paper \cite{xie2021learning} proposed a single-loop RL algorithm, such that each critic step leads to a mean-field update as well. This is in contrast to the standard RL for MFG algorithms which have a double-loop structure where multiple critic steps can be executed while keeping the mean-field constant. Our work also has a single-loop structure as each critic step of control policy update leads to a concurrent update of the mean-field. Furthermore, we consider learning along a single sample path of the generic agent without re-initializations.

In addition, the majority of the literature in RL for MFGs assume access to an oracle which can provide the mean-field (or a noisy estimate of it) under a given control policy. This work, on the other hand, proposes the Sandbox learning algorithm which uses the sample path of the agent itself to estimate the mean-field. The two works closest to our setting are \cite{angiuli2022unified} and \cite{yardim2022policy}. The work of \cite{angiuli2022unified} adopts an oracle-less setting but does not provide a finite sample convergence bound of the RL algorithm. Furthermore, the two time-scale update in \cite{angiuli2022unified} updates the $Q$-function at a faster rate whereas Sandbox learning algorithm updates the control policy at the faster rate using a softmax of the estimated $Q$-function. Furthermore, we prove that the episodic nature of learning rates in Sandbox learning is crucial to obtaining finite sample convergence guarantees. Sandbox learning can be extended to entropy-regularized setting by employing a fitted $Q$-iteration (as in \cite{anahtarci2019fitted}). The work of \cite{yardim2022policy} runs independent policy mirror ascent for $N$ agents and proves approximation upto a bias term of $\tilde\Os(1/\sqrt{N})$, with similar resulting sample guarantees. However these guarantees are provided under an assumption that the agent's Markov chain is ergodic under any policy $\pi$, which is stronger than our communicating MDP assumption. This assumption may be a bit restrictive for some scenarios like condestion games with deterministic dynamics. A complete juxtaposition of \cite{angiuli2022unified} and \cite{yardim2022policy} with our work is provided in Section \ref{subsec:rel_lit}.
 %and fixing the softmax Lipschitz parameter $\lambda$ in the control policy update \ref{eq:algorithm_iteration2}.% Sandbox learning is useful in providing a warm-start for an agent entering a multi-agent non-cooperative setting.

% \section{Extension to Entropy-Regularized Mean-Field Games}
% The Sandbox learning algorithm can be extended to RL for the entropy regularized MFG setting.

\section{Proofs of Results in Section~\ref{sec:Sandbox_RL_Analysis}}

Throughout this section the cardinalities of the state and action spaces are denoted by $S = |\Ss|$ and $A = |\As|$, respectively.

\subsection{Proof of Lemma \ref{lem:suff_exp}}
\begin{proof}
% The exploration coefficients $\psi^k_t$ in policy update \eqref{eq:algorithm_iteration2} have the form,
% \begin{align} \label{eq:psi_k_t}
% \psi^k_t = \left\{ \begin{array}{ll}
% 0 & \text{if } t = 1, \\ \psi/(1 - \frac{c_\pi}{k^\theta})   & \text{if } t > 1 .
% \end{array} \right.
% \end{align}
Let us consider exploration noise coefficient of the form $\psi^k_t = \psi$ for some $\psi \in (0,1-c_\pi)$, and $c_\pi$ and $\theta$ defined in Section \ref{subsec:sandbox_RL}. %Notice that as $\psi^k_1 = 0$ there is no exploration noise added to the policy $\pi^k_t$ on the first timestep $t=1$ of each episode $k$, which leads to the convergence bounds (Theorem \ref{thm:conv_bound} and Corollary \ref{cor:final_bound}) being unaffected by the exploration noise. For subsequent timesteps $1 < t \leq T$, the exploration coefficient $\psi^k_t = \psi/(1 - \frac{c_\pi}{k^\theta})$ depends only on the episode $k \in [K]$ and is decreasing in $k$. This results in the sufficient exploration condition as shown below.
The proof consists of two parts; in the first part we show that under the policy update \eqref{eq:algorithm_iteration2} using learning coefficient $\psi$ the probability of any action $a \in \As$ for any state $s \in \Ss$, will have a uniform lower bound $\pi^k_t(a \mid s) \geq \underline{\pi} > 0$ for all $k \in [K], 1 < t \leq T$. In the second step, we show that using this uniform lower bound and the communicating MDP assumption (Assumption \ref{asm:comm}), the sufficient exploration condition as shown in the statement of Lemma \ref{lem:suff_exp} holds.

To prove the lower bound $\underline\pi$, we start by proving a lower bound on $\pi^k_t(a \mid s)$ for a given episode $k \in [K]$ and $1 \leq t \leq T$. Recalling the update \eqref{eq:algorithm_iteration2} for $1 < t \leq T$
\begin{align*}
\pi^k_{t} & = (1-c^k_{\pi,t}) \pi^k_{t-1} + c^k_{\pi,t} \big((1-\psi)\softmax{\lambda}{\cdot,Q^k_t} + \psi \mathds{1}_{|\As|}  \big)
\end{align*}
where the last part on the right hand side is the uniform exploration noise $\mathds{1}_{|\As|} = (\frac{1}{|\As|},\ldots,\frac{1}{|\As|})$. This can be written as follows:
\begin{align} \label{eq:cp_update_recur}
\pi^k_{t} & = \tilde{c}^k_{\pi,[1,t]} \pi^k_0 + \sum_{l=1}^t c^k_{\pi,[l,t]} ( 1-\psi) \softmax{\lambda}{\cdot,Q^k_l} + \sum_{l=1}^t c^k_{\pi,[l,t]} \psi  \mathds{1}_{|\As|},
\end{align}
where
\begin{align*}
\tilde{c}^k_{\pi,[l,t]} = \prod_{s=l}^t (1 - c^k_{\pi,s}), \hspace{0.25cm} c^k_{\pi,[l,t]} = c^k_{\pi,l} \prod_{s=l+1}^t (1 - c^k_{\pi,s}).
\end{align*}
From the control policy update \eqref{eq:cp_update_recur} we analyze the last part of the right-hand-side. In particular, we will prove that $\sum_{l=2}^{t+1} c^k_{\pi,[l,t+1]} \geq \sum_{l=2}^{t} c^k_{\pi,[l,t]}$ for $1 \leq t \leq T$. First we see that,
\begin{align} \label{eq:c_k_pi_inter}
    c^k_{\pi,[l,t]} = c^k_{\pi,l} (1 - c^k_{\pi,t}) \prod_{s=l+1}^{t-1} (1 - c^k_{\pi,s}) = (1 - c^k_{\pi,t}) c^k_{\pi,[l,t-1]}
\end{align}
Using this equality, we can show
\begin{align}
    \sum_{l=1}^{t+1} c^k_{\pi,[l,t+1]} & = c^k_{\pi,[t+1,t+1]} + \sum_{l=1}^{t} c^k_{\pi,[l,t+1]}  = c^k_{\pi,t+1} + (1 - c^k_{\pi,t+1}) \sum_{l=1}^{t} c^k_{\pi,[l,t]} \nonumber \\
    & = c^k_{\pi,t+1} \bigg( 1 - \sum_{l=1}^{t} c^k_{\pi,[l,t]} \bigg) + \sum_{l=1}^{t} c^k_{\pi,[l,t]} \geq \sum_{l=1}^{t} c^k_{\pi,[l,t]} \label{eq:upper_bound_noise}
\end{align}
where the second equality is obtained using \eqref{eq:c_k_pi_inter}. In the inequality we have used the fact that $\sum_{l=1}^{t} c^k_{\pi,[l,t]} \leq 1$ for all $1 \leq t \leq T$ which can be shown using inductive reasoning. The base case is true since $c^k_{\pi,[1,1]} = c^k_{\pi,1} = \frac{c_\pi}{k^\theta} < 1$ since $c_\pi \leq 1$ and $\zeta > 1$. Now assume that $\sum_{l=1}^{t} c^k_{\pi,[l,t]} \leq 1$; then,
\begin{align*}
\sum_{l=1}^{t+1} c^k_{\pi,[l,t+1]} & = c^k_{\pi,t+1} + (1 - c^k_{\pi,t+1}) \sum_{l=1}^{t} c^k_{\pi,[l,t]} \leq c^k_{\pi,t+1} + 1 - c^k_{\pi,t+1} = 1
\end{align*}

\vspace{0cm}

Hence we have proved that $\sum_{l=2}^{t} c^k_{\pi,[l,t]} \geq c^k_{\pi,[1,1]} = c^k_{\pi,1}$ for all $1 \leq t \leq T$. Let us define $\underline\pi^k_t := \min_{a \in \As, s \in \Ss} \pi^k_t(a \mid s)$ as the lower limit on exploration noise in policy $\pi^k_t$ for $1 \leq t \leq T$. Using \eqref{eq:cp_update_recur} and the definition of $\underline \pi^k_t$ and we can deduce
\begin{align*}
    \underline\pi^k_t \geq \tilde c^k_{\pi,[1,t]} \underline\pi^k_0 + \sum_{l=1}^t c^k_{\pi,[l,t]} \psi \geq \tilde c^k_{\pi,[1,t]} \underline\pi^k_0 + c^k_{\pi,1} \psi/|\As|
\end{align*}
for $1 \leq t \leq t$ where the second inequality uses $\sum_{l=1}^{t} c^k_{\pi,[l,t]} \geq c^k_{\pi,[1,1]} = c^k_{\pi,1}$ for all $1 \leq t \leq T$.
% Using this fact we can deduce a lower bound on the exploration noise (dependent on episode $k$) as follows,
% \begin{align*}
% \pi^k_t(a \mid s) \geq \sum_{l=2}^t c^k_{\pi,[l,t]} \psi \geq c^k_{\pi,2} \psi
% \end{align*}
% for $1 < t \leq t$ where the first inequality uses the last term on the right hand side of \eqref{eq:cp_update_recur} and the second inequality uses $\sum_{l=2}^{t} c^k_{\pi,[l,t]} \geq c^k_{\pi,[2,2]} = c^k_{\pi,2}$ for all $1 < t \leq T$. 
This gives us a $k$-dependent lower bound on $\underline\pi^k_t$. Next we convert it into a uniform lower bound independent of $k$. Let us define a uniform lower bound dependent on $k$, $\underline\pi^k := \min_{1 < t \leq T} \underline\pi^k_t$. Using \eqref{eq:algorithm_iteration2}, we can write 
\begin{align*}
\pi^k_{1} & = (1-c^k_{\pi,1}) \pi^k_0 + c^k_{\pi,1} \big( ( 1-\psi ) \softmax{\lambda}{\cdot,Q^k_1} + \psi \mathds{1}_{|\As|} \big) \\
& = (1-c^k_{\pi,1}) \pi^{k-1}_T + c^k_{\pi,1} \big( ( 1-\psi ) \softmax{\lambda}{\cdot,Q^k_1} + \psi \mathds{1}_{|\As|} \big)
\end{align*}

Using this relation and the definition of $\underline\pi^k$, we deduce that
\begin{align*}
\underline\pi^k \geq (1-c^k_{\pi,1}) \underline\pi^{k-1} + c^k_{\pi,1} \psi/|\As|  
\end{align*}
Using recursion we can write this as
\begin{align} \label{eq:under_pi_k}
\underline\pi^k \geq \tilde\vartheta_{1,k} \underline\pi^1 + \sum_{l=1}^k \vartheta_{l,k} \psi/|\As|, \hspace{0.1cm} \text{where} \hspace{0.1cm} \tilde\vartheta_{l,k} = \prod_{s=l}^k (1-c^s_{\pi,1}) \hspace{0.1cm} \text{and} \hspace{0.1cm} \vartheta_{l,k} = c^l_{\pi,1} \tilde\vartheta_{l+1,k}
\end{align}
Next we will show that $\sum_{l=1}^k \vartheta_{l,k} \psi/|\As| \geq \vartheta_{1,1} \psi/|\As| = c_{\pi} \psi/|\As|$ for all $k \in [K]$ which coupled with \eqref{eq:under_pi_k} naturally leads to the conclusion that $\underline\pi^k \geq c_{\pi} \psi/|\As|$. By definition, we have
\begin{align*}
    \sum_{l=1}^k \vartheta_{l,k} \psi/|\As| & = \vartheta_{k,k} \psi/|\As| + \sum_{l=1}^{k-1} \vartheta_{l,k} \psi/|\As|  = c^k_{\pi,1} \psi/|\As| + (1-c^k_{\pi,1}) \sum_{l=1}^{k-1} \vartheta_{l,k-1} \psi/|\As| \\
    & = c^k_{\pi,1} \bigg( \psi -  \sum_{l=1}^{k-1} \vartheta_{l,k-1} \psi\bigg)/|\As| + \sum_{l=1}^{k-1} \vartheta_{l,k-1} \psi/|\As| \geq \sum_{l=1}^{k-1} \vartheta_{l,k-1} \psi/|\As|
\end{align*}
where the last inequality follows from the fact that $\sum_{l=1}^{k} \vartheta_{l,k} \leq 1$ for all $k \in [K]$ which can be shown using inductive reasoning. The base case is true since $\vartheta_{1,1} = c_\pi < 1$. Now assume that $\sum_{l=1}^{k} \vartheta_{l,k} \leq 1$; then
\begin{align*}
\sum_{l=1}^{k+1} \vartheta_{l,k+1} & = \vartheta_{k+1,k+1} + (1 - c^{k+1}_{\pi,1}) \sum_{l=1}^{k} \vartheta_{l,k} \leq c^{k+1}_{\pi,1} + 1 - c^{k+1}_{\pi,1} = 1
\end{align*}
We have proved that  $\sum_{l=1}^k \vartheta_{l,k} \psi \geq \vartheta_{1,1} \psi = c_{\pi} \psi$ for all $k \in [K]$, and hence using \eqref{eq:under_pi_k} we can deduce that $\underline\pi^k \geq c_{\pi} \psi/|\As|$ which implies $\pi^k_t(a|s) \geq \underline\pi := c_{\pi} \psi/|\As| > 0$ for any $s \in \Ss$ and $a \in \As$.

Next we show how using this uniform lower bound $\underline\pi$ along with the communicating MDP assumption (Assumption \ref{asm:comm}) sufficient exploration can be proved. From Assumption \ref{asm:comm} for any $i,j \in \Ss$ and $\tilde\mu \in S(\epsilon^\net)$ we know that there exists a finite horizon $H(\tilde\mu)$ and a set of actions $\tilde a_1, \ldots, \tilde a_{H(\tilde\mu)}$ such that
\begin{align} \label{eq:P_mu_tilde}
P^{\tilde\mu}_{ij} := P\big(s_{H(\tilde\mu)} = j \mid a_1=\tilde{a}_1,\ldots,a_{H(\tilde\mu)}=\tilde{a}_{H(\tilde\mu)}, s_1 = i, \mu = \tilde\mu\big) > 0.
\end{align}
Let us define $\underline{P} := \min_{i,j \in \Ss, \tilde{\mu} \in S(\epsilon^\net)} P^{\tilde{\mu}}_{ij}$ and due to the finiteness of $\Ss$ and $S(\epsilon^\net)$ we can guarantee $\underline{P} > 0$. Due to the Lipschitzness of stochastic kernel $P$ with respect to $\mu$ for any $s \in \Ss, a \in \As$, and $\mu,\mu' \in \Delta(\Ss)$,
\begin{align*}
    \lVert P(\cdot \mid s,a,\mu) - P(\cdot \mid s,a,\mu') \rVert_1 \leq L_\mu \lVert \mu - \mu'\rVert_1
\end{align*}
for Lipschitz constant $L_\mu > 0$. This allows us to lower bound $P(s' | s,a,\mu^k_t)$ where $\mu^k_t$ evolves according to the update rule \eqref{eq:algorithm_iteration1}. Hence for any $s,s' \in \Ss, a \in \As$ and $\mu^k_t \in \Delta(\Ss)$,
\begin{align}
P(s' | s,a,\mu^k_t) & \geq P(s' | s,a,\mu^k_1) - |P(s' | s,a,\mu^k_t) - P(s' | s,a,\mu^k_1)| \nonumber \\
& \geq P(s' | s,a,\mu^k_1) - L_\mu \sum_{k=2}^\infty c^k_{\mu,k} \nonumber \\
& \geq P(s' | s,a,\mu^k_1) - \frac{L_\mu c_\mu}{2^\zeta} \label{eq:stoch_ker_dev}
\end{align}
% where the last inequality is obtained by choosing $\zeta = \Os(\log(1/\underline{P}))$.
and we know that $\mu^k_1 \in S(\epsilon^\net)$ due to the projection step in \eqref{eq:algorithm_iteration1}. As a result, the change in stochastic kernel $P(\cdot \mid s, a, \mu^k_t)$ can be bounded by controlling the inter-episodic learning coefficient $\zeta$. Let us define the probability of reaching state $j \in \As$ from state $i \in \Ss$ in time $t \geq H:= \max_{\tilde\mu \in S(\epsilon^\net)} H(\tilde \mu)$ under the stochastic kernels induced by mean-field $(\mu^k_1, \ldots, \mu^k_t)$ as $P(s_t = j | s_1 = i, (\mu^k_1, \ldots, \mu^k_t))$. From Assumption \ref{asm:comm} we know that there exists a set of actions $(\tilde a_1, \ldots, \tilde a_t)$ such that $P(s_t = j | s_1 = i, (a_1 = \tilde a_1, \ldots, a_t = \tilde a_t),(\mu^k_1, \ldots, \mu^k_1)) \geq \underline P$. Using these facts we can deduce,
\begin{align*}
    P(s_t = j | s_1 & = i, (\mu^k_1, \ldots, \mu^k_t)) \geq P(s_t = j | s_1 = i, (a_1 = \tilde a_1, \ldots, a_t = \tilde a_t),(\mu^k_1, \ldots, \mu^k_t)) \\
    & = \prod_{l = 1}^{t-1} \sum_{s_1 = i, s_l \in \Ss, s_k = j} P(s_{l+1} = \tilde s_{l+1} | s_{l} = \tilde s_{l}, a_l = \tilde{a}_l, \mu^k_l) \pi^k_l(a_l = \tilde a_l | s_l = \tilde s_l) \\
    & \geq \prod_{l = 1}^{t-1} \sum_{s_1 = i, s_l \in \Ss, s_k = j} P(s_{l+1} = \tilde s_{l+1} | s_{l} = \tilde s_{l}, a_l = \tilde{a}_l, \mu^k_l) \underline\pi \\
    & \geq \prod_{l = 1}^{t-1} \Big(\sum_{s_1 = i, s_l \in \Ss, s_k = j} P(s_{l+1} = \tilde s_{l+1} | s_{l} = \tilde s_{l}, a_l = \tilde{a}_l, \mu^k_1) - |\Ss|\frac{L_\mu c_\mu}{2^\zeta} \Big)\underline\pi \\
    & \geq \prod_{l = 1}^{t-1} \sum_{s_1 = i, s_l \in \Ss, s_k = j} P(s_{l+1} = \tilde s_{l+1} | s_{l} = \tilde s_{l}, a_l = \tilde{a}_l, \mu^k_1) - C(t)|\Ss|\frac{L_\mu c_\mu}{2^\zeta} \underline\pi \\
    & \geq \underline P - C(2H)|\Ss|\frac{L_\mu c_\mu}{2^\zeta} \underline\pi^{2H} \geq  \underline P/2
\end{align*}
The first inequality is obtained using the fact that under exploration noise $\psi^k_t = \psi \in (0,1-c_\pi)$, $\pi^k_t(a | s)$ is lower bounded by $\underline\pi$. The second inequality follows from Lipschitzness of stochastic kernel $P$ and inter-episodic learning coefficient $\zeta > 1$ as shown in \eqref{eq:stoch_ker_dev}. The third inequality is obtained by using the fact that probability $P(s' | s,a,\mu) \leq 1$ and for $\zeta$ high enough $|\Ss| \frac{L_\mu c_\mu}{2^\zeta} \leq 1$. The fourth inequality uses the lower bound on $P^{\tilde \mu}_{ij}$ for $i,j \in \Ss$ and $\tilde \mu \in S(\epsilon^\net)$ as shown in \eqref{eq:P_mu_tilde}. The final bound uses the fact that for $\zeta$ large enough $C(2H)|\Ss|\frac{L_\mu c_\mu}{2^\zeta} \underline\pi^{2H} \leq \underline P/2$.
\end{proof}

\subsection{Proof of Lemma \ref{lem:tran_prob_conv}}
\begin{proof}
In this proof we provide finite sample convergence bounds for the transition probability estimation \eqref{eq:consistency_estimator} under the slowly time-varying MC setting. The proof of Lemma \ref{lem:tran_prob_conv} relies on Freedman's inequality \cite{freedman1975tail} similar to the analysis of Theorem 4 in \cite{hsu2019mixing}. Our lemma generalizes transition probability estimation for the slowly time-varying MC setting. The proof relies on introducing a stochastic process $Y_t$ (dependent on visitation of a fixed pair of states $i,j$) which is shown to be a Martingale difference sequence. The transition probability estimation error is shown to be a function of the sum of $Y_t$ and a drift term due to the slowly time-varying MC setting. The drift term is shown to be small due to the Lipschitz property of transition dynamics and the slowly time-varying MC. Then, using Freedman's inequality, we show that the estimation error is monotonically decreasing with the visitation number of the pair of states $i,j$. Finally, we prove a high confidence lower bound on the visitation number of any pair of states $i,j$ under the sufficient exploration condition (Lemma \ref{lem:suff_exp}), yielding our convergence result.

% This is obtained using the error accumulated due to the slowly time-varying MC under Lipschitzness of the transition probability matrix.

	We recall the definition of $\epsilon^k_P := \lVert \hat{P}^k_{T} - P^k_1 \rVert_F$ where we use $P^k_t$ as a shorthand for $P_{\pi^k_t,\mu^k_t}$. We use Freedman's inequality to obtain the estimation error for estimator $\hat{P}^k_{T}$ as in \cite{hsu2019mixing}. Furthermore, since we are dealing with a single episode $k$ we suppress the use of episode $k$ for clarity. Let $\Fs_t$ be the $\sigma$-field generated by $\{s_1,\mu_1,\pi_1, \ldots, s_t,\mu_t,\pi_t\}$. Let us start by fixing a pair of states $(i,j)$ for any $i,j \in \Ss$. Next let us define a stochastic process $Y_t$ such that $Y_1:=0$ and for $t \geq 2$
	\begin{align*}
		Y_t := \bbone\{s_{t-1} = i\} \big(\bbone\{s_t = j\} - P_{t-1}(i,j) \big)
	\end{align*}
	where $P_{t-1}(i,j)$ is the transition probability of going from state $i$ to $j$ from time $t-1$ to $t$. %{\color{blue} [Insert Corollary 1 from [Hsu et al.]]} 
	The stochastic process $(Y_t)_{t \in [T]}$ is a Martingale Difference Sequence since $Y_t$ is $\Fs_t$-measurable, and for $t \geq 2$
	\begin{align*}
		\EE[Y_t \mid \Fs_{t-1}] & = \EE \big[ \bbone\{s_{t-1} = i\} \big(\bbone\{s_t = j\} - P_{t-1}(i,j) \big) \mid \Fs_{t-1} \big], \\
		& = \bbone\{s_{t-1} = i\} \Big( P_{t-1}(i,j) -  P_{t-1}(i,j)\Big) = 0.
	\end{align*}
	Furthermore, $\forall t \in [T]$, $Y_t \in [-P_{t-1}(i,j),1-P_{t-1}(i,j)] \subset [-1,1]$.  Summing up $Y_t$ for $t \in [T]$,
	\begin{align}
		S_T := \sum_{t=1}^T Y_t & = \sum_{t=2}^T \bbone\{s_{t-1} = i\} \big(\bbone\{s_t = j\} - P_{t-1}(i,j) \big), \nonumber \\
		& = \sum_{t=2}^T \bbone\{s_{t-1} = i\} \big(\bbone\{s_t = j\} - P_{1}(i,j) + P_{1}(i,j) - P_{t-1}(i,j) \big), \nonumber \\
		& = N_{i,j} - N_i P_{1}(i,j) + \sum_{t=2}^T \bbone\{s_{t-1} = i\} \tilde{P}_{t-1}(i,j), \label{eq:S_T}
	\end{align}
	where $\tilde{P}_t := P_1 - P_t$ is the drift in the true transition probability. For use in Freedman's inequality, consider the process
	\begin{align*}
		&\EE[Y_t^2 \mid \Fs_{t-1}] = \EE \big[ \bbone\{s_{t-1} = i\} \big( \bbone\{ s_t = j \} P_{t-1}(i,j) + P_{t-1}^2(i,j) \big) \mid \Fs_{t-1} \big], \\
		& = \bbone\{s_{t-1} = i\} P_{t-1}(i,j) \big(1 -  P_{t-1}(i,j)  \big), \\
		& = \bbone\{s_{t-1} = i\} \big( P_{1}(i,j) - \tilde P_{t-1}(i,j) -  P^2_{1}(i,j) + 2 P_{1}(i,j) \tilde P_{t-1}(i,j) - \tilde P^2_{t-1}(i,j)  \big), \\
		& = \bbone\{s_{t-1} = i\} P_{1}(i,j) \big( 1 -  P_{1}(i,j) \big) + \bbone\{s_{t-1} = i\} \tilde P_{t-1}(i,j) \big(2 P_{1}(i,j)  -1  - \tilde P_{t-1}(i,j)  \big).
	\end{align*}
	Since $\EE[Y_t^2 \mid \Fs_{t-1}] \geq 0$, both terms in the above expression are positive. Hence its summation $V_T$ will be
	\begin{align}
		V_T & := \sum_{t=2}^T \EE[Y_t^2 \mid \Fs_{t-1}] =  N_i P_{1}(i,j) \big( 1 -  P_{1}(i,j) \big) + \sum_{t=2}^T \bbone\{s_{t-1} = i\} \tilde P_{t-1}(i,j) \big(2 P_{1}(i,j)  -1  - \tilde P_{t-1}(i,j)  \big). \label{eq:V_T}
	\end{align}
	Again both parts of the above expression are positive. Recalling \eqref{eq:P_est} we write the estimation error as
	\begin{align*}
		\hat{P}_{T}(i,j) - P_1(i,j) & = \frac{N_{i,j} - N_i P_1(i,j) + 1/S - P_1(i,j)}{N_i+1}, \\
		& = \frac{N_{i,j} - N_i P_1(i,j)}{N_i + 1} + \frac{1/S - P_1(i,j)}{N_i + 1}, \\
		& = \frac{S_T - \sum_{t=2}^T \bbone\{s_{t-1} = i\} \tilde{P}_{t-1}(i,j)}{N_i + 1} + \frac{1/S - P_1(i,j)}{N_i + 1},
	\end{align*}
	where the last equality is due to \eqref{eq:S_T}. Applying Corollary 1 from \cite{hsu2019mixing}, which is based on Freedman's inequality, we get
	\begin{align}
		& \lvert \hat{P}_{T}(i,j) - P_1(i,j) \rvert \nonumber \\
		& \leq \sqrt{\frac{2c V_T \tau_{T,\delta_P}}{(N_i + 1)^2}} + \frac{4 \tau_{T,\delta_P} + \sum_{t=2}^T \bbone\{s_{t-1} = i\} \lvert \tilde{P}_{t-1}(i,j) \rvert +  \lvert 1/S - P_1(i,j) \rvert}{N_i + 1}, \nonumber \\ 
		& = \bigg( \frac{2c N_i P_{1}(i,j) \big( 1 -  P_{1}(i,j) \big)  \tau_{T,\delta_P} }{(N_i + 1)^2} \nonumber \\
		& \hspace{2.7cm}+ \frac{2c \sum_{t=2}^T \bbone\{s_{t-1} = i\} \tilde P_{t-1}(i,j) \big(2 P_{1}(i,j)  -1  - \tilde P_{t-1}(i,j)  \big) \tau_{T,\delta_P}}{(N_i + 1)^2} \bigg)^{\frac{1}{2}}\nonumber \\
		& \hspace{4cm}+ \frac{4 \tau_{T,\delta_P} + \sum_{t=2}^T \bbone\{s_{t-1} = i\} \lvert \tilde{P}_{t-1}(i,j) \rvert +  \lvert 1/S - P_1(i,j) \rvert}{N_i + 1}, \nonumber \\
		& \leq \sqrt{\frac{2c N_i P_{1}(i,j) \big( 1 -  P_{1}(i,j) \big) \tau_{T,\delta_P}}{(N_i + 1)^2}} + \frac{\sqrt{2c \sum_{t=2}^T \lvert \tilde P_{t-1}(i,j) \rvert \tau_{T,\delta_P}}}{N_i + 1} \nonumber \\
		& \hspace{6cm}+ \frac{4 \tau_{T,\delta_P} + \sum_{t=2}^T  \lvert \tilde{P}_{t-1}(i,j) \rvert +  \lvert 1/S - P_1(i,j) \rvert}{N_i + 1}, \nonumber \\
		& \leq \sqrt{\frac{2c \tau_{T,\delta_P}}{N_i + 1}} + \frac{\sqrt{2c \sum_{t=2}^T \lvert \tilde P_{t-1}(i,j) \rvert \tau_{T,\delta_P}}}{N_i + 1}+ \frac{4 \tau_{T,\delta_P} + \sum_{t=2}^T \lvert \tilde{P}_{t-1}(i,j) \rvert +  \lvert 1/S - P_1(i,j) \rvert}{N_i + 1}, \label{eq:P_hat_P_1}
	\end{align}
	with probability at least $1-\delta_P/(2S^2)$, where $\tau_{T,\delta_P} = \Os(\log(\frac{2 S^3 \log(T)}{\delta_P}))$. We used equation \eqref{eq:V_T} to obtain the second inequality. Analyzing $\lvert \tilde P_t(i,j) \rvert$,
	\begin{align}
		\lvert \tilde P_t(i,j) \rvert & \leq \lVert P_1 - P_t \rVert_F = \lVert P_{\pi_1,\mu_1} - P_{\pi_t,\mu_t} \rVert_F, \nonumber \\
		& \leq \sum_{l=1}^{t-1} \big( \lVert P_{\pi_l,\mu_l} - P_{\pi_l,\mu_{l+1}} \rVert_F + \lVert  P_{\pi_l,\mu_{l+1}} -  P_{\pi_{l+1},\mu_{l+1}}\rVert_F \big), \nonumber \\
		&  \leq \sum_{l=1}^{t-1} \big( L^\mu_P \lVert \mu_{l+1} - \mu_l \rVert_1 + L^\pi_P \lVert \pi_{l+1} - \pi_l \rVert_{TV} \big) \nonumber \\
		& \leq \sum_{l=2}^{t} \big( L^\mu_P c_{\mu,l} + L^\pi_P c_{\pi,l} \big), \nonumber \\
		& \leq ( L^\mu_P c_\mu + L^\pi_P c_\pi )\sum_{l=2}^{t} l^{-\zeta}, \nonumber \\
		& \leq \frac{L^\mu_P  c_\mu + L^\pi_P  c_\pi}{\zeta - 1} 2^{1 - \zeta} = \tilde{L}_P 2^{1-\zeta}. \label{eq:P_drift}
	\end{align}
	where $c_{\mu,t}:=c_\mu t^{-\zeta}$, $c_{\pi,t}:=c_\pi t^{-\zeta}$ and $\tilde{L}_P:=10 (L^\mu_P c_\mu + L^\pi_P c_\pi)$ for $\zeta \geq 1.1$. The second inequality above is due to the Lipschitz conditions on $P$ in Lemma \ref{lem:tran_prob_conv} and the third inequality is due to the fact that $\lVert \mu \rVert_1, \lVert \pi \rVert_{TV} \leq 1$ for any $\mu \in \Ps(\Ss)$ and $\pi \in \Ps$. Now the estimation error can be bounded using \eqref{eq:P_hat_P_1} and \eqref{eq:P_drift}:
	\begin{align}
		& \lvert \hat{P}_{T}(i,j) - P_1(i,j) \rvert \nonumber \\
		& \leq \sqrt{\frac{2c \tau_{T,\delta_P}}{N_i + 1}} + \frac{\sqrt{2c \tau_{T,\delta_P}\tilde{L}_P T}}{N_i + 1} 2^{\frac{1-\zeta}{2}}+ \frac{4 \tau_{T,\delta_P} + \tilde{L}_P T 2^{1-\zeta} +  \lvert 1/S - P_1(i,j) \rvert}{N_i + 1}, \label{eq:P_inter_bound}
	\end{align}
	with probability at least $1-\delta_P/(2S^2)$. Next we need to lower bound $N_i$. In the following lemma  we show that due to the sufficient exploration condition, $N_i$ grows at least linearly with $T$ with high probability.
	\begin{lemma} \label{lem:bound_N_i}
		Using Lemma \ref{lem:suff_exp}, $N_i \geq T/T_e$ with probability at least $1 - \delta_P/(2S^2)$, where 
		\begin{align} \label{eq:T_e}
			T_e := \Os \Big( \frac{1}{\sigma} \log \Big( \frac{2S^3}{\delta_P} \Big) \Big).
		\end{align}
	\end{lemma}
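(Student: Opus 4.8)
\textbf{Proof plan for Lemma \ref{lem:bound_N_i}.}
The plan is to lower-bound $N_i$ by a sum of indicator variables that are spaced $H$ timesteps apart, so that each of them inherits the uniform conditional lower bound $\sigma$ supplied by Lemma \ref{lem:suff_exp}, and then to apply a Chernoff-type concentration bound to that sum. First I would note that Lemma \ref{lem:suff_exp} gives $P\big((s_t,a_t)=(s,a)\mid\Fs_{t-H}\big)\geq\sigma$ for every pair $(s,a)$ and every $t\geq H$; summing over $a\in\As$ yields the state-only bound $P(s_t=i\mid\Fs_{t-H})\geq\sigma$ for the fixed $i$ and all $t\geq H$. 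Next I would partition the horizon $[T]$ into $M:=\lfloor T/H\rfloor-1$ disjoint length-$H$ blocks and select from the $m$-th block the single representative time $t_m:=(m+1)H$, so that $t_1<t_2<\cdots<t_M$ are all distinct and $t_m-H=t_{m-1}$. Defining $Z_m:=\bbone\{s_{t_m}=i\}$, the distinctness of the $t_m$ gives the deterministic inequality $N_i\geq\sum_{m=1}^M Z_m$, while $\Fs_{t_m-H}\supseteq\Fs_{t_{m-1}}$ ensures $\EE[Z_m\mid\Fs_{t_m-H}]\geq\sigma$.

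It then remains to show that $\sum_{m=1}^M Z_m$ does not fall far below its conditional mean. Since the $Z_m$ are neither independent nor have exactly-known conditional means (only a lower bound $\sigma$), I would not invoke a vanilla Bernoulli–Chernoff bound; instead I would run the standard exponential-supermartingale argument: for $\eta>0$ the process $\widetilde M_m:=\exp\!\big(-\eta\sum_{\ell\leq m}Z_\ell\big)\prod_{\ell\leq m}\EE[e^{-\eta Z_\ell}\mid\Fs_{t_\ell-H}]^{-1}$ is a nonnegative martingale with mean $1$, and using $\EE[e^{-\eta Z_\ell}\mid\Fs_{t_\ell-H}]\leq 1-\sigma(1-e^{-\eta})\leq e^{-\sigma(1-e^{-\eta})}$ together with Markov's inequality gives $P\!\big(\sum_{m\leq M}Z_m\leq\sigma M/2\big)\leq e^{-c\sigma M}$ for an absolute constant $c>0$ (e.g.\ $\eta=1$, $c=1-e^{-1}-\tfrac12>0$). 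Equivalently, one may couple $(Z_m)$ from below by an i.i.d.\ $\mathrm{Bernoulli}(\sigma)$ sequence and apply the classical multiplicative Chernoff bound. Finally I would choose $T_e$ as in \eqref{eq:T_e}: this makes $e^{-c\sigma M}\leq\delta_P/(2S^2)$ (which requires $M=\Omega(\sigma^{-1}\log(S^2/\delta_P))$, hence a lower bound on $T$ absorbed into the statement) and simultaneously $T/T_e\leq\sigma M/2$, so that on the complementary event of probability at least $1-\delta_P/(2S^2)$ we obtain $N_i\geq\sum_{m\leq M}Z_m\geq\sigma M/2\geq T/T_e$.

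The main obstacle is the second step: the summands $Z_m$ are dependent and their conditional means are only bounded, not pinned down, so the concentration must be carried out through the martingale/stochastic-domination route rather than a textbook i.i.d.\ tail bound. A secondary technical point is aligning the block decomposition with the $H$-step conditioning window of Lemma \ref{lem:suff_exp}, i.e.\ spacing the representative times $t_m$ by at least $H$ so that each $Z_m$ is ``fresh'' relative to $\Fs_{t_m-H}$ while still leaving all the $t_m$ within $[T]$; the bookkeeping of the residual $\Os(\sigma)$ term from the floor $\lfloor T/H\rfloor$ and of the logarithmic/$S$ factors in \eqref{eq:T_e} is then routine.
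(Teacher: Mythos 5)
Your proposal is correct, and it reaches the same conclusion by the same high-level strategy (use the $\sigma$ lower bound from Lemma \ref{lem:suff_exp}, neutralize the $H$-step conditioning delay by grouping time indices, then apply a martingale concentration bound and a union bound), but the decomposition and the concentration tool genuinely differ from the paper's. The paper keeps \emph{every} time step: for each residue $l\in\{0,\dots,\tau\}$ it forms the centered indicators $X^i_t=\II\{i_t=i\}-\EE[\II\{i_t=i\}\mid\Fs_{t-\tau}]$ along the arithmetic progression $t=k\tau+l$, observes that each such subsequence is a martingale difference sequence with respect to $\tilde\Fs_{l,k}=\Fs_{k\tau+l}$, and applies Azuma--Hoeffding to each, so that $N_i\geq\sum_t\EE[\II\{i_t=i\}\mid\Fs_{t-\tau}]-O(\sqrt{\tau T\log(1/\delta_P)})\geq\sigma T-O(\sqrt{\tau T\log(1/\delta_P)})$; it then union-bounds over states. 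You instead discard all but one representative per length-$H$ block, stochastically dominate the resulting spaced indicators from below by i.i.d.\ Bernoulli$(\sigma)$, and apply a multiplicative Chernoff/exponential-supermartingale bound. The trade-off: the paper's route has leading term $\sigma T$ rather than your $\sigma T/(2H)$, but its additive Azuma fluctuation only beats the mean once $T\gtrsim H\sigma^{-2}\log(1/\delta_P)$, whereas your multiplicative bound is effective already at $T\gtrsim H\sigma^{-1}\log(1/\delta_P)$ --- a strictly milder burn-in in $\sigma$. Both discrepancies (the $H$ factor and the burn-in condition on $T$) are absorbed into the $\Os(\cdot)$ defining $T_e$ in \eqref{eq:T_e}, so either argument establishes the lemma as stated; your explicit acknowledgment of the implicit lower bound on $T$ is a point the paper's own write-up leaves tacit.
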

	\begin{proof}
		For a fixed state $i \in \Ss$, define event $\Es^k$ such that $\sum_{t = 1}^{k T_e} \bbone \{i_t = i\} \geq k$, for a given integer $k$ such that $1 \leq k \leq K_e := \lceil T/T_e \rceil $. We show that $\Es^K$ is a high probability event given the sufficient exploration condition (Lemma \ref{lem:suff_exp}). %Now we prove that given that the sufficient exploration assumption is satisfied, for $i \in [\lvert \Ss \times \As \rvert]$ the probability of event $\Es$ is at least $1-\delta$.
		For a given $i \in \Ss$%\lvert \Ss \times \As \rvert$
		, define a random variable,
		\begin{align*}
			X^i_t = \II\{ i_t = i \} - \EE[ \II \{i_t = i\} \mid \Fs_{t-\tau}]
		\end{align*}
		This random variable is an $\Fs_t$ adapted process with $\EE[X^i_t \mid \Fs_{t-\tau}] = 0$ and $\lvert X^i_t \rvert \leq 1$. Let $l$ be an integer $0 \leq l \leq \tau$. For a fixed $l$, define the process $Y^i_{l,k} = X^i_{k\tau + l}$ and define filtration $\tilde \Fs_{l,k} := \Fs_{k \tau+l}$. We can deduce that
		\begin{align*}
			\EE[Y^i_{l,k} \mid \tilde \Fs_{l,k-1}] = \EE [X^i_{k \tau+l} \mid \Fs_{k\tau+l - \tau}] = 0, \lvert Y^i_{l,k} \rvert \leq 1,
		\end{align*} 
		and $Y^i_{l,k}$ is $\tilde \Fs_{l,k}$ measurable. 
		Combining these facts, $Y^i_{l,k}$ is a Martingale Difference Sequence. Using Azuma-Hoeffding inequality and Lemma \ref{lem:suff_exp} we can deduce that for a given $i \in \Ss$ and $k = K_e$ where $T_e:=\Os( \ln(2S^3/\delta_P)/\sigma)$, $\sum_{t=1}^{T} \II\{i_t = i\} \geq K_e$ with probability at least $1-\delta_P/(2S^3)$. Taking a union bound over all $i \in \Ss$, we get $\sum_{t=0}^{T} \II\{i_t = i\} \geq K_e, \forall i \in \Ss$ with probability at least $1-\delta_P/(2 S^2)$. %Finally taking a union bound over all $k \in [K_e]$, we get $\sum_{t=0}^{k T_e} \II\{i_t = i\} \geq k$, $\forall i \in \Ss, \forall k \in [K]$ with probability at least $ 1 - \delta_P K_e/(2S^2T) \geq 1 - \delta_P/(2S^2)$.
% 		Combining these facts $Y^i_{l,k}$ is a Martingale Difference Sequence. Using Azuma Hoeffding inequality we can deduce that for a given $i \in \Ss$ and $k \in [K_e]$ where $T_e:=\Os( \ln(2S^3T/\delta_P)/\sigma)$, $\sum_{t=1}^{k T_e} \II\{i_t = i\} \geq k$ with probability at least $1-\delta_P/(2S^3T)$. Taking a union bound over all $i \in \Ss$, we get $\sum_{t=0}^{k T_e} \II\{i_t = i\} \geq k, \forall i \in \Ss$ for some $k \in [K_e]$ with probability at least $1-\delta_P/(2 S^2T)$. Finally taking a union bound over all $k \in [K_e]$, we get $\sum_{t=0}^{k T_e} \II\{i_t = i\} \geq k$, $\forall i \in \Ss, \forall k \in [K]$ with probability at least $ 1 - \delta_P K_e/(2S^2T) \geq 1 - \delta_P/(2S^2)$.
	\end{proof}
	\noindent Using \eqref{eq:P_inter_bound} and Lemma \ref{lem:bound_N_i} the estimation error can be written as
	\begin{align*}
		& \lvert \hat{P}_{T}(i,j) - P_1(i,j) \rvert \\
		& \leq \sqrt{\frac{2c \tau_{T,\delta_P} T_e}{T}} + \sqrt{\frac{2c \tau_{T,\delta_P}\tilde{L}_P}{T}} T_e 2^{\frac{1-\zeta}{2}}+ \frac{(4 \tau_{T,\delta_P} +  \lvert 1/S - P_1(i,j) \rvert) T_e}{T} + \tilde{L}_P T_e 2^{1-\zeta},
	\end{align*}
	with probability at least $1-\delta_P/S^2$. Using a union bound over all pairs $(i,j) \in \Ss \times \Ss$, the definition of Frobenius norm and the equivalence between $1$ and $2$ vector norms,
	\begin{align*}
		\lVert \hat{P} - P_1 \rVert_F  = \tilde\Os(T^{-1/2}) + \tilde\Os(T^{-1}) + \Os(2^{1-\zeta}).
	\end{align*}
	with probability at least $1-\delta_P$.	Hence we have completed the proof.
\end{proof}

\subsection{Proof of Lemma \ref{lem:Q_learn_conv}}
\begin{proof}
% \textcolor{blue}{what is new or different about this proof? What is the over-arching logic that we are pursuing in order to establish convergence? Is it following a standard technique in terms of analyzing stochastic fixed-point iterations? Provide some context here. Please do that for each proof.} 
In this proof we provide finite sample convergence bounds for the $Q$-learning update \eqref{eq:Q_learn} within the slowly time-varying MC setting. The proof of Lemma \ref{lem:Q_learn_conv} follows an approach similar to proof of Theorem 4 in \cite{qu2020finite} and extends the results to a slowly time-varying MC and learning exponent $0.5 < \nu \leq 1$, which is empirically observed to have better convergence properties. We also find that convergence cannot be guaranteed for $0 < \nu \leq 0.5$. The proof starts with breaking down the error $\epsilon^k_Q$ into several components. Then we obtain bounds on those components by proving certain properties like boundedness and the Martingale Difference Sequence property. Following that we prove that the error accumulated due to the slowly time-varying MC setting is small due to the Lipschitz properties of the transition probability and reward function. Finally combining all these results, the total error itself is shown to be converging using the contraction mapping property of the discounted Bellman update. 

This proof uses the fact that the coefficient of the learning rate $c_\beta$ in the $Q$-learning update \eqref{eq:Q_learn} is lower bounded by $\frac{1}{\sigma}\max \Big\{{\nu+\zeta-1,\frac{1}{(1-\sqrt{\rho})}} \Big\}$. In this proof we suppress the use of superscript $k$ since we are dealing with a single episode. Recall the definition of $\epsilon_Q := \lVert Q_{T} - Q^*_1 \rVert_\infty$ where $Q^*_1 := Q^*_{\mu_1}$ the optimal $Q$-function for the MDP induced by mean-field $\mu_1$. The $Q$-learning update can be written down as:
	\begin{align}
		Q_{t+1} = Q_t + \beta_t [e_{i_t}^T[F(\mu_t,Q_t) - Q_t] + w(t,\mu_t)]e_{i_t} \label{eq:Q_update}
	\end{align}
	where $\beta_t = \frac{c_\beta}{(t+1)^\nu}$ and
	\begin{align}
		w(t,\mu_t) & = \rho [\max_{a \in \As} Q_t(s_{t+1},a) - \EE_{s' \sim P(\cdot \mid s_t,a_t,\mu_t)}[\max_{a' \in \As}Q_t(s',a')], \label{eq:noise_Q_def}\\
		F(\mu_t,Q_t)(s,a) & = r(s,a,\mu_t) + \rho \EE_{s' \sim P(\cdot \mid s_t,a_t,\mu_t)}[\max_{a' \in \As}Q_t(s',a')] \nonumber
	\end{align}
	
	The noise $w(\cdot,\cdot)$ is bounded by $\bar{w}$, is measurable with respect to $\Fs_{t+1}$ and $\EE[w(t,\mu_t) \mid \Fs_t] = 0$. We further decompose the update rule using $D_t := \EE[e_{i_t}^T e_{i_t} \mid \Fs_{t-\tau}]$. The matrix $D_t$ is a diagonal matrix with elements $(d_{t,i})_{i \in \Ss \times \As}$, where $d_{t,i} = \PP(i_t = i \mid \Fs_{t-\tau})$, and from the sufficient exploration condition (Lemma \ref{lem:suff_exp}) we know that $d_{t,i} \geq \sigma > 0$.
	\begin{align*}
		Q_{t+1} & = Q_t + \beta_t D_t (F(\mu_t,Q_t) - Q_t) + \beta_t (e^T_{i_t} e_{i_t} - D_t)(F(\mu_t,Q_t) - Q_t) + \beta_t w(t,\mu_t) e_{i_t}, \\
		& = Q_t + \beta_t D_t (F(\mu_t,Q_t) - Q_t)  +\beta_t (e^T_{i_t} e_{i_t} - D_t)(F(\mu_{t-\tau},Q_{t-\tau}) - Q_{t-\tau}) + \beta_t w(t,\mu_t) e_{i_t} \\
		& \hspace{4cm} + \beta_t (e^T_{i_t} e_{i_t} - D_t)(F(\mu_t,Q_t) - F(\mu_{t-\tau},Q_{t-\tau}) - Q_t + Q_{t-\tau})
	\end{align*}
	Let us define
	\begin{align*}
		\epsilon_t & := (e^T_{i_t} e_{i_t} - D_t)(F(\mu_{t-\tau},Q_{t-\tau}) - Q_{t-\tau}) + \beta_t w(t,\mu_t) e_{i_t}, \\
		\phi_t & := (e^T_{i_t} e_{i_t} - D_t)(F(\mu_t,Q_t) - F(\mu_{t-\tau},Q_{t-\tau}) - Q_t + Q_{t-\tau})
	\end{align*}
	The process $\epsilon_t$ is $\Fs_{t+1}$ measurable and
	\begin{align*}
		& \EE[\epsilon_t \mid \Fs_{t-\tau}] \\
		& = \EE[e_{i_t}^T e_{i_t} - D_t \mid \Fs_{t-\tau}](F(\mu_{t-\tau},Q_{t-\tau}) - Q_{t-\tau}) + \EE[\EE[w(t,\mu_t) \mid \Fs_t] e_{i_t} \mid \Fs_{t-\tau} ] = 0.
	\end{align*}
	Hence, $\epsilon_t$ is a shifted Martingale Difference Sequence. Writing down the $Q$-function as a sum from $\tau$ (Lemma \ref{lem:suff_exp}) to $t$, we get
	\begin{align} \label{eq:Q_update_1}
		Q_{t+1} = \tilde B_{\tau-1,t} Q_{\tau} + \sum_{k=\tau}^{t} B_{k,t} F(\mu_k,Q_k) + \sum_{k=\tau}^{t} \beta_t \tilde B_{k,t}(\epsilon_k + \phi_k),
	\end{align}
	where $B_{k,t} = \beta_k D_k \prod_{l=k+1}^t (I - \beta_l D_l), \tilde B_{k,t} = \prod_{l=k+1}^t (I - \beta_l D_l)$, and $B_{k,t}$ and $\tilde B_{k,t}$ are diagonal matrices composed of elements $b_{k,t,i}$ and $\tilde b_{k,t,i}$ respectively. We also define $\beta_{k,t}$ and $\tilde \beta_{k,t}$ such that
	\begin{align*}
		\beta_{k,t} := \beta_k \prod_{l=k+1}^t (1 - \beta_l \sigma) \geq b_{k,t,i}, \hspace{1cm} \tilde \beta_{k,t} := \prod_{l=k+1}^t (1 - \beta_l \sigma) \geq \tilde b_{k,t,i}
	\end{align*}
	Next we compute the optimality gap $e^Q_t = \lVert Q_t - Q^*_{t} \rVert_\infty$, where $Q^*_{t}$ is the fixed point of the operator $F(\mu_t,\cdot)$.
	\begin{lemma} \label{lem:error_decomp}
		\begin{align} 
			e^Q_{t+1} & \leq \tilde B_{\tau-1,t} e^Q_{\tau} + \rho \max_{i} \sum_{k = \tau}^t b_{k,t,i} e^Q_k + \Big\lVert \sum_{k=\tau}^t \beta_k \tilde B_{k,t} (\epsilon_k + \phi_k) \Big\rVert_\infty \nonumber \\
			& \hspace{5cm} + L^\mu_Q \Big[ \tilde \beta_{\tau-1,t} \sum_{l=\tau}^t c_{\mu,l} + \sum_{k=\tau}^t \beta_{k,t} \sum_{l=k}^t c_{\mu,l} \Big] \label{eq:err_decomp}
		\end{align}
	\end{lemma}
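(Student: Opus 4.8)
The plan is to read the decomposition \eqref{eq:Q_update_1} as the statement that $Q_{t+1}$ is, coordinatewise, a convex combination of the iterate $Q_\tau$ and the one-step Bellman images $F(\mu_k,Q_k)$ for $\tau\le k\le t$, plus an additive noise-and-drift term, and then to compare each ingredient against the moving fixed point $Q^*_{t+1}$ of the $\rho$-contraction $F(\mu_{t+1},\cdot)$. To make ``convex combination'' precise I would first record the partition-of-unity identity that, for each coordinate $i\in\Ss\times\As$,
\begin{align*}
\tilde b_{\tau-1,t,i}+\sum_{k=\tau}^t b_{k,t,i}=1 ,
\end{align*}
proved by induction on $t$ (base case $t=\tau-1$: an empty product and empty sum; step: use $b_{k,t,i}=\beta_k d_{k,i}\tilde b_{k,t,i}$ and $\tilde b_{\tau-1,t,i}=(1-\beta_t d_{t,i})\tilde b_{\tau-1,t-1,i}$). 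Subtracting $Q^*_{t+1}$ coordinatewise on both sides of \eqref{eq:Q_update_1} and using this identity gives
\begin{align*}
Q_{t+1}-Q^*_{t+1}=\tilde B_{\tau-1,t}\big(Q_\tau-Q^*_{t+1}\big)+\sum_{k=\tau}^t B_{k,t}\big(F(\mu_k,Q_k)-Q^*_{t+1}\big)+\sum_{k=\tau}^t \beta_k\tilde B_{k,t}(\epsilon_k+\phi_k).
\end{align*}

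Next I would control the first two groups of terms. For each $k$, inserting $\pm Q^*_k=\pm F(\mu_k,Q^*_k)$ and using that $F(\mu_k,\cdot)$ is a $\rho$-contraction in $\lVert\cdot\rVert_\infty$ yields $\lVert F(\mu_k,Q_k)-Q^*_{t+1}\rVert_\infty\le\rho\,e^Q_k+\lVert Q^*_k-Q^*_{t+1}\rVert_\infty$, and similarly $\lVert Q_\tau-Q^*_{t+1}\rVert_\infty\le e^Q_\tau+\lVert Q^*_\tau-Q^*_{t+1}\rVert_\infty$. The fixed-point perturbation terms are handled by the standard estimate $\lVert Q^*_\mu-Q^*_{\mu'}\rVert_\infty\le (1-\rho)^{-1}\sup_Q\lVert F(\mu,Q)-F(\mu',Q)\rVert_\infty$; combined with Lipschitzness of $r$ and $P$ in $\mu$ and an a priori bound on the $Q$-learning iterates this produces a constant $L^\mu_Q$ for which $\lVert Q^*_k-Q^*_{t+1}\rVert_\infty\le L^\mu_Q\sum_{l=k}^t c_{\mu,l}$ and $\lVert Q^*_\tau-Q^*_{t+1}\rVert_\infty\le L^\mu_Q\sum_{l=\tau}^t c_{\mu,l}$, the last inequalities following because the mean-field update \eqref{eq:algorithm_iteration1} moves $\mu$ by at most $c_{\mu,l}$ in $\ell_1$ per step (both $\mu$ and $\hat P^\top\mu$ are probability vectors, so their difference has $\ell_1$-norm at most $2$, absorbed into $L^\mu_Q$).

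Finally I would take $\lVert\cdot\rVert_\infty$ of the decomposition above, distribute via the triangle inequality using nonnegativity of the diagonal weights, keep the tight form $\rho\max_i\sum_{k=\tau}^t b_{k,t,i}e^Q_k$ on the recursive term, and bound $b_{k,t,i}\le\beta_{k,t}$ and $\tilde b_{k,t,i}\le\tilde\beta_{k,t}$ on the drift contributions; collecting the pieces gives exactly \eqref{eq:err_decomp}, with the noise term $\sum_{k=\tau}^t\beta_k\tilde B_{k,t}(\epsilon_k+\phi_k)$ carried through verbatim to be estimated later via the martingale and Lipschitz arguments. I expect the main obstacle to be the Lipschitz-in-$\mu$ estimate of the optimal $Q$-function used in the second step, since it requires first establishing a uniform a priori bound on the $Q$-learning iterates and then pushing the MDP-parameter Lipschitz constants through the fixed-point perturbation inequality; the remainder is telescoping and bookkeeping over the diagonal products $\tilde\beta_{k,t}$ and $\beta_{k,t}$.
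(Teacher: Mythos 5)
Your proposal is correct and follows essentially the same route as the paper: unroll the update, subtract the moving fixed point $Q^*_{t+1}$ (the partition-of-unity identity you make explicit is left implicit in the paper), split $F(\mu_k,Q_k)-Q^*_{t+1}$ via $\pm Q^*_k$ using the $\rho$-contraction, and absorb the fixed-point drift through Lipschitzness of $Q^*_\mu$ in $\mu$ together with $\lVert\mu_{l+1}-\mu_l\rVert_1\le c_{\mu,l}$. The only cosmetic difference is that you obtain $L^\mu_Q$ from the generic fixed-point perturbation bound $\lVert Q^*_\mu-Q^*_{\mu'}\rVert_\infty\le(1-\rho)^{-1}\sup_Q\lVert F(\mu,Q)-F(\mu',Q)\rVert_\infty$, whereas the paper routes through the Simulation Lemma for $V^*_\mu$; both yield the same conclusion.
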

	\begin{proof}%[Proof of Lemma \ref{lem:error_decomp}]
		Using \eqref{eq:Q_update_1} and subtracting $Q^*_{t+1}$ from both sides,
		\begin{align*}
			Q_{t+1} - Q^*_{t+1} & = \tilde B_{\tau-1,t} (Q_{\tau} - Q^*_{t+1}) + \sum_{k=\tau}^{t} B_{k,t} (F(\mu_k,Q_k) - Q^*_{t+1}) + \sum_{k=\tau}^{t} \beta_t \tilde B_{k,t}(\epsilon_k + \phi_k)
		\end{align*}
		Hence we get,
		\begin{align*}
			e^Q_{t+1} & = \tilde B_{\tau-1,t} e^Q_{\tau} + \rho \sup_i \sum_{k=\tau}^t b_{k,t,i} e^Q_k+ \Big\lVert \sum_{k=\tau}^t \beta_k \tilde B_{k,t}(\epsilon_k + \phi_k) \Big\rVert_\infty \\
			& \hspace{5cm} + \Big\lVert \tilde B_{\tau-1,t} (Q^*_{\tau} - Q^*_{t+1}) + \sum_{k=\tau}^t B_{k,t}(Q^*_{k} - Q^*_{t+1}) \Big\rVert_\infty
		\end{align*}
		We can use the Simulation lemma and Lipschitzness of transition probability $P_{\pi,\mu}$ and reward function $R_\mu$ with respect to the mean-field $\mu$ (with corresponding constants $L^\mu_P$ and $L^\mu_R$ respectively), to prove Lipschitzness of $Q^*$ with $\mu$. Due to Lipschizness, we know that for $\mu,\mu' \in \Ps(\Ss)$
		\begin{align*}
			\lVert P_{\pi,\mu} - P_{\pi,\mu'} \rVert_F \leq L^\mu_P \lVert \mu - \mu'\rVert_1, \hspace{0.5cm} \lVert R_\mu - R_{\mu'} \rVert_\infty \leq L^\mu_R \lVert \mu - \mu'\rVert_1
		\end{align*}
		and using the Simulation Lemma \cite{lewis2012optimal} we know that
		\begin{align*}
			\lVert V^*_\mu - V^*_{\mu'} \rVert_\infty \leq \Big( L^\mu_R + \frac{L^\mu_P}{2(1-\rho)} \Big) \lVert \mu - \mu'\rVert_1
		\end{align*}
		where $V^*_\mu$ is the value function of the MDP induced by mean-field $\mu$ and $(1-\rho)^{-1}$ is an upper bound on the value functions due to bounded rewards. Hence
		\begin{align*}
			\lVert Q^*_\mu (s,a) - Q^*_{\mu'}(s,a) \rVert_\infty & = \rho \big( \langle P(\cdot \mid s,a,\mu) , V^*_\mu(\cdot) \rangle - \langle P(\cdot \mid s,a,\mu') , V^*_{\mu'}(\cdot) \rangle  \big), \\
			&  = \rho \big( \langle P(\cdot \mid s,a,\mu) , V^*_\mu(\cdot) \rangle - \langle P(\cdot \mid s,a,\mu) , V^*_{\mu'}(\cdot) \rangle \\
			& \hspace{3cm}+ \langle P(\cdot \mid s,a,\mu) , V^*_{\mu'}(\cdot) \rangle - \langle P(\cdot \mid s,a,\mu') , V^*_{\mu'}(\cdot) \rangle  \big), \\
			& \leq \rho\Big( L^\mu_R + \frac{L^\mu_P}{2(1-\rho)} \Big) \lVert \mu - \mu'\rVert_1 + \rho \frac{L^\mu_P}{2(1-\rho)} \lVert \mu - \mu'\rVert_1 = L^\mu_Q \lVert \mu - \mu'\rVert_1
		\end{align*}
		where $L^\mu_Q := \rho (L^\mu_R + L^\mu_P/(1-\rho))$. And thus
		\begin{align*}
			\lVert Q^*_{t} - Q^*_{t+1} \rVert_\infty \leq L^\mu_Q \lVert \mu_t - \mu_{t+1} \rVert_1
		\end{align*}
		now that we have shown the Lipschitzness of $Q^*$ with respect to $\mu$. Furthermore, as $\lVert \mu_t - \mu_{t+1} \rVert_1 \leq  c_{\mu,t}$, where $c_{\mu,t} := \frac{c_\mu}{(t+1)^\zeta}$, we get
		\begin{align*}
			\big\lVert \tilde B_{\tau-1,t} (Q^*_{\tau} - Q^*_{t+1}) + \sum_{k=\tau}^t B_{k,t}(Q^*_{k} - Q^*_{t+1}) \big\rVert_\infty \leq L^\mu_Q  \Big[ \tilde \beta_{\tau-1,t} \sum_{l=\tau}^t c_{\mu,l} + \sum_{k=\tau}^t \beta_{k,t} \sum_{l=k}^t c_{\mu,l} \Big]
		\end{align*}
		This concludes the proof.
	\end{proof}
	We next start by bounding the terms $\epsilon_t$ and $\phi_t$ in the error decomposition \eqref{eq:err_decomp}.
	\begin{lemma}
		\begin{align*}
			\lVert \epsilon_t \rVert_\infty & \leq \frac{2}{1-\rho} + C + \bar{w} =: \bar{\epsilon},\\
			\lVert \phi_t\rVert_\infty & \leq (L^\mu_R +  \frac{L^\mu_P}{1-\rho}) \sum_{k=1}^\tau  \lVert \mu_{t-k+1} - \mu_{t-k} \rVert_1 + 2 \bar{\epsilon} \sum_{k=1}^\tau \beta_{t-k}
		\end{align*}
	\end{lemma}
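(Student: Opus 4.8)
The plan is to treat the two estimates separately: the bound on $\|\epsilon_t\|_\infty$ is essentially a triangle inequality plus uniform boundedness of the iterates, while the bound on $\|\phi_t\|_\infty$ additionally exploits that the mean field and the $Q$-iterates drift only slowly over a window of length $\tau$. In both cases I would start from the observation that $A_t := e_{i_t}^\top e_{i_t} - D_t$ is a diagonal matrix with entries in $[-1,1]$ (since $D_t$ is diagonal with entries $d_{t,i}\in[\sigma,1]$ by Lemma \ref{lem:suff_exp}, and $e_{i_t}^\top e_{i_t}$ has a single unit entry), so that $\|A_t v\|_\infty \le \|v\|_\infty$ for every vector $v$.

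For $\epsilon_t$ I would bound $\|\epsilon_t\|_\infty \le \|A_t(F(\mu_{t-\tau},Q_{t-\tau}) - Q_{t-\tau})\|_\infty + \beta_t|w(t,\mu_t)| \le \|F(\mu_{t-\tau},Q_{t-\tau}) - Q_{t-\tau}\|_\infty + \bar w$, using $\beta_t = c_\beta/(t+1)^\nu \le 1$ on the range of interest and $|w(\cdot)|\le\bar w$. Since $F(\mu,\cdot)$ is a $\rho$-discounted Bellman operator with rewards in $[0,1]$, $\|F(\mu,Q)\|_\infty \le 1 + \rho\|Q\|_\infty$, and combining with the uniform iterate bound $\|Q_t\|_\infty \le C$ (of order $1/(1-\rho)$) gives $\|F(\mu,Q)-Q\|_\infty \le \tfrac{2}{1-\rho} + C$; adding $\bar w$ yields $\bar\epsilon$.

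For $\phi_t$, after dropping $A_t$ I am left with $\|F(\mu_t,Q_t) - F(\mu_{t-\tau},Q_{t-\tau}) - Q_t + Q_{t-\tau}\|_\infty$, which I would split into a mean-field part $\|F(\mu_t,Q_t)-F(\mu_{t-\tau},Q_t)\|_\infty$ and a $Q$-part $\|F(\mu_{t-\tau},Q_t)-F(\mu_{t-\tau},Q_{t-\tau})\|_\infty + \|Q_t - Q_{t-\tau}\|_\infty$. The mean-field part is controlled by Lipschitzness of the reward and transition kernel in $\mu$ (with constants $L^\mu_R, L^\mu_P$) together with $\|Q_t\|_\infty \le 1/(1-\rho)$ and $\rho<1$, giving at most $(L^\mu_R + \tfrac{L^\mu_P}{1-\rho})\|\mu_t-\mu_{t-\tau}\|_1 \le (L^\mu_R + \tfrac{L^\mu_P}{1-\rho})\sum_{k=1}^\tau \|\mu_{t-k+1}-\mu_{t-k}\|_1$ by telescoping. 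For the $Q$-part, the $\rho$-contraction of $F(\mu_{t-\tau},\cdot)$ bounds the first piece by $\rho\|Q_t - Q_{t-\tau}\|_\infty$, and adding the second gives a factor $(1+\rho)\le 2$; I then telescope $\|Q_t - Q_{t-\tau}\|_\infty \le \sum_{k=1}^\tau \|Q_{t-k+1}-Q_{t-k}\|_\infty$ and bound each one-step increment directly from the update rule \eqref{eq:Q_update} by $\|Q_{s+1}-Q_s\|_\infty \le \beta_s(\|F(\mu_s,Q_s)-Q_s\|_\infty + \bar w) \le \beta_s\bar\epsilon$, which produces the term $2\bar\epsilon\sum_{k=1}^\tau \beta_{t-k}$.

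The step I expect to be the main obstacle is establishing the uniform bound $\|Q_t\|_\infty \le C$ on which every estimate above relies: this needs the step sizes below one so that the $Q$-update is a genuine convex combination of quantities bounded by $1/(1-\rho)$, which only holds past the burn-in index $\tau$ — but since Lemma \ref{lem:suff_exp} lets us take $\tau$ (equivalently $H$) as large as needed and $\beta_t\to 0$, this is exactly the regime in which the argument is applied. The remainder is bookkeeping with the diagonal operator $A_t$, the $\rho$-contraction of the Bellman operator, and telescoping over the length-$\tau$ window, together with the Lipschitz estimates already used in Lemma \ref{lem:error_decomp}.
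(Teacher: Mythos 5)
Your proposal is correct and follows essentially the same route as the paper: bound $\lVert e_{i_t}^\top e_{i_t} - D_t\rVert_\infty \le 1$, use $\lVert Q_t\rVert_\infty \le 1/(1-\rho)$ and the form of the Bellman operator for $\epsilon_t$, and for $\phi_t$ combine the $\rho$-contraction in $Q$, Lipschitzness in $\mu$, telescoping over the length-$\tau$ window, and the one-step increment bound $\lVert Q_{s+1}-Q_s\rVert_\infty \le \beta_s\bar\epsilon$. The only (cosmetic) difference is that you telescope the arguments $\mu$ and $Q$ after splitting the $F$-difference once, whereas the paper telescopes the $F$-difference into one-step terms first and splits each; your remark that the uniform bound on $\lVert Q_t\rVert_\infty$ needs $\beta_t\le 1$ is a fair point that the paper glosses over.
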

	\begin{proof}
		Recalling the definition of $\epsilon_t$,
		\begin{align*}
			\lVert \epsilon_t \rVert_\infty & = \lVert (e^T_{i_t} e_{i_t} - D_t)(F(\mu_{t-\tau},Q_{t-\tau}) - Q_{t-\tau}) + \beta_t w(t,\mu_t) e_{i_t} \rVert_\infty, \\
			& \leq \lVert e^T_{i_t} e_{i_t} - D_t \rVert_\infty \lVert F(\mu_{t-\tau},Q_{t-\tau}) - Q_{t-\tau} \rVert_\infty + \lvert w(t,\mu_t) \rvert_\infty \lVert  e_{i_t} \rVert_\infty, \\
			& \leq \lVert F(\mu_{t-\tau},Q_{t-\tau}) \rVert_\infty + \lVert Q_{t-\tau} \rVert_\infty + \bar{w} \leq \frac{2}{1-\rho} + C + \bar{w} =: \bar{\epsilon}
		\end{align*}
		where $C = 1$ and $\bar{w} = \frac{2}{1-\rho}$ due to $\lVert Q_t \rVert_\infty \leq \frac{1}{1-\rho}$, contractive property of $F$ and the definitions of noise $w$ \eqref{eq:noise_Q_def} and $Q$-update \eqref{eq:Q_update}. Similarly for $\phi$ we get
		\begin{align}
			\lVert \phi_t \rVert_\infty & = \lVert (e^T_{i_t} e_{i_t} - D_t)(F(\mu_t,Q_t) - F(\mu_{t-\tau},Q_{t-\tau}) - Q_t + Q_{t-\tau}) \rVert_\infty, \nonumber \\
			& \leq \lVert F(\mu_t,Q_t) - F(\mu_{t-\tau},Q_{t-\tau}) \rVert_\infty + \lVert Q_{t-\tau} - Q_t \rVert_\infty, \nonumber\\
			& \leq \sum_{k=1}^\tau \lVert F(\mu_{t-k+1},Q_{t-k+1}) - F(\mu_{t-k},Q_{t-k}) \rVert_\infty + \sum_{k=1}^\tau \lVert Q_{t-k+1} - Q_{t-k} \rVert_\infty. \label{eq:norm_phi_t}
		\end{align}
		We first analyze the first summand in equation \eqref{eq:norm_phi_t}
		\begin{align}
			& \lVert F(\mu_{t-k+1},Q_{t-k+1}) - F(\mu_{t-k},Q_{t-k}) \rVert_\infty \nonumber\\
			& \leq \lVert F(\mu_{t-k+1},Q_{t-k+1}) - F(\mu_{t-k+1},Q_{t-k}) \rVert_\infty + \lVert F(\mu_{t-k+1},Q_{t-k}) - F(\mu_{t-k},Q_{t-k}) \rVert_\infty, \nonumber\\
			& \leq \rho \lVert Q_{t-k+1} - Q_{t-k} \rVert_\infty + \max_{s,a} \lvert R(s,a,\mu_{t-k+1}) - R(s,a,\mu_{t-k}) \rvert \nonumber\\
			& \hspace{5.7cm} + \max_{s,a} \lvert P(\cdot \mid s,a,\mu_{t-k+1}) - P(\cdot \mid s,a,\mu_{t-k}) \rvert \frac{1}{1-\rho}, \nonumber\\
			& \leq  \rho \lVert Q_{t-k+1} - Q_{t-k} \rVert_\infty  + \Big(L^\mu_R +  \frac{L^\mu_P}{1-\rho} \Big) \lVert \mu_{t-k+1} - \mu_{t-k} \rVert_1 \label{eq:first_sumnd_norm_phi}
		\end{align}
		Similarly the second summand in \eqref{eq:norm_phi_t} is
		\begin{align}
			\lVert Q_{t-k+1} - Q_{t-k} \rVert_\infty & = \beta_{t-k} \lVert e^T_{i_{t-k}} \big( F(\mu_{t-k},Q_{t-k}) - Q_{t-k} + w(t-k,\mu_{t-k}) \big) e_{i_{t-k}} \rVert_\infty, \nonumber \\
			& \leq \beta_{t-k} \lVert F(\mu_{t-k},Q_{t-k}) \rVert_\infty + \beta_{t-k} \lVert Q_{t-k} \rVert_\infty + \bar{w} \leq \beta_{t-k} \bar{\epsilon}. \label{eq:second_sumnd_norm_phi}
		\end{align}
		Substituting \eqref{eq:first_sumnd_norm_phi}, \eqref{eq:second_sumnd_norm_phi} into \eqref{eq:norm_phi_t}
		\begin{align*}
			\lVert \phi_t \rVert_\infty & \leq (L^\mu_R + \frac{L^\mu_P}{1-\rho} ) \sum_{k=1}^\tau  \lVert \mu_{t-k+1} - \mu_{t-k} \rVert_1 + 2 \bar{\epsilon} \sum_{k=1}^\tau \beta_{t-k}.
		\end{align*}
	\end{proof}
	\noindent Having proved bounds on $\epsilon_t$ and $\phi_t$, we now prove some properties of the learning rates $c_{\mu,t} := \frac{c_\mu}{(t+1)^\zeta}$ and $\beta_t = \frac{c_\beta}{(t+1)^\nu}$ where $0.5 < \nu \leq 1$, $\zeta > 1$ and $c_\beta \geq \frac{\nu}{\sigma}$.
	\begin{lemma} \label{lem:learn_rate_prop}
		Below we present some results regarding the learning rate $\beta_t$ and the associated variables.
		\begin{enumerate}
			\item $
			\tilde \beta_{k,t} \leq \big( \frac{k+2}{t+2} \big)^{c_\beta \sigma} \leq \big( \frac{k+2}{t+2} \big)^{\nu},
			$
			\item $
			\beta_{k,t} \leq \frac{c_\beta}{(k+1)^\nu} \big( \frac{k+2}{t+2} \big)^{c_\beta \sigma} \leq 2 \frac{c_\beta}{(t+2)^\nu},
			$
			\item $
			\sum_{k=1}^t \beta^2_{k,t} \leq \frac{2 c_\beta^2}{2 c_\beta \sigma - 2 \nu +1} \frac{1}{(t+2)^{2\nu -1}},
			$
			\item $
			\sum_{k=\tau}^t \beta_{k,t} \sum_{l = k -\tau}^{k-1} \beta_l \leq \frac{2 c_\beta^2 (\tau + 1)^\nu \tau}{1 + c_\beta \sigma - 2\nu} \frac{1}{(t+2)^{2\nu - 1}}
			$
		\end{enumerate}
	\end{lemma}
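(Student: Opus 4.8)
All four bounds are elementary estimates on products and weighted sums of the polynomially decaying step sizes $\beta_t=c_\beta/(t+1)^\nu$, and the engine behind every one of them is the elementary inequality $1-x\le e^{-x}$ for $x\ge0$ — which turns the products $\tilde\beta_{k,t}=\prod_{l=k+1}^t(1-\beta_l\sigma)$ into $\exp(-\sigma\sum_{l=k+1}^t\beta_l)$ — together with crude two-sided comparisons such as $k+1\le k+2\le 2(k+1)$ and comparison of sums with integrals. I would prove the items in order, since (2)--(4) reduce to (1).

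For (1): since $\nu\le1$ we have $\beta_l\ge c_\beta/(l+1)$, so $\sum_{l=k+1}^t\beta_l\ge c_\beta\sum_{l=k+1}^t(l+1)^{-1}\ge c_\beta\ln\frac{t+2}{k+2}$ by comparison with $\int_{k+1}^{t+1}(x+1)^{-1}\,dx$; combining with $1-x\le e^{-x}$ yields $\tilde\beta_{k,t}\le\big(\tfrac{k+2}{t+2}\big)^{c_\beta\sigma}$, and the second inequality follows because the standing hypothesis $c_\beta\ge\nu/\sigma$ gives $c_\beta\sigma\ge\nu$ while $\frac{k+2}{t+2}\le1$. For (2), the first inequality is immediate from $\beta_{k,t}=\beta_k\tilde\beta_{k,t}$ and (1); for the second, bound $(k+1)^{-\nu}\le2(k+2)^{-\nu}$ (using $k+2\le2(k+1)$ and $\nu\le1$) to get $\beta_{k,t}\le 2c_\beta(k+2)^{c_\beta\sigma-\nu}/(t+2)^{c_\beta\sigma}$, then use $c_\beta\sigma-\nu\ge0$ and $k+2\le t+2$ to replace the numerator by $(t+2)^{c_\beta\sigma-\nu}$.

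For (3) and (4): squaring the first bound of (2) and summing gives $\sum_{k=1}^t\beta_{k,t}^2\le\frac{4c_\beta^2}{(t+2)^{2c_\beta\sigma}}\sum_{k=1}^t(k+2)^{2c_\beta\sigma-2\nu}$, and the last sum is at most $\int_2^{t+3}x^{2c_\beta\sigma-2\nu}\,dx\le\frac{(t+3)^{2c_\beta\sigma-2\nu+1}}{2c_\beta\sigma-2\nu+1}$ once $2c_\beta\sigma-2\nu+1>0$; replacing $t+3$ by a constant multiple of $t+2$ collects the rate $(t+2)^{-(2\nu-1)}$ with the claimed denominator. Item (4) needs one extra step: first bound the inner block by $\sum_{l=k-\tau}^{k-1}\beta_l\le c_\beta\tau(\tau+1)^\nu/(k+1)^\nu$, using $l+1\ge k-\tau+1\ge(k+1)/(\tau+1)$ for $l\ge k-\tau$ and $k\ge\tau$, then feed the first bound of (2) into $\sum_{k=\tau}^t\beta_{k,t}/(k+1)^\nu$ and repeat the integral comparison, now with exponent $c_\beta\sigma-2\nu$ so the denominator becomes $1+c_\beta\sigma-2\nu$. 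A cleaner alternative for (3) is the recursion $S_t:=\sum_{k=1}^t\beta_{k,t}^2=\beta_t^2+(1-\beta_t\sigma)^2S_{t-1}\le\beta_t^2+(1-\beta_t\sigma)S_{t-1}$ plus an induction that $S_t\le C/(t+2)^{2\nu-1}$, whose inductive step reduces to a scalar inequality that holds precisely because $c_\beta\sigma$ is large.

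The only real difficulty is bookkeeping. One must check that the hypotheses on $c_\beta$ (the displayed $c_\beta\ge\nu/\sigma$, reinforced by the lower bound $c_\beta\ge\frac1\sigma\max\{\nu+\zeta-1,(1-\sqrt\rho)^{-1}\}$ used in the proof of Lemma~\ref{lem:Q_learn_conv}) put the exponents $c_\beta\sigma-\nu$, $1+c_\beta\sigma-2\nu$, and $1+2c_\beta\sigma-2\nu$ strictly on the positive side, so the geometric/integral sums actually decay at the rate $(t+2)^{-(2\nu-1)}$ rather than diverging; and that the many crude replacements ($k+2\le2(k+1)$, $t+3$ versus $t+2$, $\sum$ versus $\int$) are arranged so that the residual constants collapse to the clean factors $2$ and $4$ in the statement. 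I expect no other obstacle.
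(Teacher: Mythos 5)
Your proposal is correct and follows essentially the same route as the paper's own proof: the exponential bound $1-x\le e^{-x}$ plus $\beta_l\ge c_\beta/(l+1)$ and an integral comparison for item (1), the factor-of-$2$ comparison $k+2\le 2(k+1)$ together with $c_\beta\sigma\ge\nu$ for item (2), and integral comparisons of the resulting power sums (with the inner block of item (4) bounded via $l+1\ge (k+1)/(\tau+1)$) for items (3) and (4). The only divergences are cosmetic bookkeeping of constants and your optional induction-based alternative for (3), neither of which changes the argument.
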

	\begin{proof}
		For part (1) we start by recalling the definition of $\tilde{\beta}_{k,t}$ for $k \in [t]$
		\begin{align*}
			\tilde{\beta}_{k,t} & = \prod_{l=k+1}^t (1 - \beta_l \sigma) \leq \prod_{l=k+1}^t 1 - \frac{c_\beta}{l+1}  = \prod_{l=k+1}^t e^{\log (1 - \frac{c_\beta}{l+1})}, \\
			& \leq \prod_{l=k+1}^t e^{ - \frac{c_\beta}{l+1}} = e^{-\sum_{l = k+1}^t \frac{c_\beta}{l+1}} = \exp \Bigg( -\sum_{l = k+1}^t \frac{c_\beta}{l+1} \Bigg), \\
			& \leq \exp \Bigg( - \int_{k+1}^{t+1} \frac{c_\beta \sigma}{y+1} dy \Bigg) = \exp \Bigg( -c_\beta \sigma \log \Bigg( \frac{t+2}{k+2} \Bigg) \Bigg), \\
			& = \Bigg( \frac{k+2}{t+2} \Bigg)^{c_\beta \sigma} \leq  \Bigg( \frac{k+2}{t+2} \Bigg)^{\nu}.
		\end{align*}
		The first inequality is due to the fact that $\beta_t := \frac{c_\beta}{(t+1)^\nu} > \frac{c_\beta}{t+1}$ since $\nu < 1$. The last inequality is due to the fact that $c_\beta \sigma \geq \nu$ and $k \leq t$.
		
		\noindent For part (2), recalling the definition of $\beta_{k,t}$ and using the bound on $\tilde \beta_{k,t}$, we get
		\begin{align*}
			\beta_{k,t} = \beta_k \tilde{\beta}_{k,t} \leq \frac{c_\beta}{(k+1)^\nu} \Bigg( \frac{k+2}{t+2} \Bigg)^{c_\beta \sigma} \leq 2 \frac{c_\beta}{(t+2)^\nu}.
		\end{align*}
		
		\noindent For part (3), 
		%\begin{align*}
		%	\sum_{k=1}^t \beta^2_{k,t} = \sum_{k=1}^t \Big( \beta_k \prod_{l=k+1}^t (1 - \beta_l \sigma) \Big)^2 = \sum_{k=1}^t \Bigg( \frac{c_\beta}{(k+1)^\nu} \prod_{l=k+1}^t 1 - \frac{c_\beta \sigma}{(l+1)^\nu} \Bigg)^2.
		%\end{align*}
		analyzing each summand
		\begin{align*}
			\beta^2_{k,t} & \leq \frac{c^2_\beta}{(k+1)^{2 \nu}} \Bigg( \frac{k+2}{t+2} \Bigg)^{2 c_\beta \sigma} =\frac{c^2_\beta}{(t+2)^{2 c_\beta \sigma}} \frac{(k+2)^{2 c_\beta \sigma}}{(k+1)^{2 \nu}}, \\
			& \leq \frac{2 c^2_\beta}{(t+2)^{2 c_\beta \sigma}} (k+1)^{2 c_\beta \sigma - 2 \nu}
		\end{align*}
		Substituting into the sum
		\begin{align*}
			\sum_{k=1}^t \beta^2_{k,t} & \leq \sum_{k=1}^t \frac{2 c^2_\beta}{(t+2)^{2 c_\beta \sigma}} (k+1)^{2 c_\beta \sigma - 2 \nu} \leq \frac{2 c^2_\beta}{(t+2)^{2 c_\beta \sigma}} \int_{1}^{t+1} (y+1)^{2 c_\beta \sigma - 2 \nu} dy, \\
			& \leq \frac{2 c^2_\beta}{(t+2)^{2 c_\beta \sigma}} \frac{(t+2)^{2 c_\beta \sigma - 2 \nu + 1}}{2 c_\beta \sigma - 2 \nu + 1} = \frac{2 c_\beta^2}{2 c_\beta \sigma - 2 \nu +1} \frac{1}{(t+2)^{2\nu -1}}
		\end{align*}
		\noindent For part (4), as $k - \tau \leq l \leq k-1$, in the expression $\sum_{k=\tau}^t \beta_{k,t} \sum_{l = k -\tau}^{k-1} \beta_l$, we get
		\begin{align*}
			\beta_l = \frac{c_\beta}{(l+1)^\nu} \leq \frac{c_\beta}{(k-\tau+1)^\nu} \leq \frac{c_\beta (\tau+1)^\nu}{(k+1)^\nu}
		\end{align*}
		The summation can be written down as
		\begin{align*}
			\sum_{k=\tau}^t \beta_{k,t} \sum_{l = k -\tau}^{k-1} \beta_l & \leq \sum_{k=\tau}^t \beta_{k,t} \frac{c_\beta \tau(\tau+1)^\nu}{(k+1)^\nu} \leq \sum_{k=\tau}^t \frac{c_\beta \tau}{(k+1)^\nu} \Bigg( \frac{k+2}{t+2} \Bigg)^{c_\beta \sigma} \frac{c_\beta (\tau + 1)^\nu}{(k+1)^\nu}, \\
			& \leq \sum_{k=\tau}^t \frac{2 c^2_\beta (\tau + 1)^\nu \tau}{(t+2)^{c_\beta \sigma}} (k+1)^{c_\beta \sigma - 2 \nu} \leq \frac{2 c^2_\beta (\tau + 1)^\nu \tau}{(t+2)^{c_\beta \sigma}} \int_{\tau}^{t+1} (y+1)^{2\nu - c_\beta \sigma} dy, \\
			& \leq \frac{2 c^2_\beta (\tau + 1)^\nu \tau}{(t+2)^{c_\beta \sigma}} \frac{(t+2)^{1+c_\beta \sigma - 2 \nu}}{1 + c_\beta \sigma - 2 \nu} \leq \frac{2 c_\beta^2 (\tau + 1)^\nu \tau}{1 + c_\beta \sigma - 2\nu} \frac{1}{(t+2)^{2\nu - 1}}.
		\end{align*}
		Hence the inequalities have been proved.
	\end{proof}
	Having proved some properties of the learning rates in Lemma \ref{lem:learn_rate_prop} we are now able to bound the two parts of the quantity $\Big\lVert \sum_{k=\tau}^t \beta_k \tilde B_{k,t}(\epsilon_k + \phi_k) \Big\rVert_\infty $ as follows. The bound on the first quantity relies on the properties of the learning rates and the second bound relies on the fact that $\epsilon_t$ is a Martingale Difference sequence.
	\begin{lemma} \label{lem:phi_eps_bound}
		\begin{align*}
			\Big\lVert \sum_{k=\tau}^t \beta_k \tilde B_{k,t} \phi_k \Big\rVert_\infty & \leq  \frac{C^1_\phi}{(t+2)^{2 \nu - 1}} + \frac{C^2_\phi}{(t+2)^{\zeta+\nu-1}}, \\
			\Big\lVert \sum_{k=\tau}^t \beta_k \tilde B_{k,t}\epsilon_k \Big\rVert_\infty & \leq \frac{C_\epsilon}{(t+2)^{\nu - 1/2}}
		\end{align*}
		with probability at least $1-\delta_Q$, where
		\begin{align}
			C^1_\phi & = \frac{4 c^2_\beta (1+\tau)^\nu \tau}{1 + c_\beta \sigma - 2 \nu} \bar{\epsilon}, \\
			C^2_\phi & = \Big(L^\mu_R +  \frac{L^\mu_P}{1-\rho} \Big) \frac{2 c_\mu c_\beta \tau (1+\tau)^\zeta}{c_\beta \sigma - \nu - \zeta +1},\\
			C_\epsilon &= \frac{10 \bar{\epsilon}}{\sqrt{2 c_\beta \sigma - 2 \nu + 1}} \sqrt{(\tau + 1)c_\beta^2 \log\Big( \frac{2(\tau+1)T^2 SA}{\delta_Q} \Big)}. \label{eq:C_phi_eps}
		\end{align}
	\end{lemma}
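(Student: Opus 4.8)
The plan is to prove the two displayed bounds separately, since the $\phi_k$-term is controlled purely by the deterministic learning-rate estimates of Lemma~\ref{lem:learn_rate_prop} together with the pathwise bound on $\lVert\phi_k\rVert_\infty$ from the preceding lemma, whereas the $\epsilon_k$-term is a martingale-concentration statement. For the $\phi_k$-bound I first note that $\tilde B_{k,t}$ is diagonal with entries in $[0,\tilde\beta_{k,t}]$, so $\lVert \beta_k\tilde B_{k,t}\phi_k\rVert_\infty\le\beta_{k,t}\lVert\phi_k\rVert_\infty$ and it suffices to control $\sum_{k=\tau}^t\beta_{k,t}\lVert\phi_k\rVert_\infty$. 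Substituting the preceding lemma's estimate $\lVert\phi_k\rVert_\infty\le (L^\mu_R+\tfrac{L^\mu_P}{1-\rho})\sum_{j=1}^\tau\lVert\mu_{k-j+1}-\mu_{k-j}\rVert_1+2\bar\epsilon\sum_{j=1}^\tau\beta_{k-j}$ splits the sum into two pieces. For the first piece I use $\lVert\mu_{k-j+1}-\mu_{k-j}\rVert_1\le c_{\mu,k-j}=c_\mu/(k-j+1)^\zeta\le c_\mu(\tau+1)^\zeta/(k+1)^\zeta$ (valid since $k\ge\tau$), so the inner sum is $\le c_\mu\tau(\tau+1)^\zeta/(k+1)^\zeta$; then I plug in $\beta_{k,t}\le\frac{c_\beta}{(k+1)^\nu}(\frac{k+2}{t+2})^{c_\beta\sigma}$ from Lemma~\ref{lem:learn_rate_prop}(2), bound $(k+2)^{c_\beta\sigma}\le 2(k+1)^{c_\beta\sigma}$, and compare $\sum_k (k+1)^{c_\beta\sigma-\nu-\zeta}$ to an integral; this converges to $\tfrac{(t+2)^{c_\beta\sigma-\nu-\zeta+1}}{c_\beta\sigma-\nu-\zeta+1}$ precisely because the standing lower bound $c_\beta\ge\tfrac1\sigma(\nu+\zeta-1)$ keeps the exponent above $-1$, and after collecting the $(t+2)^{-c_\beta\sigma}$ factor this piece is $C^2_\phi/(t+2)^{\zeta+\nu-1}$. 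For the second piece, $2\bar\epsilon\sum_{k=\tau}^t\beta_{k,t}\sum_{j=1}^\tau\beta_{k-j}$ is exactly the quantity bounded in Lemma~\ref{lem:learn_rate_prop}(4) (with $l=k-j$ ranging over $k-\tau,\dots,k-1$), yielding $C^1_\phi/(t+2)^{2\nu-1}$. Reading off constants reproduces the stated $C^1_\phi$ and $C^2_\phi$ verbatim.

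For the $\epsilon_k$-bound the structure is the standard asynchronous-Q-learning martingale argument. Recall $\epsilon_k$ is a \emph{shifted} martingale difference, $\EE[\epsilon_k\mid\Fs_{k-\tau}]=0$ with $\lVert\epsilon_k\rVert_\infty\le\bar\epsilon$. I fix a coordinate $i\in\Ss\times\As$ and partition $\{\tau,\dots,t\}$ into the $\tau+1$ residue classes modulo $\tau+1$; within a single class consecutive indices differ by $\tau+1>\tau$, so the sub-sampled sequence of $\epsilon_\cdot(i)$'s is a genuine martingale difference sequence for the sub-sampled filtration, and the deterministic bound $\tilde b_{k,t,i}\le\tilde\beta_{k,t}$ lets me use $\beta_{k,t}^2\bar\epsilon^2$ as a proxy for the per-step conditional variance and $2c_\beta\bar\epsilon/(t+2)^\nu$ (from Lemma~\ref{lem:learn_rate_prop}(2)) as the increment bound. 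Applying a Freedman-type inequality --- the same Corollary~1 of \cite{hsu2019mixing} already used in Lemma~\ref{lem:tran_prob_conv} --- to each of the at most $(\tau+1)SA$ such martingales, and using $\sum_k\beta_{k,t}^2\le\frac{2c_\beta^2}{2c_\beta\sigma-2\nu+1}(t+2)^{1-2\nu}$ from Lemma~\ref{lem:learn_rate_prop}(3), each per-class sum is $O\!\big(\bar\epsilon c_\beta(2c_\beta\sigma-2\nu+1)^{-1/2}\sqrt{\log(\,\cdot\,/\delta_Q)}\,(t+2)^{1/2-\nu}\big)$; summing the $\tau+1$ classes by Cauchy--Schwarz contributes the $\sqrt{\tau+1}$ factor, and a union bound over residue classes, coordinates and the $t\le T$ time steps (absorbing the Freedman peeling slack) produces the $\log\!\big(\tfrac{2(\tau+1)T^2SA}{\delta_Q}\big)$ term. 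This gives $\big\lVert\sum_k\beta_k\tilde B_{k,t}\epsilon_k\big\rVert_\infty\le C_\epsilon/(t+2)^{\nu-1/2}$ with probability at least $1-\delta_Q$, where the constant $10$ in $C_\epsilon$ absorbs the Freedman $\sqrt2$, the max-increment term (which is $O((t+2)^{-\nu})$ and hence lower order than $(t+2)^{1/2-\nu}$), and the Cauchy--Schwarz/union-bound constants.

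The main obstacle is entirely in the $\epsilon_k$-bound: the matrices $\tilde B_{k,t}$ are \emph{not} predictable with respect to $\Fs_{k-\tau}$, since they are built from the conditional-visitation matrices $D_{k+1},\dots,D_t$, so $\beta_k\tilde B_{k,t}\epsilon_k$ is not literally a martingale difference even after the residue-class decomposition. I will handle this by a summation-by-parts step that transfers the $\tilde B_{k,t}$-weights off the noise onto partial sums of the \emph{unweighted} (hence genuinely martingale) noise, so that only the deterministic telescoping bound on the weight increments --- bounded using $\tilde b_{k,t,i}\le\tilde\beta_{k,t}$ and Lemma~\ref{lem:learn_rate_prop}(2)--(3) --- and a maximal inequality for the pure-noise martingale enter the concentration step; this is exactly where the residue-class bookkeeping and Lemma~\ref{lem:suff_exp} (through the uniform lower bound $\sigma$ on the $d_{l,i}$) get used. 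Everything else is routine: tracking that the three exponents come out as $2\nu-1$, $\zeta+\nu-1$, and $\nu-1/2$, which is what the standing lower bound $c_\beta\ge\tfrac1\sigma\max\{\nu+\zeta-1,\,1/(1-\sqrt\rho)\}$ guarantees by making all the relevant integrals converge at the advertised rate.
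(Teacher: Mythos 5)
Your bound on $\bigl\lVert \sum_{k}\beta_k\tilde B_{k,t}\phi_k\bigr\rVert_\infty$ is exactly the paper's argument: reduce to $\sum_k\beta_{k,t}\lVert\phi_k\rVert_\infty$, split via the preceding lemma, invoke Lemma~\ref{lem:learn_rate_prop}(4) for the $\sum\beta_{k-j}$ piece and the bound $c_{\mu,k-j}\le c_\mu(\tau+1)^\zeta/(k+1)^\zeta$ plus an integral comparison (using $c_\beta\sigma\ge\nu+\zeta-1$) for the drift piece. The skeleton of your $\epsilon_k$-bound also matches the paper: coordinatewise reduction, residue-class treatment of the shifted martingale difference property (the paper packages this as Lemma~14 of \cite{qu2020finite}; your Freedman-via-\cite{hsu2019mixing} variant is an acceptable substitute), Lemma~\ref{lem:learn_rate_prop}(2)--(3) for the variance proxy and increment bound, and union bounds producing the $\log(2(\tau+1)T^2SA/\delta_Q)$ factor. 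You also correctly identify the crux: $\tilde B_{k,t}$ depends on $D_{k+1},\dots,D_t$ and is not predictable, so the weighted sum is not literally a martingale.

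However, the summation-by-parts you propose for that crux would not deliver the stated rate. If you Abel-sum so as to land on partial sums of the \emph{unweighted} noise $S_k=\sum_{j\le k}\epsilon_{i,j}$, the maximal inequality gives $\max_k|S_k|=\tilde\Os(\sqrt{t})$, while the total variation of the weights $w_k=\beta_k\tilde b_{k,t,i}$ is dominated by $\sum_k\beta_{k+1}\beta_{k,t}=\Os(t^{1-2\nu})$; the product is $\tilde\Os(t^{3/2-2\nu})$, which exceeds the target $t^{1/2-\nu}$ for every $\nu<1$ and even diverges for $\nu<3/4$. If instead you strip off only $\tilde b_{k,t,i}$ and concentrate $\sum_{j\ge k_0}\beta_j\epsilon_{i,j}$, the variance proxy $\sum_j\beta_j^2$ is $\Theta(1)$ in $t$ for $\nu>1/2$, so the bound does not decay at all. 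The decay $(t+2)^{1/2-\nu}$ comes entirely from the deterministic envelope $\tilde\beta_{k,t}\le((k+2)/(t+2))^{c_\beta\sigma}$, which must therefore remain attached to the noise \emph{inside} the concentration step. The correct device --- Lemma~13 of \cite{qu2020finite}, which the paper invokes --- factors $\tilde b_{k,t,i}=\tilde\beta_{k,t}\prod_{l=k+1}^t\frac{1-\beta_ld_{l,i}}{1-\beta_l\sigma}$ and Abel-sums only the residual factor, which is monotone in $k$ and valued in $[0,1]$, against the deterministically weighted tail sums $\sum_{j\ge k_0}\beta_{j,t}\epsilon_{i,j}$; this yields $\sup_{k_0}\bigl|\sum_{j\ge k_0}\beta_{j,t}\epsilon_{i,j}\bigr|+2\bar\epsilon\sup_{k_0}\beta_{k_0,t}$ with no loss, after which your concentration argument (with a union bound over $k_0$) goes through and reproduces $C_\epsilon/(t+2)^{\nu-1/2}$. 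You need to replace your summation-by-parts with this envelope-preserving version; everything else in your plan is sound.
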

	\begin{proof}
		We start with the first inequality
		\begin{align}
			\Big\lVert \sum_{k=\tau}^t \beta_k \tilde B_{k,t} \phi_k \Big\rVert_\infty & \leq \sum_{k=\tau}^t \beta_{k,t} \lVert \phi_k \rVert_\infty \nonumber \\
			& \leq \sum_{k=\tau}^t  \beta_{k,t} \Big( \Big(L^\mu_R +  \frac{L^\mu_P}{1-\rho} \Big) \sum_{l=1}^\tau  \lVert \mu_{k-l+1} - \mu_{k-l} \rVert_\infty + 2 \bar{\epsilon} \sum_{l=1}^\tau \beta_{k-l} \Big), \nonumber \\
			& \leq 2 \bar{\epsilon} \sum_{k=\tau}^t \beta_{k,t} \sum_{l=k-\tau}^{k-1} \beta_l + \Big(L^\mu_R +  \frac{L^\mu_P}{1-\rho} \Big) \sum_{k=tau}^t \beta_{k,t} \sum_{l=k-\tau}^{k-1} \lVert \mu_{l+1} - \mu_l \rVert_\infty, \nonumber \\
			& \leq C^1_\phi \frac{1}{(t+2)^{2\nu - 1}} + \Big(L^\mu_R +  \frac{L^\mu_P}{1-\rho} \Big) \sum_{k=tau}^t \beta_{k,t} \sum_{l=k-\tau}^{k-1}  c_{\mu,l}, \label{eq:B_phi_inter_bound}
		\end{align}
		Now we obtain an upper bound for the expression $\sum_{k=\tau}^t \beta_{k,t} \sum_{l=k-\tau}^{k-1}  c_{\mu,l}$. Due to the fact that $k-\tau \leq l \leq k-1$ in the expression $\sum_{k=tau}^t \beta_{k,t} \sum_{l=k-\tau}^{k-1}  c_{\mu,t} $ and thus $c_{\mu,l} = \frac{c_\mu}{(l+1)^\zeta} \leq \frac{c_\mu (\tau+1)^\zeta}{(k+1)^\zeta}$
		\begin{align}
			\sum_{k=tau}^t \beta_{k,t} \sum_{l=k-\tau}^{k-1}  c_{\mu,l} & \leq \sum_{k=\tau}^t \beta_{k,t} \frac{c_\mu (\tau+1)^\zeta \tau}{(k+1)^\zeta} \leq \sum_{k=\tau}^t \frac{c_\mu \tau}{(k+1)^\zeta} \Bigg( \frac{k+2}{t+2} \Bigg)^{c_\beta \sigma} \frac{c_\beta (\tau+1)^\zeta}{(k+1)^\nu}, \nonumber \\
			& \leq 2 \sum_{k-\tau}^t \frac{c_\mu c_\beta \tau (\tau+1)^\zeta}{(t+2)^{c_\beta \sigma}} \frac{1}{(k+1)^{\nu + \zeta - c_\beta \sigma}}, \nonumber \\
			& \leq \frac{c_\mu c_\beta \tau (\tau+1)^\zeta}{(t+2)^{c_\beta \sigma}} \frac{(t+2)^{c_\beta \sigma - \nu - \zeta +1}}{c_\beta \sigma - \nu - \zeta +1} \leq \frac{c_\mu c_\beta \tau (\tau+1)^\zeta}{c_\beta \sigma - \nu - \zeta +1} \frac{1}{(t+2)^{ \nu + \zeta -1}}, \label{eq:B_c_dbl_sum}
		\end{align}
		since $c_\beta \sigma \geq \nu + \zeta -1$. Substituting \eqref{eq:B_c_dbl_sum} into \eqref{eq:B_phi_inter_bound} we get
		\begin{align*}
			\Big\lVert \sum_{k=\tau}^t \beta_k \tilde B_{k,t} \phi_k \Big\rVert_\infty & \leq  \frac{C^1_\phi}{(t+2)^{2 \nu - 1}} + \frac{C^2_\phi}{(t+2)^{\zeta+\nu-1}}.
		\end{align*}
		
		Next we move to the second inequality. Recalling the definition of $\epsilon_t$
		\begin{align*}
			\epsilon_t = (e^T_{i_t} e_{i_t} - D_t)(F(\mu_{t-\tau},Q_{t-\tau}) - Q_{t-\tau}) + \beta_t w(t,\mu_t) e_{i_t}
		\end{align*}
		which is $\Fs_{t+1}$ shifted martingale difference sequence, $\EE[\epsilon_t \mid \Fs_t-\tau] = 0$. We will use a variant of the Azuma-Hoeffding bound which can handle \emph{shifted} Martingale Difference Sequences \cite{qu2020finite}. Each element in the vector $\sum_{k=\tau}^t \beta_k \tilde B_{k,t}\epsilon_k$ can be upper bounded by $\lvert \sum_{k=\tau}^t \beta_k \epsilon_{i,k} \tilde b_{k,t,i}\rvert$ where $\epsilon_{i,k}$ is the $i$th element in the vector $\epsilon_k$. Using Lemmas 13 \& 14 from \cite{qu2020finite} we get
		\begin{align*}
			\bigg	\lvert \sum_{k=\tau}^t \beta_k \epsilon_{i,k} \tilde b_{k,t,i} \bigg\rvert & = \bigg\lvert \sum_{k=\tau}^t \beta_k \epsilon_{i,k} \prod_{l=k+1}^t (1 - \beta_l d_{l,i}) \bigg\rvert \leq \sup_{\tau \leq k_0 < t} \bigg( \bigg\lvert \sum_{k = k_0 + 1}^t \beta_{k,t} \epsilon_{i,k}  \bigg \rvert + 2 \bar{\epsilon} \beta_{k_0,t} \bigg), \\
			& \leq \bar{\epsilon} \sqrt{2 (\tau + 1) \sum_{k=\tau+1}^t \beta^2_{k,t} \log \bigg( \frac{2 (\tau+1)tSA}{\delta_Q} \bigg)} + \sup_{\tau \leq k_0 \leq t} 2 \bar{\epsilon} \beta_{k_0,t}, \\
			& \leq \frac{2\bar{\epsilon}}{\sqrt{2 c_\beta \sigma - 2 \nu +1}} \sqrt{ \frac{(\tau + 1) c_\beta^2}{(t+2)^{2\nu -1}}  \log \bigg( \frac{2 (\tau+1)tSA}{\delta_Q} \bigg)} \\
			& \hspace{5.3cm} + \sup_{\tau \leq k_0 \leq t} 2 \bar{\epsilon} \frac{c_\beta}{(k_0 + 1)^\nu} \Bigg( \frac{k_0 + 2}{t+2} \Bigg)^{c_\beta \sigma}, \\
			& \leq \frac{2\bar{\epsilon}}{\sqrt{2 c_\beta \sigma - 2 \nu +1}} \sqrt{ \frac{(\tau + 1) c_\beta^2}{(t+2)^{2\nu -1}}  \log \bigg( \frac{2 (\tau+1)tSA}{\delta_Q} \bigg)} + 4 \bar{\epsilon} \frac{c_\beta}{(t+2)^\nu}, \\
			& \leq \frac{10 \bar{\epsilon}}{\sqrt{2 c_\beta \sigma - 2 \nu +1}} \sqrt{ \frac{(\tau + 1) c_\beta^2}{(t+2)^{2\nu -1}}  \log \bigg( \frac{2 (\tau+1)tSA}{\delta_Q} \bigg)}.
		\end{align*}
		with probability at least $1-\delta_Q/SA$. Applying the union bound over $\forall i \in \Ss \times \As$, we get
		\begin{align*}
			\Big\lVert \sum_{k=\tau}^t \beta_k \tilde B_{k,t}\epsilon_k \Big\rVert_\infty & \leq \frac{C_\epsilon}{(t+2)^{\nu - 1/2}}, \hspace{0.2cm} C_\epsilon = \frac{10 \bar{\epsilon}}{\sqrt{2 c_\beta \sigma - 2 \nu + 1}} \sqrt{(\tau + 1)c_\beta^2 \log\Big( \frac{2(\tau+1)t SA}{\delta} \Big)}
		\end{align*}
	with probability at least $1-\delta_Q$.
	\end{proof}
	Now we aim to bound the last term in \eqref{eq:err_decomp}
	\begin{lemma} \label{lem:c_mu_bound}
		%	If $\zeta = 1$,
		%	\begin{align}
			%		\tilde \beta_{\tau-1,t} \sum_{l=\tau}^t c_{\mu,l} + \sum_{k=\tau}^t \beta_{k,t} \sum_{l=k}^t c_{\mu,l} \leq c_\alpha \frac{C}{\sigma} \log(t)
			%	\end{align}
		If $\zeta > 1$, then
		\begin{align*}
			\tilde \beta_{\tau-1,t} \sum_{l=\tau}^t c_{\mu,l} + \sum_{k=\tau}^t \beta_{k,t} \sum_{l=k}^t c_{\mu,l} \leq\frac{c_\mu}{\zeta - 1} \frac{1}{\tau^{\zeta -1}}
		\end{align*}
	\end{lemma}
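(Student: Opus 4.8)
The plan is to read the left-hand side as a (sub-)convex combination of the tail sums $\sum_{l=k}^{t}c_{\mu,l}$, $k=\tau-1,\dots,t$, and then to bound the largest such tail sum by the convergent series $\sum_{l=\tau}^{\infty}c_{\mu,l}$ — which is exactly where the hypothesis $\zeta>1$ is used.

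First I would interchange the order of the double summation: since $c_{\mu,l}$ appears in the inner sum for every $k\in\{\tau,\dots,l\}$, we get $\sum_{k=\tau}^{t}\beta_{k,t}\sum_{l=k}^{t}c_{\mu,l}=\sum_{l=\tau}^{t}c_{\mu,l}\sum_{k=\tau}^{l}\beta_{k,t}$, so the whole left-hand side equals $\sum_{l=\tau}^{t}c_{\mu,l}\bigl(\tilde\beta_{\tau-1,t}+\sum_{k=\tau}^{l}\beta_{k,t}\bigr)$. The key structural fact is the telescoping identity $\tilde b_{k,t,i}-\tilde b_{k-1,t,i}=b_{k,t,i}$, which follows directly from $\tilde b_{k-1,t,i}=(1-\beta_k d_{k,i})\tilde b_{k,t,i}$ (the same computation gives $\tilde\beta_{k,t}-\tilde\beta_{k-1,t}=\sigma\beta_{k,t}$ for the $\sigma$-surrogates). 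Summing it from $k=\tau$ to $l$ yields $\tilde b_{\tau-1,t,i}+\sum_{k=\tau}^{l}b_{k,t,i}=\tilde b_{l,t,i}\le 1$, i.e. the bracketed coefficient multiplying $c_{\mu,l}$ is at most $1$; in particular the full sum $\tilde b_{\tau-1,t,i}+\sum_{k=\tau}^{t}b_{k,t,i}=\tilde b_{t,t,i}=1$ is a genuine convex combination. Consequently the left-hand side is bounded by $\sum_{l=\tau}^{t}c_{\mu,l}$.

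It then remains to estimate $\sum_{l=\tau}^{t}c_{\mu,l}=c_\mu\sum_{l=\tau}^{t}(l+1)^{-\zeta}$. Since $y\mapsto(y+1)^{-\zeta}$ is positive and decreasing, an integral comparison gives $\sum_{l=\tau}^{t}(l+1)^{-\zeta}\le\sum_{l=\tau}^{\infty}(l+1)^{-\zeta}\le\int_{\tau-1}^{\infty}(y+1)^{-\zeta}\,dy=\frac{\tau^{1-\zeta}}{\zeta-1}$, where convergence of the improper integral needs exactly $\zeta>1$. Combining, $\tilde\beta_{\tau-1,t}\sum_{l=\tau}^{t}c_{\mu,l}+\sum_{k=\tau}^{t}\beta_{k,t}\sum_{l=k}^{t}c_{\mu,l}\le\sum_{l=\tau}^{t}c_{\mu,l}\le\frac{c_\mu}{\zeta-1}\frac{1}{\tau^{\zeta-1}}$, which is the claim.

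The step I would be most careful about is the telescoping/convex-combination bookkeeping: the coefficient identity $\tilde b_{\tau-1,t,i}+\sum_{k=\tau}^{l}b_{k,t,i}=\tilde b_{l,t,i}$ only holds for the exact quantities built from the occupancy factors $d_{l,i}$, so the telescoping must be carried out \emph{before} passing to the uniform surrogates $\tilde\beta,\beta$; performing the replacement $b_{k,t,i}\le\beta_{k,t}$, $\tilde b_{k,t,i}\le\tilde\beta_{k,t}$ prematurely would introduce a spurious $1/\sigma$ factor rather than the clean constant $c_\mu/(\zeta-1)$. Everything else — the interchange of summations and the integral estimate — is routine.
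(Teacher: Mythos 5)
Your proof is correct and takes essentially the same route as the paper's: the paper simply bounds each inner sum $\sum_{l=k}^t c_{\mu,l}$ by $\sum_{l=\tau}^t c_{\mu,l}$, uses that the coefficient sum $\tilde\beta_{\tau-1,t}+\sum_{k=\tau}^t\beta_{k,t}$ is at most $1$, and finishes with the same tail/integral estimate $\sum_{l\geq\tau}c_{\mu,l}\leq c_\mu\tau^{1-\zeta}/(\zeta-1)$, so your interchange of summations is just a finer version of the same bookkeeping. One remark: for the $\sigma$-surrogates the telescoping actually gives $\tilde\beta_{\tau-1,t}+\sigma\sum_{k=\tau}^{t}\beta_{k,t}=1$, so the coefficient sum in the lemma as literally stated is only bounded by $1/\sigma$ rather than $1$ --- the harmless constant you flag in your last paragraph is elided in exactly the same way by the paper's own two-line proof.
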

% 	\begin{proof}
\textit{Proof.}
		\begin{align*}
			\tilde \beta_{\tau-1,t} \sum_{l=\tau}^t c_{\mu,l} + \sum_{k=\tau}^t \beta_{k,t} \sum_{l=k}^t c_{\mu,l} & \leq \tilde \beta_{\tau-1,t} \sum_{l=\tau}^t c_{\mu,l} + \sum_{k=\tau}^t \beta_{k,t} \sum_{l=\tau}^t c_{\mu,l} \\ 
			& \leq \sum_{l=\tau}^t c_{\mu,l} \\
			& \leq \frac{c_\mu}{\zeta - 1} \frac{1}{\tau^{\zeta -1}}. \hspace{6.4cm} \qed
		\end{align*}  
% 	\end{proof}

	\noindent Now that we have bounded all the terms in \eqref{eq:err_decomp} we will show that the error term $e^Q_t$ can be bounded by a decreasing function of time $t$. Toward this end we introduce a lemma that will help us with the proof of the main result.
	\begin{lemma} \label{lem:helper_lemma}
		For any $0 < w < 1$ and $t \geq \tau$,
		\begin{align*}
			e_t := \sum_{k=\tau}^t b_{k,t,i} \frac{1}{(k+1)^w} \leq \frac{1}{\sqrt{\rho}(t+2)^w}, \hspace{0.5cm} g_t:=\sum_{k=\tau}^t b_{k,t,i} \frac{1}{\tau^{\zeta-1}} \leq \frac{1}{\sqrt{\rho} \tau^{\zeta-1}}
		\end{align*}
	\end{lemma}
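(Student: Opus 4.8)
The plan is to exploit the one-step recursion satisfied by the discount weights and reduce each of $e_t$ and $g_t$ to a scalar recursion that can be controlled by induction on $t$. From $B_{k,t}=\beta_k D_k\prod_{l=k+1}^t(I-\beta_l D_l)$ one reads off $B_{k,t}=(I-\beta_t D_t)B_{k,t-1}$ for $k<t$ and $B_{t,t}=\beta_t D_t$, hence entrywise $b_{k,t,i}=(1-\beta_t d_{t,i})\,b_{k,t-1,i}$ and $b_{t,t,i}=\beta_t d_{t,i}$, where $d_{t,i}=\PP(i_t=i\mid\Fs_{t-\tau})\in[\sigma,1]$ by Lemma~\ref{lem:suff_exp}, and $\beta_t d_{t,i}\le 1$ on the horizon of interest so all the factors are nonnegative. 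Splitting off the $k=t$ term therefore gives
\[
e_t=(1-\beta_t d_{t,i})\,e_{t-1}+\beta_t d_{t,i}\,\frac{1}{(t+1)^w},\qquad e_{\tau-1}:=0,
\]
and the identical recursion for $g_t$ with $\tfrac1{(t+1)^w}$ replaced by the constant $\tau^{-(\zeta-1)}$.

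For the first bound I would prove by induction on $t\ge\tau$ that $e_t\le \tfrac1{\sqrt\rho}\,(t+2)^{-w}$. The base case is vacuous, since invoking the inductive hypothesis at $t-1=\tau-1$ only asserts $e_{\tau-1}=0\le\tfrac1{\sqrt\rho}(\tau+1)^{-w}$. For the inductive step, substituting $e_{t-1}\le\tfrac1{\sqrt\rho}(t+1)^{-w}$ into the recursion yields
\[
e_t\le \frac{1}{(t+1)^w}\Bigl[\tfrac1{\sqrt\rho}-\beta_t d_{t,i}\bigl(\tfrac1{\sqrt\rho}-1\bigr)\Bigr],
\]
so it suffices to check $\beta_t d_{t,i}\bigl(\tfrac1{\sqrt\rho}-1\bigr)\ge \tfrac1{\sqrt\rho}\bigl[1-(\tfrac{t+1}{t+2})^{w}\bigr]$. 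Here I would use $x^w\ge x$ for $x\in[0,1]$, $w\in(0,1]$ (so $1-(\tfrac{t+1}{t+2})^{w}\le 1-\tfrac{t+1}{t+2}=\tfrac1{t+2}$) together with $\beta_t d_{t,i}\ge\beta_t\sigma=\tfrac{c_\beta\sigma}{(t+1)^\nu}\ge\tfrac{c_\beta\sigma}{t+2}$, valid since $\nu\le1$. The required inequality then collapses to $c_\beta\sigma(1-\sqrt\rho)\ge 1$, which holds by the standing lower bound $c_\beta\sigma\ge\tfrac1{1-\sqrt\rho}$ recalled at the start of the proof of Lemma~\ref{lem:Q_learn_conv}; this is exactly what forces the constant $1/\sqrt\rho$.

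The second bound is then immediate: $g_t=\tau^{-(\zeta-1)}\sum_{k=\tau}^t b_{k,t,i}$, and summing the recursion (equivalently the telescoping identity $\sum_{k=\tau}^t b_{k,t,i}=1-\prod_{l=\tau}^t(1-\beta_l d_{l,i})$, proved by the same downward recursion using $\beta_l d_{l,i}\in[0,1]$) gives $\sum_{k=\tau}^t b_{k,t,i}\le 1$, whence $g_t\le\tau^{-(\zeta-1)}\le \tfrac1{\sqrt\rho}\tau^{-(\zeta-1)}$ since $\sqrt\rho<1$. I do not anticipate a genuine obstacle here: the argument is a one-line induction, and the only delicate point is making the constant come out exactly as $1/\sqrt\rho$, which is precisely what the hypothesis on $c_\beta$ is designed for; the remaining bookkeeping item is ensuring $\beta_t d_{t,i}\le1$ so that $1-\beta_t d_{t,i}\ge0$, which is routine since $d_{t,i}\le1$ and $\beta_t=c_\beta/(t+1)^\nu$ decays in $t$.
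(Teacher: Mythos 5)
Your proof is correct and follows essentially the same route as the paper's: both extract the one-step recursion $e_t=(1-\beta_t d_{t,i})\,e_{t-1}+\beta_t d_{t,i}(t+1)^{-w}$ and close the induction by reducing the needed inequality to $c_\beta\sigma(1-\sqrt{\rho})\ge 1$ (you via $x^w\ge x$ on $[0,1]$, the paper via $1+x\le e^x$). Your treatment of $g_t$ through the telescoping identity $\sum_{k=\tau}^{t}b_{k,t,i}\le 1$ is a slightly cleaner shortcut than the paper's second induction, but the substance is identical.
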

	\begin{proof}
		Recall that $b_{k,t,i} = \beta_k d_{k,i} \prod_{l=k+1}^t (1 - \beta_l d_{l,i})$. We first prove the inequality for $e_t$ by recursion. We start with the base case.
		\begin{align*}
			e_\tau & = b_{\tau,\tau,i} \frac{1}{(\tau + 1)^w} = \beta_\tau d_{\tau,i} \frac{1}{(\tau+1)^w}, \\
			& = \beta_\tau d_{\tau,i} \Bigg( \frac{\tau+2}{\tau+1} \Bigg)^w \frac{1}{(\tau+2)^w} = \beta_\tau d_{\tau,i} \Bigg( 1 + \frac{1}{\tau+1} \Bigg)^w \frac{1}{(\tau+2)^w}
		\end{align*}
		and since $\tau$ is chosen such that $\Big(1 + \frac{1}{\tau+1}\Big)^w \leq \frac{1}{\sqrt\rho}$ for $w \leq 1$, we have
		\begin{align*}
			e_\tau \leq \frac{1}{\sqrt{\rho}(\tau + 2)^w}.
		\end{align*}
		Now assume that for some $t > \tau$, $e_{t-1} \leq \frac{1}{\sqrt{\rho}(t+1)^w}$. Then
		\begin{align*}
			e_t & = \sum_{k=\tau}^{t-1} b_{k,t,i} \frac{1}{(k+1)^w} + b_{t,t,i} \frac{1}{(t+1)^w}, \\
			& = (1 - \beta_t d_{t,i}) \sum_{k=\tau}^{t-1} b_{k,t-1,i} \frac{1}{(k+1)^w} + \beta_t d_{t,i} \frac{1}{(t+1)^w}, \\
			& = (1-\beta_t d_{t,i}) e_{t-1} + \beta_t d_{t,i} \frac{1}{(t+1)^w}, \\
			& \leq (1-\beta_t d_{t,i}) \frac{1}{\sqrt{\rho}(t+1)^w} + \beta_t d_{t,i} \frac{1}{(t+1)^w}, \\
			& = \frac{1 - \beta_t d_{t,i}(1- \sqrt{\rho})}{\sqrt{\rho}(t+1)^w}, \\
			& \leq \Big( 1 - \frac{c_\beta \sigma}{(t+1)^\nu} (1-\sqrt{\rho}) \Big) \frac{1}{\sqrt{\rho}(t+1)^w}, \\
			& = \Big( 1 - \frac{c_\beta \sigma}{(t+1)^\nu} (1-\sqrt{\rho}) \Big) \frac{(t+2)^w}{(t+1)^w} \frac{1}{\sqrt{\rho}(t+2)^w}, \\
			& = \Big( 1 - \frac{c_\beta \sigma}{(t+1)^\nu} (1-\sqrt{\rho}) \Big) \Big( 1+\frac{1}{t+1} \Big)^w \frac{1}{\sqrt{\rho}(t+2)^w}.
		\end{align*}
		For any $x > -1$, $(1+x) \leq e^x$ and thus
		\begin{align*}
			\Big( 1 - \frac{c_\beta \sigma}{(t+1)^\nu} (1-\sqrt{\rho}) \Big) \Big( 1+\frac{1}{t+1} \Big)^w \leq e^{- \frac{c_\beta \sigma}{(t+1)^\nu} (1 - \sqrt{\rho}) + \frac{w}{t+1}} \leq 1,
		\end{align*}
		where the last inequality is due to $c_\beta \geq \frac{1}{(1-\sqrt\rho) \sigma}$. Hence we have proved that
		\begin{align*}
			e_t \leq \frac{1}{\sqrt{\rho}(t+2)^w}.
		\end{align*}
		Now we prove the inequality for $g_t$ using recursion again. For the base case it is easy to see that
		\begin{align*}
			g_\tau = b_{\tau,\tau,i} \frac{1}{\tau^{\zeta-1}} = \beta_\tau d_{\tau,i} \frac{1}{\tau^{\zeta-1}} \leq \frac{1}{\sqrt{\rho}\tau^{\zeta-1}}.
		\end{align*}
		Now assume that $g_{t-1} \leq \frac{1}{\sqrt{\rho}\tau^{\zeta-1}}$. Then
		\begin{align*}
			g_t & = (1-\beta_t d_{t,i}) g_{t-1} + \beta_t d_{t,i} \frac{1}{\tau^{\zeta-1}} \leq (1-\beta_t d_{t,i}) \frac{1}{\sqrt{\rho}\tau^{\zeta-1}} + \beta_t d_{t,i} \frac{1}{\tau^{\zeta-1}}, \\
			& \leq \frac{1 - \beta_t d_{t,i}(1 - \sqrt\rho)}{\sqrt\rho \tau^{\zeta-1}} \leq \Big(1 - \frac{c_\beta \sigma}{(t+1)^\nu} (1-\sqrt\rho)\Big) \frac{1}{\sqrt\rho \tau^{\zeta-1}} \leq \frac{1}{\sqrt\rho \tau^{\zeta-1}},
		\end{align*}
		which proves the recursion step and completes the proof.
	\end{proof}
	\noindent Now we prove the main result using Lemma \ref{lem:helper_lemma}. Recalling \eqref{eq:err_decomp},
	\begin{align}
		e^Q_{t+1} & \leq \tilde B_{\tau-1,t} e^Q_{\tau} + \rho \sup_{i} \sum_{k = \tau}^t b_{k,t,i} e^Q_k + \Big\lVert \sum_{k=\tau}^t \beta_k \tilde B_{k,t} (\epsilon_k + \phi_k) \Big\rVert_\infty \nonumber\\
		& \hspace{6cm} +  L^\mu_Q \Big[ \tilde \beta_{\tau-1,t} \sum_{l=\tau}^t c_{\mu,l} + \sum_{k=\tau}^t \beta_{k,t} \sum_{l=k}^t c_{\mu,l} \Big]
	\end{align}
	Using Lemmas \ref{lem:phi_eps_bound} and \ref{lem:c_mu_bound} and using $C_\mu := 10 L^\mu_Q  c_\mu$ for $\zeta \geq 1.1$,
	\begin{align}
		e^Q_{t+1} & \leq \tilde B_{\tau-1,t} e^Q_{\tau} + \rho \sup_{i} \sum_{k = \tau}^t b_{k,t,i} e^Q_k + \frac{C^1_\phi}{(t+2)^{2 \nu - 1}} + \frac{C^2_\phi}{(t+2)^{\zeta+\nu-1}} + \frac{C_\epsilon}{(t+2)^{\nu - 1/2}} +  \frac{C_\mu}{\tau^{\zeta -1}}
	\end{align}
	We will prove that $e^Q_t \leq \frac{\bar{C}_1}{(t+1)^{2\nu - 1}} + \frac{\bar{C}_2}{(t+1)^{\zeta + \nu - 1}} + \frac{\bar{C}_3}{(t+1)^{\nu - 1/2}} + \frac{\bar{C}_4}{\tau^{\zeta - 1}}$ using induction. The base case is trivially true; now assume this to be true for $t$:
	\begin{align*}
		e^Q_{t+1} & \leq \tilde B_{\tau-1,t} e^Q_{\tau} + \rho \sup_{i} \sum_{k = \tau}^t b_{k,t,i} \bigg( \frac{\bar{C}_1}{(t+1)^{2\nu - 1}} + \frac{\bar{C}_2}{(t+1)^{\zeta + \nu - 1}} + \frac{\bar{C}_3}{(t+1)^{\nu - 1/2}} + \frac{\bar{C}_4}{\tau^{\zeta - 1}} \bigg) \\
		& \hspace{5cm} + \frac{C^1_\phi}{(t+2)^{2 \nu - 1}} + \frac{C^2_\phi}{(t+2)^{\zeta+\nu-1}} + \frac{C_\epsilon}{(t+2)^{\nu - 1/2}} +  \frac{C_\mu}{\tau^{\zeta -1}}, \\
		& \leq \frac{\sqrt\rho \bar{C}_1 + C^1_\phi}{(t+2)^{2 \nu - 1}} + \frac{\sqrt\rho \bar{C}_2 + C^2_\phi}{(t+2)^{\zeta+\nu-1}} + \frac{\sqrt\rho \bar{C}_3 + C_\epsilon + 2 (\tau+1)^\nu/(1-\rho)}{(t+2)^{\nu - 1/2}} +  \frac{\sqrt\rho \bar{C}_4 + C_\mu}{\tau^{\zeta -1}}, \\
		& \leq \frac{\bar{C}_1}{(t+2)^{2 \nu - 1}} + \frac{\bar{C}_2}{(t+2)^{\zeta+\nu-1}} + \frac{\bar{C}_3 }{(t+2)^{\nu - 1/2}} +  \frac{\bar{C}_4 }{\tau^{\zeta -1}}
	\end{align*}
	with probability $1-\delta_Q$ (using a union bound type argument) where $$\bar{C}_1 = \frac{C^1_\phi}{1 - \sqrt \rho}, \bar{C}_2 = \frac{C^2_\phi}{1 - \sqrt \rho}, \bar{C}_3 = \frac{C_\epsilon + 2 (\tau+1)^\nu/(1-\rho)}{1 - \sqrt \rho}, \bar{C}_4 = \frac{C_\mu}{1 - \sqrt \rho},$$Finally
	\begin{align*}
		\epsilon_Q & = \lVert Q_T - Q^*_1 \rVert_\infty, \\
		& \leq \lVert Q_T - Q^*_T \rVert_\infty + \lVert Q^*_T - Q^*_1 \rVert_\infty, \\
		& \leq  e^Q_T + \sum_{t=1}^{T-1} \lVert Q^*_{t+1} - Q^*_t \rVert_\infty, \\
		& \leq \frac{\bar{C}_1}{(t+2)^{2 \nu - 1}} + \frac{\bar{C}_2}{(t+2)^{\zeta+\nu-1}} + \frac{\bar{C}_3 }{(t+2)^{\nu - 1/2}} +  \frac{\bar{C}_4 }{\tau^{\zeta -1}} + L^\mu_Q \sum_{k=1}^{T-1} \lVert \mu_{t+1} - \mu_t \rVert_1, \\
		& \leq \frac{\bar{C}_1}{(t+2)^{2 \nu - 1}} + \frac{\bar{C}_2}{(t+2)^{\zeta+\nu-1}} + \frac{\bar{C}_3 }{(t+2)^{\nu - 1/2}} +  \frac{\bar{C}_4 }{\tau^{\zeta -1}} + L^\mu_Q \sum_{k=1}^{T-1} c_{\mu,t}, \\
		& = \Os(T^{1-2\nu}) + \Os(T^{1-\zeta-\nu}) + \tilde \Os(T^{1/2-\nu}) + \Os(2^{1-\zeta})
	\end{align*}
given that $\zeta \geq 1.1$ with probability at least $1-\delta_Q$.
\end{proof}

\subsection{Proof of Theorem \ref{thm:conv_bound}}
\begin{proof}%[Proof of Theorem \ref{thm:conv_bound}]
% \textcolor{blue}{what is new or different about this proof? What is the over-arching logic that we are pursuing in order to establish convergence? Is it following a standard technique in terms of analyzing stochastic fixed-point iterations? Provide some context here. Please do that for each proof.}

In this proof we provide finite sample bounds for the convergence of approximation errors in control policy and mean-field, $e^k_\pi$ and $e^k_\mu$, respectively. We start by characterizing the approximation errors in control policy and mean field $e^k_\pi$ and $e^k_\mu$ on the first timestep in each episode $k$. Then the evolutions of these approximation errors are studied under two timescale learning rates. First we analyze the approximation error in control policy $e^k_\pi$ which is evolving at a faster learning rate compared to the approximation error in the mean-field $e^k_\mu$. This error is shown to converge due to the good approximation of the $Q$-function (Lemma \ref{lem:Q_learn_conv}), increase of Lipschitz coefficient $\lambda^k$ at a logarithmic rate and fast learning rate $c^k_\pi$. Next the approximation error in mean-field $e^k_\mu$ (which is evolving under the slower timescale) is also shown to converge due to the good transition dynamics estimation (Lemma \ref{lem:tran_prob_conv}), the contraction mapping property (Assumption \ref{asm:contrct}) and the convergence of $e^k_\pi$.

	First we recall the update rules in Algorithm \ref{alg:Single_loop_RL}
	\azedit{
	\begin{align*}
		\mu^k_t & = \PP_{S(\epsilon^\net)} \big[(1-c^k_{\mu,t}) \mu^k_{t-1} + c^k_{\mu,t} \hGamma^k_{1,t}, \bbone_{t = 1} \big], \text{ where } \hGamma^k_{1,t} = (\hat{P}^k_{t})^\top \mu^k_{t-1} \\
		\pi^k_{t} & = (1-c^k_{\pi,t}) \pi^k_{t-1} + c^k_{\pi,t} \big( (1-\psi)  \hGamma^k_{2,t}  + \psi \mathds{1}_{|\As|} \big),  \text{ where } \hGamma^k_{2,t} = \softmax{\lambda}{\cdot,Q^k_t}
	\end{align*}}
\noindent where $\hGamma^k_{1,t}$ and $\hGamma^k_{2,t}$ are the approximate consistency and optimality operators. The RL update can now be written down for the first timestep of episode $k+1$,
	\begin{align*}
		\mu^{k+1}_1 & = \PP_{S(\epsilon^\net)} \big[(1-c^{k+1}_{\mu,1}) \mu^{k+1}_{0} + c^{k+1}_{\mu,1} (\hat{P}^{k+1}_1)^\top \mu^{k+1}_{0},1\big], \\
		& = \PP_{S(\epsilon^\net)} \big[(1-c^{k+1}_{\mu,1}) \mu^{k}_{T} + c^{k+1}_{\mu,1} (\hat{P}^{k}_{T})^\top \mu^{k}_{T},1 \big], \\	
		& = \PP_{S(\epsilon^\net)} \big[(1-c^{k+1}_{\mu,1}) (\mu^{k}_{1} + \Delta^k_\mu) + c^{k+1}_{\mu,1} (\hat{P}^{k}_{T})^\top (\mu^{k}_{1} + \Delta^k_\mu),1 \big], \\	
		\pi^{k+1}_{1} & = (1-c^{k+1}_{\pi,1}) \pi^{k+1}_{0} + c^{k+1}_{\pi,1} \big((1-\psi)\softmax{\lambda}{\cdot,Q^{k+1}_1} + \psi \bbone_{|\As|} \big), \\  
		& = (1-c^{k+1}_{\pi,1}) \pi^{k}_{T} + c^{k+1}_{\pi,1} \big((1-\psi)\softmax{\lambda}{\cdot,Q^{k+1}_1} + \psi \bbone_{|\As|} \big), \\  
		& = (1-c^{k+1}_{\pi,1}) (\pi^{k}_{1} + \Delta^k_\pi) + c^{k+1}_{\pi,1} \big((1-\psi)\softmax{\lambda}{\cdot,Q^{k+1}_1} + \psi \bbone_{|\As|} \big),
	\end{align*}
\noindent where $\Delta^k_\mu := \mu^{k}_{T} - \mu^{k}_{1}$ and $\Delta^k_\pi := \pi^{k}_{T} - \pi^{k}_{1}$ are the drifts in mean-field and policy, respectively, in the episode $k$. Since all the time indices in the above inequalities are $1$, we suppress all time indices from here on. Coupled with the fact that $c^{k+1}_{\mu,1} = c^{k+1}_{\mu}$ and $c^{k+1}_{\pi,1}= c^{k+1}_{\pi}$, the update rules can be written as
	\azedit{
	\begin{align}
		\mu^{k+1}	& = \PP_{S(\epsilon^\net)} \big[ (1-c^{k+1}_{\mu}) (\mu^{k} + \Delta^k_\mu) + c^{k+1}_{\mu} (\hat{P}^{k})^\top (\mu^{k} + \Delta^k_\mu),1 \big], \nonumber \\	
		\pi^{k+1} & = (1-c^{k+1}_{\pi}) (\pi^{k} + \Delta^k_\pi) + c^{k+1}_{\pi} \big((1-\psi)\softmax{\lambda}{\cdot,Q^{k+1}} + \psi \bbone_{|\As|} \big). \label{eq:sngl_loop_upd_RL}
	\end{align}}
	Here we use $\hat{P}^{k}:= \hat{P}^{k}_{T}$ and $Q^{k}:=Q^{k}_T$ for conciseness. %This can be compared to the non-RL counterpart equation \eqref{eq:sngl_loop_upd} as given below
%	\begin{align*}
%		\mu^{k+1} & = (1 - c^k_\mu) \mu^k + c^k_\mu P_{\pi^k,\mu^k} \mu^k, \nonumber\\
%		\pi^{k+1} & = (1 - c^k_\pi) \pi^k + c^k_\pi \argmax_{a \in \As} (Q^*_{\mu^k})
%	\end{align*}
% 	Notice that there are several differences in the non-RL and the RL version of the updates. Firstly, there are drift terms $\Delta^k_\mu$ and $\Delta^k_\pi$ due to the learning during the episode $k$. Secondly, the transition matrix $P_{\pi,\mu}$ and the $Q$-function $Q^*_\mu$ are replaced by their estimated counterparts $\hat{P}^k$ and $Q^k$ respectively. Finally there is a $\softmax{\lambda}{\cdot}$ operator instead of an $\argmax$. All these factors bring in errors which have to be accounted for. 
	The estimation errors for transition matrix and $Q$-function are denoted as 
	\begin{align*}
		\epsilon^k_P := \lVert \hat{P}^k - P_{\pi^k,\mu^k} \rVert_F, \hspace{0.5cm} \epsilon^k_Q := \lVert Q^k - Q^*_{\mu^k} \rVert_\infty .
	\end{align*}
	Now we compute the evolution of the approximation errors. We start with $e^k_\pi := \lVert \pi^k - \hGamma^\lambda_1(\mu^k) \rVert_{TV}$:
	\begin{align}
		e^{k+1}_\pi & = \lVert \pi^{k+1} - \hGamma^\lambda_1(\mu^{k+1}) \rVert_{TV}, \nonumber \\
		& \leq \lVert  \pi^{k+1} - \hGamma^\lambda_1(\mu^{k}) \rVert_{TV} + \lVert \hGamma^\lambda_1(\mu^{k}) - \hGamma^\lambda_1(\mu^{k+1}) \rVert_{TV}, \nonumber \\
		& \leq \lVert (1-c^{k+1}_\pi) (\pi^k + \Delta^k_\pi) + c^{k+1}_\pi \softmax{\lambda}{\cdot,Q^k} - \softmax{\lambda}{\cdot,Q^*_{\mu^k}} \rVert_{TV} + d_1 \lVert \mu^{k+1} - \mu^k \rVert_1 + 2 c^{k+1}_\pi \psi, \nonumber \\
		& \leq (1-c^{k+1}_\pi) \lVert \pi^k - \hGamma^\lambda_1(\mu^k) \rVert_{TV} + (1-c^{k+1}_\pi) \lVert \Delta^k_\pi \rVert_{TV} \nonumber \\
		& \hspace{3.5cm} + c^{k+1}_\pi \lVert \softmax{\lambda}{\cdot,Q^k} - \softmax{\lambda}{\cdot,Q^*_{\mu^k}}  \rVert_{TV} + d_1 \lVert \mu^{k+1} - \mu^k \rVert_1 + c^{k+1}_\pi \epsilon/2, \nonumber \\
		& \leq (1-c^{k+1}_\pi) e^k_\pi + \lVert \Delta^k_\pi \rVert_{TV} + c^{k+1}_\pi \lVert \softmax{\lambda}{\cdot,Q^k} - \softmax{\lambda}{\cdot,Q^*_{\mu^k}} \rVert_{TV} \nonumber \\
		& \hspace{3.5cm} %+ c^{k+1}_\pi \lVert \softmax{\lambda}{\cdot,Q^*_{\mu^k}} - \argmax(Q^*_{\mu^k}) \rVert_{TV} 
  + d_1 \lVert \mu^{k+1} - \mu^k \rVert_1 + c^{k+1}_\pi \epsilon/2. \label{eq:e_k_pi}
	\end{align}
	where the third inequality is due to $\psi \leq \epsilon/4$. To simplify the above expression we prove the Lipschitz property of %introduce some properties of 
 the $\softmax{\lambda}{\cdot,Q}$ operator.
		\begin{lemma} \label{lem:softmax_Lip}
		The $\softmax{\lambda}{\cdot,Q}$ satisfies the Lipschitz property for $\lambda > 0$ and $Q:\Ss \times \As \rightarrow \RR^+$,
		\begin{align*}
			\lVert \softmax{\lambda}{\cdot,Q} - \softmax{\lambda}{\cdot,Q'} \rVert_{TV} & \leq \lambda S \sqrt{A} \lVert Q - Q' \rVert_\infty. %\\
			%\lVert \softmax{\lambda}{\cdot,Q} - \argmax(Q) \rVert_{TV} & \leq 2 S A \exp(-\lambda \bar{\Delta})
		\end{align*}
		%where $\bar{\Delta}$ is the action gap for $Q$ as defined in Assumption \ref{asm:act_gap}.
	\end{lemma}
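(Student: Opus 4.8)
The plan is to prove the bound state-by-state via the fundamental theorem of calculus along the segment joining $Q$ and $Q'$. Fix a state $s \in \Ss$; since $\lVert f \rVert_{TV} = \max_{s}\sum_{a} \lvert f(a\mid s)\rvert$, it suffices to bound $\sum_{a \in \As} \lvert \softmax{\lambda}{s,Q}_a - \softmax{\lambda}{s,Q'}_a\rvert$ uniformly in $s$. Set $Q_\theta := (1-\theta)Q + \theta Q'$ for $\theta \in [0,1]$ and $p_\theta := \softmax{\lambda}{s,Q_\theta} \in \Ps(\As)$, so that $\softmax{\lambda}{s,Q'}_a - \softmax{\lambda}{s,Q}_a = \int_0^1 \sum_{b \in \As} \frac{\partial}{\partial Q(s,b)}(p_\theta)_a \,\big(Q'(s,b) - Q(s,b)\big)\, d\theta$, which is legitimate because $\theta \mapsto (p_\theta)_a$ is smooth on $[0,1]$.

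The key computation is the Jacobian of the map $Q(s,\cdot) \mapsto \softmax{\lambda}{s,Q}$. A short calculation from \eqref{eq:softmax} gives $\frac{\partial}{\partial Q(s,b)}\softmax{\lambda}{s,Q}_a = \lambda\, p_a\big(\bbone\{a=b\} - p_b\big)$ with $p := \softmax{\lambda}{s,Q}$, i.e.\ the Jacobian equals $\lambda\big(\diag(p) - pp^\top\big)$. This matrix is symmetric positive semidefinite with trace $\sum_a p_a - \lVert p\rVert_2^2 \le 1$, so its spectral norm is at most $1$; equivalently, row $a$ has $\ell_1$-norm $2\lambda p_a(1-p_a) \le \lambda/2$.

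Substituting into the integral representation with $\Delta := Q' - Q$, $\lvert \Delta(s,b)\rvert \le \lVert Q - Q'\rVert_\infty$, the row-wise bound gives $\lvert (p_\theta')_a - (p_\theta)_a\rvert \le (\lambda/2)\lVert Q - Q'\rVert_\infty$, hence $\sum_a \lvert \softmax{\lambda}{s,Q}_a - \softmax{\lambda}{s,Q'}_a\rvert \le (\lambda A/2)\lVert Q - Q'\rVert_\infty$; alternatively, from $\lvert (p_\theta')_a - (p_\theta)_a\rvert \le \lambda (p_\theta)_a\big(\lvert\Delta_a\rvert + \sum_b (p_\theta)_b\lvert\Delta_b\rvert\big)$ and $\sum_a (p_\theta)_a = 1$ one gets the dimension-free bound $2\lambda\lVert Q - Q'\rVert_\infty$. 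Taking the maximum over $s \in \Ss$ yields $\lVert \softmax{\lambda}{\cdot,Q} - \softmax{\lambda}{\cdot,Q'}\rVert_{TV} \le \min\{2\lambda,\ \lambda A/2\}\,\lVert Q - Q'\rVert_\infty$, and one checks elementarily that $\min\{2, A/2\} \le S\sqrt{A}$ for all integers $S,A \ge 1$ (use $A/2 \le \sqrt{A}$ when $A \le 4$, and $2 \le \sqrt{A} \le S\sqrt{A}$ when $A \ge 4$), which gives exactly the claimed estimate.

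There is no genuine obstacle: the argument is just the mean value theorem together with the elementary fact that $\diag(p) - pp^\top$ has dimension-free operator norm (and dimension-free row $\ell_1$-norms), which is precisely what keeps the Lipschitz constant from degrading with $A$. In the sequel this lemma is applied to \eqref{eq:e_k_pi} to replace $\lVert \softmax{\lambda}{\cdot,Q^k} - \softmax{\lambda}{\cdot,Q^{*}_{\mu^k}}\rVert_{TV}$ by $\lambda S\sqrt{A}\,\epsilon^k_Q$, so that the hypothesis $\epsilon^k_Q \le \epsilon_Q/\lambda$ turns this contribution into an $\Os(\epsilon_Q)$ term in Theorem \ref{thm:conv_bound}.
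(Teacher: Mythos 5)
Your proof is correct, and it takes a genuinely different route from the paper's. The paper invokes Proposition 4 of \cite{gao2017properties} for the $\ell_2$-Lipschitz property of the softmax map ($\lVert \pi^\lambda_Q(\cdot\mid s)-\pi^\lambda_{Q'}(\cdot\mid s)\rVert_2 \le \lambda\lVert Q(s,\cdot)-Q'(s,\cdot)\rVert_2$), pays a factor $\sqrt{A}$ to pass from $\lVert\cdot\rVert_2$ to $\lVert\cdot\rVert_\infty$ on the $Q$ side, and a factor $S$ from summing over states, which is where $\lambda S\sqrt{A}$ comes from; notably, in doing so it reads the TV norm as $\max_{a}\sum_{s}$, whereas the definition given after Assumption \ref{asm:contrct} is $\max_{s}\sum_{a}$ --- the one you use. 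Your argument is self-contained (explicit Jacobian $\lambda(\diag(p)-pp^\top)$ plus the fundamental theorem of calculus along the segment from $Q$ to $Q'$), and the row-wise $\ell_1$ bound $2\lambda p_a(1-p_a)\le \lambda/2$ together with the $\sum_a p_a=1$ trick yields the per-state bound $\min\{2\lambda,\lambda A/2\}\lVert Q-Q'\rVert_\infty$, which is dimension-free and strictly sharper than $\lambda S\sqrt{A}$; your elementary check that $\min\{2,A/2\}\le S\sqrt{A}$ then recovers the stated constant, and the same per-state bound also dominates the $\max_a\sum_s$ reading after summing over $s$, so the lemma holds under either convention. What each approach buys: the paper's is shorter if one accepts the external citation; yours removes the dependence on \cite{gao2017properties}, resolves the TV-norm ambiguity, and would propagate a dimension-free constant into \eqref{eq:e_k_pi_2}, so that the hypothesis $\epsilon^k_Q\le\epsilon_Q/\lambda$ in Theorem \ref{thm:conv_bound} yields an $\Os(\epsilon_Q)$ term with no hidden $S\sqrt{A}$ factor. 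One cosmetic remark: the word ``equivalently'' linking the spectral-norm observation to the row $\ell_1$ bound is loose (the two are not equivalent), but only the row $\ell_1$ computation is used, and that computation is correct.
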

	\begin{proof}
	    The Lipschitzness of softmax can be obtained using Proposition 4 in \cite{gao2017properties}. Let us denote the policy under $\softmax{\lambda}{\cdot,Q}$ as $\pi^\lambda_Q$ such that $\pi^\lambda_Q(a | s) = \frac{\exp(\lambda Q(s,a))}{\sum_{a' \in \As} \exp(\lambda Q(s,a'))}$. Now
	    \begin{align}
	        \lVert \softmax{\lambda}{\cdot,Q} - \softmax{\lambda}{\cdot,Q'} \rVert_{TV} & = \lVert \pi^\lambda_Q - \pi^\lambda_{Q'} \rVert_{TV}, \nonumber \\
	        & = \max_{a \in \As} \sum_{s \in \Ss} \big\lvert \pi^\lambda_Q(a | s) - \pi^\lambda_{Q'}(a | s) \big\rvert \label{eq:Lip_soft_inter}
	    \end{align}
	    From Proposition 4 in \cite{gao2017properties}, we know that for any $s \in \Ss$
	    \begin{align}
	        \lVert \pi^\lambda_Q(\cdot | s) - \pi^\lambda_{Q'}(\cdot | s) \rVert_2 = \sqrt{\sum_{a \in \As} \big( \pi^\lambda_Q(a | s) - \pi^\lambda_{Q'}(a | s)  \big)^2} & \leq \lambda \lVert Q(s,\cdot) - Q(s,\cdot) \rVert_2, \nonumber\\
	        & = \lambda \sqrt{\sum_{a \in \As} \big( Q(s,a) - Q'(s,a) \big)^2}, \nonumber \\
	        & \leq \lambda \sqrt{A} \lVert Q(s,\cdot) - Q'(s,\cdot) \rVert_\infty, \nonumber \\
	        & \leq \lambda \sqrt{A} \lVert Q - Q' \rVert_\infty. \label{eq:Lip_soft_inter_2}
	    \end{align}
	    The second inequality is due to the equivalence between $2$ and $\infty$ vector norms. This equivalence also gives us
	    \begin{align}
	         \lVert \pi^\lambda_Q(\cdot | s) - \pi^\lambda_{Q'}(\cdot | s) \rVert_2 = \sqrt{\sum_{a \in \As} \big( \pi^\lambda_Q(a | s) - \pi^\lambda_{Q'}(a | s)  \big)^2} \geq \max_{a \in \As} \big\lvert \pi^\lambda_Q(a | s) - \pi^\lambda_{Q'}(a | s) \big\rvert \label{eq:Lip_soft_inter_3}
	    \end{align}
	    Recalling \eqref{eq:Lip_soft_inter},
	    \begin{align*}
	        \lVert \softmax{\lambda}{\cdot,Q} - \softmax{\lambda}{\cdot,Q'} \rVert_{TV} & = \max_{a \in \As} \sum_{s \in \Ss} \big\lvert \pi^\lambda_Q(a | s) - \pi^\lambda_{Q'}(a | s) \big\rvert \\
	        & \leq \sum_{s \in \Ss} \max_{a \in \As} \big\lvert \pi^\lambda_Q(a | s) - \pi^\lambda_{Q'}(a | s) \big\rvert, \\
	        & \leq \lambda S \sqrt{A} \lVert Q - Q' \rVert_\infty
	    \end{align*}
	    where the last inequality is obtained using \eqref{eq:Lip_soft_inter_2} and \eqref{eq:Lip_soft_inter_3}. %The second inequality can be proved using Lemma 7 in \cite{guo2019learning}. Let us denote the policy under $\argmax(Q)$ as $\pi_Q$ where the probability is spread evenly among all maximizing actions $a \in \As$ for any given state. Recalling Lemma 7 from \cite{guo2019learning} for any $s \in \Ss$,
	    % \begin{align}
	    %     \lVert \pi^\lambda_Q(\cdot | s) - \pi_Q(\cdot | s) \rVert_2 \leq 2 A \exp(-\lambda \bar{\Delta}) \implies \max_{a \in \As} \lvert \pi^\lambda_Q(a | s) - \pi_Q(a | s) \rvert \leq 2 A \exp(-\lambda \bar{\Delta}) \label{eq:soft_arg_diff}
	    % \end{align}
	    % due to the equivalence between $2$ and $\infty$ norms.
	    % \begin{align*}
	    %     \lVert \softmax{\lambda^{k+1}}{\cdot,Q} - \argmax(Q) \rVert_{TV} & = \lVert \pi^\lambda_Q - \pi_Q \rVert_{TV}, \\
	    %     & = \max_{s \in \Ss} \sum_{a \in \As} \big\lvert \pi^\lambda_Q(a | s) - \pi_Q(a | s) \big\rvert, \\
	    %     & \leq \sum_{a \in \As} \max_{s \in \Ss} \big\lvert \pi^\lambda_Q(a | s) - \pi_Q(a | s) \big\rvert, \\
	    %     & \leq 2 SA \exp(-\lambda \bar{\Delta})
	    % \end{align*}
	    % where the last inequality is due to \eqref{eq:soft_arg_diff}, thus concluding the proof.
	\end{proof}
	Now we can further simplify \eqref{eq:e_k_pi} as:
	\begin{align}
		e^{k+1}_\pi & \leq (1-c^{k+1}_\pi) e^k_\pi + \lVert \Delta^k_\pi \rVert_{TV} + c^{k+1}_\pi \lambda S \sqrt{A} \lVert Q^k - Q^*_{\mu^k} \rVert_\infty  %+ 2c^{k+1}_\pi SA \exp(-\lambda^{k+1} \bar{\Delta}) 
  + d_1 \lVert \mu^{k+1} - \mu^k \rVert_1 + c^{k+1}_\pi \epsilon/2, \nonumber \\
		& \leq (1-c^{k+1}_\pi) e^k_\pi + \lVert \Delta^k_\pi \rVert_{TV} + c^{k+1}_\pi \lambda S \sqrt{A} \epsilon^k_Q %+ 2c^{k+1}_\pi SA \exp(-\lambda^{k+1} \bar{\Delta}) 
  + c^{k+1}_\mu d_1 (2  + \lVert \Delta^k_\mu \rVert_1) + \lVert \Delta^k_\mu \rVert_1 + c^{k+1}_\pi \epsilon/2. \label{eq:e_k_pi_2}
	\end{align}
	The first inequality is due to Lemma \ref{lem:softmax_Lip} and the second inequality is due to \eqref{eq:sngl_loop_upd_RL} and the fact that $\lVert \mu\rVert_1 \leq 1$ for any $\mu \in \Ps(\Ss)$. %{\color{blue}[Not clear how Lemmas 10, 11 lead to the inequalities, expand lemmas 10 and 11.]}
% 	\begin{lemma}[\cite{guo2019learning}] \label{lem:arg_soft_diff}
% 		The distance between softmax and $\argmax$ is bounded by,
% 		\begin{align*}
% 			\lVert \softmax{\lambda}{x} - \argmax(x)\rVert_2 \leq 2 A \exp(-\lambda \Delta(x))
% 		\end{align*}
% 		where $\Delta$ is the action gap $\Delta(x) := \max_{i} x_i - \max_{j \in [A] \setminus \{\max_i x_i \}} x_j$.
% 	\end{lemma} 
	The norms of the drift terms are bounded by
	\begin{align}
		\lVert \Delta^k_\pi \rVert_{TV} \leq c^k_\pi \sum_{t=2}^{T-1} t^{-\zeta} \leq  c^k_\pi \frac{2^{1-\zeta}}{\zeta-1}, \hspace{0.5cm} \lVert \Delta^k_\mu \rVert_1 \leq  c^k_\mu \sum_{t=2}^{T-1} t^{-\zeta} \leq c^k_\mu \frac{2^{1-\zeta}}{\zeta-1}
	\end{align} 
% 	Using similar reasoning we can deduce $\lVert \Delta^k_\mu \rVert \leq \bar{\mu} c^k_\mu \frac{2^{1-\zeta}}{\zeta-1}$. 
	Rearranging the inequality \eqref{eq:e_k_pi_2},
	\begin{align*}
		& e^k_\pi \leq  \frac{1}{c^{k+1}_\pi} (e^k_\pi - e^{k+1}_\pi) + \frac{2^{1-\zeta}}{\zeta-1} +  \lambda S \sqrt{A}\epsilon^k_Q %+ 2 SA \exp(-\lambda^{k+1} \bar{\Delta})  \\
		% & \hspace{6.8cm} 
  + \frac{c^{k+1}_\mu}{c^{k+1}_\pi} d_1 \bigg(2 + \frac{2^{1-\zeta}}{\zeta-1} \bigg) +  \frac{c^{k}_\mu}{c^{k+1}_\pi} \frac{2^{1-\zeta}}{\zeta-1} + \frac{\epsilon}{2}, \\
		& \leq  \frac{e^k_\pi - e^{k+1}_\pi}{c^{k+1}_\pi}  +  10 \cdot 2^{1-\zeta} +  \lambda S \sqrt{A} \epsilon^k_Q %+ 2 SA \exp(-\lambda^{k+1} \bar{\Delta}) 
  + 12\frac{c^{k+1}_\mu}{c^{k+1}_\pi} d_1  +  10\frac{c^{k}_\mu}{c^{k+1}_\pi}  + \frac{\epsilon}{2},
	\end{align*}
	for $\zeta \geq 1.1$. Now taking the average over $k = 1,\ldots,K-1$, we get
	\begin{align} \label{eq:error_decomp_avg}
		& \frac{1}{K} \sum_{k=1}^{K-1} e^k_\pi \nonumber \\
		& \leq \frac{1}{K} \sum_{k=1}^{K-1} \bigg( \frac{e^k_\pi - e^{k+1}_\pi}{c^{k+1}_\pi}  +  \lambda S \sqrt{A} \epsilon^k_Q %+ 2 SA \exp(-\lambda^{k+1} \bar{\Delta}) 
  + 12\frac{c^{k+1}_\mu}{c^{k+1}_\pi} d_1 + 10 \frac{c^{k}_\mu}{c^{k+1}_\pi} \bigg) + 10 \cdot 2^{1-\zeta}  + \frac{\epsilon}{2}, \nonumber \\
		& \leq \frac{1}{K} \sum_{k=2}^{K-1}  \Big(\frac{1}{c^{k+1}_\pi} - \frac{1}{c^{k}_\pi} \Big) e^{k+1}_\pi +  \frac{1}{K} \sum_{k=1}^{K-1} \bigg( \lambda S \sqrt{A} \epsilon^k_Q %+ 2 SA \exp(-\lambda^{k+1} \bar{\Delta}) 
  \nonumber \\
		& \hspace{4.4cm}+ (12 d_1 + 20)  \frac{c^{k+1}_\mu}{c^{k+1}_\pi}  \bigg)  + \frac{1}{c^2_\pi K} e^1_\pi - \frac{1}{c^{K+1}_\pi K} e^K_\pi + 10 \cdot 2^{1-\zeta}  + \frac{\epsilon}{2}, \nonumber \\
		& \leq \frac{2}{K} \sum_{k=2}^{K-1} \Big(\frac{1}{c^{k+1}_\pi} - \frac{1}{c^{k}_\pi} \Big) +  \frac{1}{K}\sum_{k=1}^{K-1} \bigg( \lambda S \sqrt{A} \epsilon^k_Q %+ 2 SA \exp(-\lambda^{k+1} \bar{\Delta}) 
  \nonumber \\
		& \hspace{6.6cm}  + (24 d_1 + 40) \frac{c_\mu}{c_\pi} k^{\theta-\gamma} \bigg) + \frac{2}{c^2_\pi K}  + 10 \cdot 2^{1-\zeta} + \frac{\epsilon}{2}, \nonumber \\
		& \leq \frac{2}{K c^K_\pi}   + S \sqrt{A} \lambda \epsilon^k_Q %+ 2 SA K^{-\bar{\Delta}}  
  +\frac{ (24 d_1 + 40)c_\mu}{(1+\theta - \gamma)c_\pi} K^{\theta-\gamma}  + \frac{2}{c^2_\pi K}  + 10 \cdot 2^{1-\zeta}  + \frac{\epsilon}{2},
	\end{align}
	where the second to last inequality is due to the fact that $e^k_\pi \leq 2$. %and the last inequaliy is due to the fact that $\lambda^k$ increases logarithmically with $k$. 
 Since $\epsilon^k_Q \leq \epsilon_Q / \lambda$, where $\epsilon_Q > 0$, then
	\begin{align} \label{eq:final_error_decomp_pi}
		\frac{1}{K} \sum_{k=1}^{K-1} e^k_\pi & \leq \frac{2}{K c^K_\pi}   + S \sqrt{A}\epsilon_Q %+ 2 SA K^{-\bar{\Delta}}  
  +\frac{ (24 d_1 + 40)\bar{\mu}c_\mu}{(1+\theta - \gamma)c_\pi} K^{\theta-\gamma}  + \frac{2}{c^2_\pi K} + 10 \cdot 2^{1-\zeta} + \frac{\epsilon}{2}, \nonumber \\
		& \leq \Os(K^{\theta-1})   + \Os(\epsilon_Q) %+ \Os(K^{-\bar{\Delta}})  
  + \Os (K^{\theta-\gamma})  + \Os(K^{-1})  + \Os(2^{1-\zeta}) + \Os(\epsilon)
	\end{align}
	where %$\bar{\Delta}$ is the uniform action gap and 
 $K$ is the total number of episodes.
	
	Now we analyze the mean-field approximation error evolution $e^k_\mu := \lVert \mu^k - \mu^* \rVert_1$. \azedit{Let us define $\hat \mu^{k+1} := (1-c^{k+1}_{\mu}) (\mu^{k} + \Delta^k_\mu) + c^{k+1}_{\mu} (\hat{P}^{k})^\top (\mu^{k} + \Delta^k_\mu)$. Then,}
	\azedit{
	\begin{align*}
		e^{k+1}_\mu & = \lVert \mu^{k+1} - \mu^* \rVert_1 =  \lVert \PP_{S(\epsilon^\net)} [\hat\mu^{k+1},1] - \Gamma_2(\Gamma^\lambda_1(\mu^*),\mu^*)\rVert_1 \\
		& \leq \lVert \PP_{S(\epsilon^\net)}[ \hat\mu^{k+1},1] - \hat\mu^{k+1} \rVert_1 + \lVert \hat\mu^{k+1} - \Gamma_2(\Gamma^\lambda_1(\mu^*),\mu^*) \rVert_1, \\
		& \leq (1-c^{k+1}_\mu) \lVert \mu^k - \mu^* \rVert_1 + (1-c^{k+1}_\mu) \lVert \Delta^k_\mu \rVert_1 + c^{k+1}_\mu \big[  \lVert (\hat P^k)^\top (\mu^k + \Delta^k_\mu) - \Gamma_2(\pi^k,\mu^k) \rVert_1 \\
		& \hspace{6.8cm} + \lVert \Gamma_2(\pi^k,\mu^k) - \Gamma_2(\Gamma^\lambda_1(\mu^*),\mu^*) \rVert_1 \big] + \epsilon^\net , \\
		& \leq (1-c^{k+1}_\mu) e^k_\mu + \lVert \Delta^k_\mu \rVert_1 + c^{k+1}_\mu \big[  \lVert (\hat P^k)^\top (\mu^k + \Delta^k_\mu) - \Gamma_2(\pi^k,\mu^k) \rVert_1 \\
		& \hspace{1cm} + \lVert \Gamma_2(\pi^k,\mu^k) - \Gamma_2(\Gamma^\lambda_1(\mu^k),\mu^k) \rVert_1 + \lVert \Gamma_2(\Gamma^\lambda_1(\mu^k),\mu^k) - \Gamma_2(\Gamma^\lambda_1(\mu^*),\mu^*) \rVert_1\big] + \epsilon^\net,\\
		& \leq (1-c^{k+1}_\mu) e^k_\mu + \lVert \Delta^k_\mu \rVert_1 + c^{k+1}_\mu \big[  \lVert (\hat P^k)^\top (\mu^k + \Delta^k_\mu) - P^\top_{\pi^k,\mu^k} \mu^k \rVert_1  + d_2 \lVert \pi^k - \Gamma^\lambda_1(\mu^k)\rVert_1 \\
		& \hspace{8.2cm} + (d_1 d_2 + d_3)\lVert \mu^k - \mu^* \rVert_1\big] + \epsilon^\net,
		\end{align*}
		\begin{align*}
		e^{k+1}_\mu & \leq (1-c^{k+1}_\mu \bar{d}) e^k_\mu + (1 + c^k_\mu)\lVert \Delta^k_\mu \rVert_1 + c^{k+1}_\mu \lVert (\hat P^k)^\top - P^\top_{\pi^k,\mu^k}\rVert_1  + c^{k+1}_\mu d_2 e^k_\pi + \epsilon^\net, \\
		& \leq (1-c^{k+1}_\mu \bar{d}) e^k_\mu + (1 + c^k_\mu)\lVert \Delta^k_\mu \rVert_1 + c^{k+1}_\mu \sqrt{S} \lVert \hat P^k - P_{\pi^k,\mu^k}\rVert_F  + c^{k+1}_\mu d_2 e^k_\pi + \epsilon^\net, \\
		& \leq (1-c^{k+1}_\mu \bar{d}) e^k_\mu + 11 c^k_\mu 2^{1-\zeta} + c^{k+1}_\mu \sqrt{S} \epsilon^k_P  + c^{k+1}_\mu d_2 e^k_\pi + \epsilon^\net
	\end{align*}}
\noindent where the second to last inequality is due to the equivalence between induced $1$ norm and the Frobenius norm. Rearranging the above inequality,
	\azedit{
	\begin{align*}
		e^k_\mu & \leq  \frac{1}{c^{k+1}_\mu \bar{d}}(e^k_\mu - e^{k+1}_\mu) + 11 \frac{c^k_\mu}{c^{k+1}_\mu \bar{d}} 2^{1-\zeta} +  \frac{\sqrt{S} \epsilon^k_P}{\bar{d}}  +  \frac{d_2 e^k_\pi}{\bar{d}} + \frac{\epsilon^\net}{c^{k+1}_\mu \bar{d}}, \\
		& \leq  \frac{1}{c^{k+1}_\mu \bar{d}}(e^k_\mu - e^{k+1}_\mu) + 22 \frac{2^{1-\zeta}}{\bar{d}} +  \frac{\sqrt{S} \epsilon^k_P}{\bar{d}}  +  \frac{d_2 e^k_\pi}{\bar{d}} + \frac{\epsilon^\net}{c^{k+1}_\mu \bar{d}}
	\end{align*}}
	Taking average over $k = 1,\ldots,K-1$, we get
	\begin{align*}
		& \frac{1}{K} \sum_{k=1}^{K-1} e^k_\mu \leq \frac{1}{K} \sum_{k=1}^{K-1} \bigg[ \frac{1}{c^{k+1}_\mu \bar{d}}(e^k_\mu - e^{k+1}_\mu) +  \frac{\sqrt{S} \epsilon^k_P}{\bar{d}} + \frac{d_2 e^k_\pi}{\bar{d}} + \frac{\epsilon^\net}{c^{k+1}_\mu \bar{d}} \bigg] + 11 \frac{2^{1-\zeta}}{\bar{d}}, \\
		& \leq  \frac{\bar{e}_\mu}{c^{K}_\mu \bar{d} K} + 11 \frac{2^{1-\zeta}}{\bar{d}} + \frac{\sqrt{S}  \epsilon_P}{\bar{d}}  +  \frac{1}{K} \sum_{k=1}^{K-1} \bigg[ \frac{d_2 e^k_\pi}{\bar{d}} + \frac{\epsilon^\net}{c^{k+1}_\mu \bar{d}} \bigg], \\
		& \leq  \frac{\bar{e}_\mu}{c^{K}_\mu \bar{d} K} + 11 \frac{2^{1-\zeta}}{\bar{d}} + \frac{\sqrt{S}  \epsilon_P}{\bar{d}} + \frac{1}{K} \sum_{k=2}^{K} \frac{\epsilon^\net k^\gamma}{c_\mu \bar{d}} +  \frac{1}{K} \sum_{k=1}^{K-1} \frac{d_2 e^k_\pi}{\bar{d}} , \\
		& \leq \Os(K^{\gamma-1}) + \Os(2^{1-\zeta}) + \Os(\epsilon_P) + \Os(K^{\theta-1})   + \Os(\epsilon_Q) %+ \Os(K^{-\bar{\Delta}})  
  + \Os (K^{\theta-\gamma})  + \Os(K^{-1}) + \Os(\epsilon)
	\end{align*}
\noindent where the second inequality is obtained using steps similar to \eqref{eq:error_decomp_avg} and the fact that $\epsilon^k_P \leq \epsilon_P$. The last inequality is obtained using \eqref{eq:final_error_decomp_pi} \azedit{and the fact that $\epsilon^\net \leq c_\mu \bar{d} \epsilon / K^\gamma$}. The proof is thus concluded.
\end{proof}

\subsection{Proof of Corollary \ref{cor:final_bound}}
\begin{proof}%[Proof of Corollary \ref{cor:final_bound}] 
This is a corollary to Theorem \ref{thm:conv_bound}:
	\begin{align*}
		& \bigg\lVert \frac{1}{K} \sum_{k=1}^{K-1} \pi^k - \pi^* \bigg\rVert + \bigg\lVert \frac{1}{K} \sum_{k=1}^{K-1} \mu^k - \mu^* \bigg\rVert \leq \frac{1}{K} \sum_{k=1}^{K-1} \lVert  \pi^k - \pi^* \rVert + \frac{1}{K} \sum_{k=1}^{K-1} \lVert  \mu^k - \mu^* \rVert, \\
		& \leq \frac{1}{K} \sum_{k=1}^{K-1} \lVert  \pi^k - \Gamma^\lambda_1(\mu^k) \rVert + \frac{1}{K} \sum_{k=1}^{K-1} \lVert  \Gamma^\lambda_1(\mu^k) - \pi^* \rVert + \frac{1}{K} \sum_{k=1}^{K-1} \lVert  \mu^k - \mu^* \rVert, \\
		& \leq \frac{1}{K} \sum_{k=1}^{K-1} \lVert  \pi^k - \Gamma^\lambda_1(\mu^k) \rVert  + \frac{d_1 + 1}{K} \sum_{k=1}^{K-1} \lVert  \mu^k - \mu^* \rVert, \\
		& = \Os(K^{\gamma-1}) + \Os(2^{1-\zeta}) + \Os(\epsilon_P) + \Os(K^{\theta-1})   + \Os(\epsilon_Q) + \Os(\epsilon) %+ \Os(K^{-\bar{\Delta}})  
  + \Os (K^{\theta-\gamma})  + \Os(K^{-1}) .
	\end{align*}
	where the last inequality follows from Theorem \ref{thm:conv_bound}.
\end{proof}

% \section{Experimental details}

\end{document}